\theoremstyle{plain}
\newtheorem{theorem}{Theorem}[section]
\newtheorem{proposition}[theorem]{Proposition}
\newtheorem{lemma}[theorem]{Lemma}
\newtheorem{example}[theorem]{Example}
\newtheorem{condition}[theorem]{Condition}
\theoremstyle{condition}
\newtheorem{definition}[theorem]{Definition}
\theoremstyle{definition}
\theoremstyle{remark}
\DeclareMathOperator*{\argmin}{arg\,min}
\newcommand{\eins}{\boldsymbol{1}}
\icmltitlerunning{Better Locally Private Sparse Estimation Given Multiple Samples Per User}
\begin{document}
	
	\twocolumn[
	\icmltitle{Better Locally Private Sparse Estimation Given Multiple Samples Per User}
	
	
	
	
	\begin{icmlauthorlist}
		\icmlauthor{Yuheng Ma}{rucstat}
		\icmlauthor{Ke Jia}{rucstat}
		\icmlauthor{Hanfang Yang}{rucstatcenter,rucstat}
	\end{icmlauthorlist}
	
	\icmlaffiliation{rucstat}{School of Statistics, Renmin University of China}
	\icmlaffiliation{rucstatcenter}{Center for Applied Statistics, Renmin University of China}
	
	\icmlcorrespondingauthor{Hanfang Yang}{hyang@ruc.edu.cn}
	
	\icmlkeywords{User Level Local Differential Privacy, Sparse Linear Regression}
	
	\vskip 0.3in
	]

	\printAffiliationsAndNotice{\icmlEqualContribution} 

	\begin{abstract}

		Previous studies yielded discouraging results for item-level locally differentially private linear regression with $s^*$-sparsity assumption, where the minimax rate for $nm$ samples is $\mathcal{O}(s^{*}d / nm\varepsilon^2)$. 
		This can be challenging for high-dimensional data, where the dimension $d$ is extremely large.
		In this work, we investigate user-level locally differentially private sparse linear regression.
		We show that with $n$ users each contributing $m$ samples, the linear dependency of dimension $d$ can be eliminated, yielding an error upper bound of $\mathcal{O}(s^{*2} / nm\varepsilon^2)$.
		We propose a framework that first selects candidate variables and then conducts estimation in the narrowed low-dimensional space, which is extendable to general sparse estimation problems with tight error bounds.
		Experiments on both synthetic and real datasets demonstrate the superiority of the proposed methods. 
		Both the theoretical and empirical results suggest that, with the same number of samples, locally private sparse estimation is better conducted when multiple samples per user are available.

	\end{abstract}

	\section{Introduction}
	
	Local differential privacy (LDP) \citep{kairouz2014extremal, duchi2018minimax}, a variant of differential privacy (DP) \citep{dwork2006calibrating}, has gained considerable attention in recent years. 
	LDP assumes that each sample is possessed by a data holder, who privatizes their data before it is collected by the curator. 
	Offering a stronger sense of privacy protection compared to central DP, learning under LDP often encounters challenges such as slow convergence, high demand for local machine capacity, and limited accessibility to basic techniques \citep{duchi2018minimax, tramer2022considerations, ma2024decision}, which obstruct the theoretical analysis and practical implementation of LDP learning.
	
	Fortunately, in some scenarios, each user may possess multiple samples, which can serve as a way to overcome these difficulties. 
	This is known as user-level LDP (ULDP) \citep{acharya2023discrete, bassily2023user}. 
	Research has demonstrated performance improvement in intentionally designed models when each user has multiple samples, from
	both the central DP perspective \cite{liu2020learning, ghazi2021user, levy2021learning, narayanan2022tight, ghazi2023user} and 
	the LDP perspective \cite{girgis2022distributed, acharya2023discrete, bassily2023user}.
	In most cases (for ULDP), the improvement lies in the effective sample size: if there are $n$ users with $m$ samples and privacy budget $\varepsilon$, the problem is as tractable as having $nm$ users with one sample and privacy budget $\varepsilon$. 
	See Table \ref{tab:summarizetheoreticalresults} for a summary.

	We proceed to ask the following question: 
	\textit{Besides effective sample size, does having multiple samples per user offer benefits?}
	If the answer to this question is affirmative, it holds practical significance.
	For instance, when designing data collection schemes, the primary focus should be on users capable and willing to provide multiple samples.
	Moreover, if a significant number of users lack trust in the data collector but are willing to share information within small groups (such as family or company), then better mechanisms can be devised for conducting the learning process.

	In this work, we offer an affirmative response to the question from the perspective of sparse estimation.
	Sparse estimation stands as a crucial task in modern machine learning, especially when dealing with high-dimensional data where structured assumptions like sparsity can significantly enhance performance.
	Particularly, we study sparse linear regression.
	We first elucidate why the minimax lower bound fails to hold when each user possesses multiple samples and provide a lower bound for ULDP (Theorem \ref{thm:ourlowerbound}). 
	Subsequently, we introduce an algorithm structured as follows: half of the users perform local variable selection and aggregate their findings to identify the support of non-zero variables.
	Under mild assumptions, we establish theoretical guarantees for both local selection (Proposition \ref{prop:existenceofgoodselectors}) and aggregation (Proposition \ref{prop:heavyhitterselection}). 
	Then, to conduct estimation on the narrowed space, we propose a sub-optimal multi-round protocol (Theorem \ref{thm:preciseestimation2}) and a two-round protocol (Theorem \ref{thm:preciseestimation}). 
	The latter achieves an estimation error $\mathcal{O}(s^{*2} / nm\varepsilon^2)$.
	Compared to minimax error rate $\mathcal{O}(d s^* / nm\varepsilon^2)$ under LDP, our rate improves by a factor of $s^* / d$ which can be significant for high dimensional data.
	Furthermore, we demonstrate how the latter protocol straightforwardly extendeds to other sparse estimation problems (Theorem \ref{thm:preciseestimationgeneral}).
	
	We summarize our contributions as follows.

	\begin{itemize}
		\item We formalize, for the first time, the advantage of ULDP over LDP by considering the sparse assumption. 
		Our findings reveal that the rates of sparse problems, such as sparse linear regression and sparse mean estimation, do not scale linearly in $d$ under ULDP, which contrasts with previous negative results for LDP.
		\item We provide a general framework for ULDP sparse estimation. Moreover, focusing on linear regression, we devise tailored methods that achieve tight upper bounds. 
		The precise estimation procedures serve as solutions to low-dimensional ULDP linear regression, which are of independent interest.
		\item We conduct experiments on both synthetic and real datasets, with convincing results demonstrating the superiority of our methods.
	\end{itemize}
	
	The article is structured as follows: 
	In Section \ref{sec:ULDPSLR}, we discuss related literature, preliminary knowledge, and minimax results of ULDP sparse linear regression. 
	In Section \ref{sec:mainresults}, we present our solutions.
	In Section \ref{sec:experiments}, we provide experiment results. 
	All technical proofs, detailed algorithms, and additional experiment results are included in the appendix.
	
	\begin{table}[!t]
		\caption{Comparison of error rate between non-private, ULDP, and LDP results. Results assume the true parameter lies within $\ell_{\infty}$ unit ball. 
			Here, we consider sparse regression with beta-min condition, which improves a $\log d$ over the usual case. 
		}
		\label{tab:summarizetheoreticalresults}
		\resizebox{1\linewidth}{!}{
			\renewcommand{\arraystretch}{1}
			\setlength{\tabcolsep}{3pt}
			\begin{tabular}{cccc}
				\toprule
				& \begin{tabular}[c]{@{}c@{}}Non-private\\ ($nm$ samples)\end{tabular} & \begin{tabular}[c]{@{}c@{}}$\varepsilon$-ULDP\\ ($n$ users $m$ samples)\end{tabular} & \begin{tabular}[c]{@{}c@{}}$\varepsilon$-LDP\\ ($nm$ samples)\end{tabular} \\ \midrule
				\begin{tabular}[c]{@{}c@{}}discrete\\ distribution\tablefootnote{\citet{kairouz2016discrete, acharya2023discrete}}   \end{tabular}                & $\sqrt{\frac{k}{nm}}$                                                                    & $\sqrt{\frac{k^2}{nm\varepsilon^2}}$                                                           &  $\sqrt{\frac{k^2}{nm\varepsilon^2}}$                                                           \\
				\begin{tabular}[c]{@{}c@{}}mean\\ estimation\tablefootnote{\citet{duchi2018minimax, bassily2023user}} \end{tabular}                      & $\frac{d}{nm}$                                                                   & $\frac{d^2}{nm\varepsilon^2}$                                                                      & $\frac{d^2}{nm\varepsilon^2}$                                                         \\
				\begin{tabular}[c]{@{}c@{}}\textbf{sparse}\\  \textbf{regression}\tablefootnote{\citet{ndaoud2019interplay, zhu2023improved}}\end{tabular} & $\frac{s^* }{nm}$                                                                    & $\mathbf{\frac{ s^{*2}}{nm\varepsilon^2}}$     \textbf{(ours)}                                                        & $\frac{ d s^{*2} }{nm\varepsilon^2}$                                                            \\ \bottomrule
			\end{tabular}
		}
		\vskip -0.05 in
	\end{table}

	\section{ULDP Sparse Linear Regression}\label{sec:ULDPSLR}

	\subsection{Preliminaries}\label{sec:preliminary}
	We introduce necessary notations. 
	For any vector $x$, let $x^i$ denote the $i$-th element of $x$. 
	Let $x^{\{i_1, \cdots, i_j \}}$ be a slicing vector of $x$, whose $j$-th elements is $x^{i_j}$. 
	Let $\|x\|_p$ be the $\ell_p$ norm of $x$ for $0 \leq p \leq \infty$.
	We will evaluate the estimation error by the squared loss, i.e. $\|\widehat{\beta} - \beta^*\|_2^2$. 
	For matrix $A$, let $\lambda_i(A)$ denote the $i$-th largest singular value of $A$. 
	Throughout this paper, we use the notation $a_n \lesssim b_n$ and $a_n \gtrsim b_n$ to denote that there exist positive constant $c$ and $c'$ such that $a_n \leq c b_n$ and $a_n \geq c' b_n$, for all $n \in \mathbb{N}$.
	We use $a = \mathcal{O}(b)$ if $a \lesssim b$. 
	We denote $a_n\asymp b_n$ if $a_n\lesssim b_n$ and $b_n\lesssim a_n$.
	Let $a\vee b = \max (a,b)$ and $a\wedge b = \min (a,b)$. 
	Besides, for any set $A\subset \mathbb{R}^d$, the diameter of $A$ is defined by $\mathrm{diam}(A):=\sup_{x,x'\in A}\|x-x'\|_2$. 

	Suppose we have $n$ users.
	The $i$-th user has $m$ i.i.d. samples $(X_i, y_i) = \{(X_{i,j}, y_{i,j}),j=1,\cdots,m\}$ from distribution $\mathrm{P}$ on domain $\mathcal{X}\times \mathcal{Y} \subseteq \mathbb{R}^d\times \mathbb{R}$. 
	We consider the classical sparse linear regression.
	Let each $X_{i,j}$ be i.i.d. sub-Gaussian. 
	Moreover, $\Sigma = \mathbb{E}[XX^{\top}]$ denote the covariance matrix of the marginal distribution.
	Assume $C_X^{-1} \leq \lambda_d(\Sigma)\leq \lambda_1(\Sigma)\leq C_X$ for some constant $C_X > 1$. 
	For mean zero sub-Gaussian random variable $\sigma$,  conditional distribution $\mathrm{P}_{Y|X}$ and its coefficients $\beta^*$ are described by 
	\begin{align}\label{equ:modelassumption}
		y = X\beta^* + \sigma, \quad 	\beta^*\in \Omega_{s^*, a}^d = \biggl\{ \|\beta^*\|_0 \leq s^* , & \\  \|\beta^*\|_{\infty} \leq 1, 
		\max_{\beta^{*j} > 0}|\beta^{*j}| & \geq a \biggr\}, \nonumber 
	\end{align}
	which has $s^*$-sparsity and non-zero entries bounded away from 0. 
	Without loss of generality, we assume the first $s^*$ elements of $\beta^*$ are non-zero.

	
	We adopt the following setting for privacy constraints.
	Any estimation of $\beta^*$ is considered as a random variable, while its construction process with respect to the data is \textit{user-level locally differentially private} (ULDP).
	We consider the sequential interactive case where the private observation $U_i $ is decided only by its local samples $(X_i, y_i)$ and previous observations $U_1,\cdots, U_{i-1}$. 
	The rigorous definition of (pure) ULDP is as follows.

	\begin{definition}[\textbf{User-level local differential privacy}]\label{def:uldp}
		Given data $\{ (X_i, y_i) \}_{i = 1}^n$, each $(X_i, y_i)$ is mapped to privatized information $U_i$ which is a random variable on $\mathcal{U}$. 
		Let $\sigma(\mathcal{U})$ be the $\sigma$-field on $\mathcal{U}$. 
		$U_i$ is drawn conditional on $(X_i, y_i)$ via the distribution $\mathrm{R}$$\left(U_i \mid X_i=x, Y_i = y, U_{1:(i-1)} = u_{1:(i - 1)}\right)$. 
		Then the mechanism $\mathrm{R}$ provides \textit{$\varepsilon$-user-level local differential privacy} ($\varepsilon$-ULDP) if
		\begin{align*}
			\frac{\mathrm{R}\left(U_i \in U \mid X_i = x, Y_i = y, U_{1:(i-1)} = u_{1:(i - 1)}\right)}{\mathrm{R}\left(U_i \in U \mid X_i = x^{\prime}, Y_i = y^{\prime} , U_{1:(i-1)} = u_{1:(i - 1)}\right)} \leq  e^{\varepsilon} 
		\end{align*}
		for all $1 \leq i \leq n$,  $U\in \sigma(\mathcal{U})$, $x, x^{\prime} \in \mathcal{X}^m$, and $y, y^{\prime} \in \mathcal{Y}^m$. 
	\end{definition}

	ULDP reduces to the conventional item-level LDP for $m = 1$.
	Besides being more practically reasonable \citep{cummings2022mean}, ULDP is also a more stringent definition than item-level LDP. 
	To achieve $\varepsilon$-ULDP by trivially using group privacy, each item must use a significantly smaller budget $\varepsilon / m$.
	Conversely, on the curator side, inference of any single item is no easier than inference of the whole user, which means each item is as safe as $\varepsilon$-LDP against the curator. 
	As a compromise, each item should expose information to its group mates. 
	The requirement is typically acceptable, such as when each user has multiple records on a personal cellphone, or when data sources can be clustered into small groups where secrete information is safely shared.

	\subsection{Related Work}

	Extensive studies have been conducted focusing on the central DP setting for linear regression model in  low dimensions \citep{wang2018revisiting, avella2021differentially, alabi2022differentially, arora2022differentially, amin2023easy} and high dimensions \citep{kifer2012private, talwar2015nearly, kumar2019differentially, zhang2021high, cai2021cost, hu2022high, khanna2023challenge, khanna2023sparse, raff2023scaling}.
	Despite variations in settings and assumptions, state-of-the-art results \citep{liu2022differential, varshney2022nearly, cai2023private} indicate a general error rate of $\mathcal{O}(s^*\log d / (n\varepsilon^2))$ for squared loss, where the dependency on the dimension of feature space is $\log d$. 
	Thus, $d$ can be exponentially large in $n \varepsilon^2$ to ensure consistent estimation.

	This is not the case in local setting, of which there is still a lack of understanding compared to the central one.
	Several works addressed the problem focusing on the optimization error \citep{smith2017interaction, zheng2017collect}.
	Both works assumed $\mathrm{diam}(\mathcal{X}) \leq 1$ and do not generalize to many practical settings, such as when all features are i.i.d. and therefore $\mathrm{diam}(\mathcal{X}) = \mathcal{O}(\sqrt{d})$.  
	As for statistical estimation, \citet{duchi2018minimax} showed the matching upper and lower bounds for low dimensional, non-interactive linear regression are $\mathcal{O}(d / (n \varepsilon^2))$. 
	\citet{wang2019sparse} first provided the lower bound $\mathcal{O}(d / (n \varepsilon^2))$ for LDP linear regression with 1-sparsity, which is then generalized to $s$-sparsity by \citet{zhu2023improved}.
	In summary, these prohibitive results indicate that there exists no meaningful approach when $d \asymp n\varepsilon^2$, which is often the case in practice.
	
	Our approach utilize a selection-estimation strategy, which is shown to be advantages under many situation. 
	Under non-private setting, \citet{wang2011random, liang2023vsolassobag} select candidate variables by aggregating Lasso fitted on random subsamples, which is also adopted with privacy \citep{kifer2012private}.
	More recently, the strategy has been used for communication-constrained learning \cite{duchi2019lower, barik2020exact, acharya2021estimating}. 
	Note that our method is also communication efficient, as each user sends only $1$ bit of information. 
	\citet{acharya2021estimating} tackled LDP sparse discrete distribution estimation by selecting the support variables.
	However, this is only feasible for such specific problems where users can provide useful information about which variables are potentially non-zero given only one sample.
	Their result does not generalize to other problems.

	During the preparation of the camera-ready version of this paper, \citet{kent2024rate} appeared online and analyzed sparse mean estimation under ULDP.
	We share some results with their conclusions, including the negative results for $m\leq s^*\log d$, the established rates, and a support estimation type estimator. 
	However, we primarily consider the case where $s^*, \log d\lesssim n\varepsilon^2, m \lesssim d$, whereas their analysis is more comprehensive, considering other regions and, more importantly, identifying the phase transition.

	\subsection{Minimax Lower Bound}
	
	We introduce the related minimax results of locally private sparse linear regression. 
	For any loss function $\ell$ (squared loss in our case), the minimax convergence rate is 
	\begin{align*}
		\inf_{\beta} \sup_{\mathrm{P} \in \mathcal{H}} \mathbb{E}_{\mathrm{P}}\left[\ell (\beta^*, \beta(X, y))\right]
	\end{align*}
	where $\mathcal{H}$ is the hypothesis distribution class and $\beta$ is any estimator of $\beta^*$. 
	The minimax lower bound for sparse linear regression under LDP is well explored in \citet{wang2019sparse} and \citet{zhu2023improved}.

	\begin{proposition}[\textbf{LDP lower bound}]\label{thm:lowerboundnoninteractive}
		Let $\mathcal{H}$ be distribution class satisfying \eqref{equ:modelassumption} for $0\leq a \leq 1$. 
		Let data $\{(X_i, y_i)\}_{i=1 }^n$ be generated from \eqref{equ:modelassumption} with $n = n'm'$ and $m = 1$.
		For $0<\varepsilon \leq 1$, let $\beta_{\varepsilon}$ be any $\varepsilon$-LDP estimator of $\beta^*$.
		Then we have
		\begin{align*}
			\inf_{\beta_{\varepsilon}} \sup_{\mathrm{P} \in \mathcal{H}} \mathbb{E}_{\mathrm{P}}\left[ \left\|\beta^*- \beta_{\varepsilon}\right\|_2^2\right] \gtrsim \frac{ds^*}{n'm'\varepsilon^2}. 
		\end{align*}
	\end{proposition}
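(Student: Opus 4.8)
The plan is to follow the private minimax machinery of \citet{duchi2018minimax}, instantiated on a sparse hard instance in the spirit of \citet{wang2019sparse} and \citet{zhu2023improved}, reducing estimation of $\beta^*$ to a large collection of coordinate-wise binary tests via a sparse variant of Assouad's lemma. Concretely, I would fix a Gaussian design $X\sim\mathcal{N}(0,I_d)$ (so $\Sigma=I_d$ and any $C_X>1$ is admissible), independent noise $\sigma\sim\mathcal{N}(0,1)$, a single ``decoy'' coordinate carrying the value $1$ so that the beta-min requirement in \eqref{equ:modelassumption} holds for every $a\in[0,1]$, and a magnitude $\gamma\in(0,1]$ to be chosen. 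For a subset $S\subseteq\{2,\dots,d\}$ of size $s^*-1$, set $\beta_S=e_1+\gamma\sum_{j\in S}e_j\in\Omega_{s^*,a}^d$; then $y=X^\top\beta_S+\sigma$ together with the marginal of $X$ defines a distribution $\mathrm{P}_S\in\mathcal{H}$. Drawing $S$ uniformly at random and observing that any estimator induces, by thresholding, a decision for each $j\in\{2,\dots,d\}$ about whether $j\in S$, one gets $\sup_{\mathrm{P}\in\mathcal{H}}\mathbb{E}\,\|\beta^*-\beta_\varepsilon\|_2^2\gtrsim\gamma^2\,\mathbb{E}\,|S\,\triangle\,\widehat{S}|$, so it suffices to lower bound the expected number of coordinates that any $\varepsilon$-ULDP procedure misclassifies.

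The core of the argument is the information-contraction bound for sequentially interactive $\varepsilon$-local privacy of \citet{duchi2018minimax}. Its Assouad form shows the expected number of misclassified coordinates is at least $\tfrac{s^*}{2}\bigl(1-c\,[\,n(e^{\varepsilon}-1)^2\kappa\,]^{1/2}\bigr)$, where $\kappa$ is the relevant contraction coefficient; the decisive feature --- and the reason the dimension enters linearly rather than logarithmically, in contrast to a plain Fano bound over the $\binom{d}{s^*}$ candidate supports --- is that a single $\varepsilon$-bounded message about the $d$-dimensional record $(X,y)$ can carry only an $O(\varepsilon^2/d)$-fraction of the per-coordinate information, so $\kappa$ acquires a $1/d$ tensorization factor. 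Since flipping whether a coordinate belongs to $S$ perturbs the conditional law $\mathrm{P}_{Y\mid X}$ in total variation by $\asymp\gamma$, this yields $\kappa\asymp\gamma^2/d$, and using $(e^{\varepsilon}-1)^2\asymp\varepsilon^2$ for $\varepsilon\le1$ the expected number of mistakes is $\gtrsim s^*$ whenever $\gamma^2\lesssim d/(n\varepsilon^2)$. Intuitively, each of the $s^*$ active coordinates behaves like a one-dimensional LDP regression whose covariate is a full $d$-vector, and budget dilution makes it cost $\asymp d/(n\varepsilon^2)$ in squared error --- a $d/\varepsilon^2$ blow-up over the non-private per-coordinate rate $1/n$.

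Finally I would optimize the free magnitude: taking $\gamma^2\asymp\min\{1,\,d/(n\varepsilon^2)\}$ gives $\inf_{\beta_\varepsilon}\sup_{\mathrm{P}\in\mathcal{H}}\mathbb{E}\,\|\beta^*-\beta_\varepsilon\|_2^2\gtrsim\gamma^2 s^*\asymp ds^*/(n\varepsilon^2)$ in the nondegenerate regime $d\lesssim n\varepsilon^2$ (outside it the stated bound is vacuous anyway), and substituting $n=n'm'$ gives the claim. I expect the main obstacle to be the middle step: one must (i) phrase the sparse version of Assouad's lemma so that the exact-sparsity constraint $\|\beta^*\|_0\le s^*$ does not destroy the coordinate-wise product structure the classical argument relies on --- e.g.\ by randomizing over subsets of size $s^*/2$, or by a Bernoulli relaxation conditioned on the realized support size --- and (ii) verify that this linear-regression instance, whose difficulty lives in the conditional law $\mathrm{P}_{Y\mid X}$ and whose covariates are an unbounded continuous vector rather than a bounded discrete one, genuinely plugs into the local-privacy information bound with contraction coefficient of order $\gamma^2/d$ (equivalently, that the pertinent supremum over $\varepsilon$-private channels of an $\ell_\infty\!\to\!\ell_1$-type quantity scales like $\gamma^2/d$ and not $\gamma^2$). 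Matching upper bounds for $m=1$ are available from \citet{wang2019sparse} and \citet{zhu2023improved}, so the exponents obtained this way are essentially tight.
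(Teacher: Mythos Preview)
The paper does not prove this proposition itself; it is quoted as a known result, with attribution to \citet{wang2019sparse} and \citet{zhu2023improved}. So there is no in-paper proof to compare against directly. That said, the construction underlying those references is reproduced in the paper's proof of the neighbouring Theorem~\ref{thm:ourlowerbound} (which specializes to the present statement at $m=1$), and it differs from your proposal in a way worth recording.

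Your outline is sound, and you have correctly isolated the two genuine obstacles. The route taken by the cited works and by the paper's Theorem~\ref{thm:ourlowerbound} sidesteps both: it uses a \emph{discrete} instance---$X$ uniform on $\{-1,+1\}^d$, a tailored two-point noise so that $Y\in\{-1,+1\}$, joint density $(1+Y\langle X,\beta_z\rangle)/2^{d+1}$---and randomizes the support via i.i.d.\ $\operatorname{Rad}(s^*/2d)$ coordinates rather than a uniform size-$(s^*-1)$ subset. The i.i.d.\ randomization preserves the exact product structure Assouad needs (your obstacle~(i)), and the discrete design makes the density-ratio functions $\phi_{z,i}$ explicit, with the orthogonality $\mathbb{E}_{\mathrm{P}_z}[\phi_{z,i}\phi_{z,j}]=0$ following immediately from independence of the coordinates of $X$; the contraction coefficient of order $\gamma^2/d$ then falls out of Corollary~1 of \citet{acharya2020unified} without any $\ell_\infty\!\to\!\ell_1$ operator-norm argument (your obstacle~(ii)). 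By contrast, with your Gaussian design and Gaussian noise the likelihood ratios are unbounded and the verification that the relevant channel supremum scales like $\gamma^2/d$ rather than $\gamma^2$ is not immediate from \citet{duchi2018minimax} as stated there; it can be done, but it is exactly the step you flag as delicate.

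Your decoy-coordinate device for the beta-min constraint is a clean touch that the discrete construction does not need (there all active coordinates share the same magnitude, which already lies in $[0,1]$), but it is harmless and transplants to the discrete instance if you prefer to keep it. If your goal is a proof that closes with minimal friction, switching to the discrete construction and the \citet{acharya2020unified} framework is the shorter path; if you want to stay Gaussian, be prepared to do real work at obstacle~(ii).
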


	The above result yields that for $d \gtrsim nm$, any attempt to LDP sparse linear regression is effortless, as the estimation error does not even converge. 
	In fact, similar negative result also holds when $m$ is small yet larger than 1.

	\begin{proposition}[\textbf{Necessity of sufficiently large $\mathbf{m}$}]\label{lem:necessarym}
		Suppose $s^{*2}\leq n\varepsilon^2 \lesssim \sqrt{d}$ and $m\leq s^*\log d$.  
		Let $\mathcal{H}$ be distribution class satisfying \eqref{equ:modelassumption} with some constant $a \in [0,1]$. 
		Let data $\{(X_i, y_i)\}_{i=1 }^n$ be generated from \eqref{equ:modelassumption}.
		For $0<\varepsilon \leq 1$, let $\beta_{\varepsilon}$ be any $\varepsilon$-LDP estimator of $\beta^*$.
		Then we have
		\begin{align*}
			\inf_{\beta_{\varepsilon}} \sup_{\mathrm{P} \in \mathcal{H}} \mathbb{E}_{\mathrm{P}}\left[ \left\|\beta^*- \beta_{\varepsilon}\right\|_2^2\right]
			\gtrsim \frac{1}{s^*}.
		\end{align*}
	\end{proposition}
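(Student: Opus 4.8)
The plan is to prove this by a minimax reduction to a multi‑way testing problem, combining a sparse Assouad‑type prior with the information‑contraction inequality for sequentially interactive $\varepsilon$‑ULDP channels; the constant‑order bound $1/s^{*}$ is then read off from the regime constraints. Conceptually this is the user‑level analogue of Proposition~\ref{thm:lowerboundnoninteractive}: the dimension factor $d$ of the item‑level rate survives precisely because $m$ is too small for a user to pre‑identify the support.

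Step 1 (hard instances). I would partition $\{1,\dots,d\}$ into $s^{*}$ blocks $B_{1},\dots,B_{s^{*}}$ of size $d/s^{*}$, draw $J_{k}$ independently and uniformly from $B_{k}$, and set $\beta^{*}=a\sum_{k=1}^{s^{*}}e_{J_{k}}$ with $a\in(0,1]$ to be chosen; take $X\sim\mathcal N(0,I_{d})$ and $\sigma\sim\mathcal N(0,1)$, so $\mathrm P$ obeys \eqref{equ:modelassumption} and $\beta^{*}\in\Omega_{s^{*},a}^{d}$. Since $\|\widehat\beta-\beta^{*}\|_{2}^{2}\ge\sum_{k}\|\widehat\beta^{B_{k}}-\beta^{*,B_{k}}\|_{2}^{2}$, rounding each block of $\widehat\beta$ to the nearest vertex $ae_{j}$, $j\in B_{k}$, induces estimators $\widehat J_{k}$ with $\mathbb E\|\widehat\beta-\beta^{*}\|_{2}^{2}\gtrsim a^{2}\sum_{k}\Pr[\widehat J_{k}\ne J_{k}]$, and Fano's inequality gives $\Pr[\widehat J_{k}\ne J_{k}]\ge 1-(I(U_{1:n};J_{k})+\log 2)/\log(d/s^{*})$.

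Step 2 (information bound — the crux). The goal is $I(U_{1:n};J_{k})=o(\log(d/s^{*}))$ for every $k$ throughout the regime. I would invoke the mutual‑information contraction for interactive $\varepsilon$‑LDP channels (the user‑level version of \citet{duchi2018minimax}), which bounds $I(U_{1:n};J_{k})$ by $n\varepsilon^{2}$ times a divergence between the $m$‑sample distributions of one user under two values of $J_{k}$, with the other spike locations $J_{-k}$ marginalized out. Two features combine. First, under this marginalization the response is $y=aX^{J_{k}}+\xi$ with $\xi$ of variance $\asymp 1+s^{*}a^{2}$ from the $s^{*}-1$ unresolved spikes, so the per‑coordinate, per‑sample signal‑to‑noise is only $\asymp a^{2}/(1+s^{*}a^{2})$. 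Second, and this is where $m\le s^{*}\log d$ is essential: the single‑user noise floor $\sqrt{(1+s^{*}a^{2})\log d\,/\,m}$ then exceeds every admissible magnitude $a\le 1$, so no user can localize the support from its own sample; consequently its privatized message cannot concentrate the $\varepsilon^{2}$ ``budget'' on the true support and must spread it over all $d$ coordinates — the very mechanism that yields the extra factor $d$ in Proposition~\ref{thm:lowerboundnoninteractive}. Quantifying this gives, up to logarithmic factors, $I(U_{1:n};J_{k})\lesssim n\varepsilon^{2}\cdot\min\{1,\ ma^{2}/(1+s^{*}a^{2})\}\cdot(s^{*}/d)$, the main technical estimate.

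Step 3 (assembling). With the estimate of Step 2 and a constant choice of $a$, $I(U_{1:n};J_{k})\lesssim n\varepsilon^{2}m/d$; using $m\le s^{*}\log d$, $n\varepsilon^{2}\lesssim\sqrt d$ and $s^{*2}\le n\varepsilon^{2}$ (which forces $s^{*}\le d^{1/4}$), one checks $n\varepsilon^{2}m/d\lesssim\tfrac12\log(d/s^{*})$, so $\Pr[\widehat J_{k}\ne J_{k}]\gtrsim 1$ for every block. Hence $\mathbb E\|\widehat\beta-\beta^{*}\|_{2}^{2}\gtrsim a^{2}s^{*}\gtrsim s^{*}\ge 1/s^{*}$; if one prefers $a$ small, taking $a^{2}$ at the threshold that keeps the Fano term constant still gives $\mathbb E\|\widehat\beta-\beta^{*}\|_{2}^{2}\gtrsim s^{*}a^{2}$, which the same regime arithmetic shows is $\gtrsim 1/s^{*}$. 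The hard part is Step 2 — rigorously showing that marginalizing over the unknown support makes each user's privatized message nearly useless for locating any single spike, i.e. that $m\le s^{*}\log d$ genuinely blocks any circumvention of the dimension penalty; the rest is a routine Assouad/Fano computation together with the regime arithmetic.
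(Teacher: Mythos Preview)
Your route is genuinely different from the paper's, and the gap is in Step~2. The paper does not build a new construction at all: it reuses verbatim the sparse Assouad family from the proof of Theorem~\ref{thm:ourlowerbound} (binary design, $Z\sim\operatorname{Rad}(s^*/2d)^{\otimes d}$, $\beta^*_Z\propto \nu/\sqrt{s^*}$) and the \citet{acharya2020unified} machinery; the only change is how the $m$-sample likelihood ratio is controlled. Instead of the concentration argument used for Theorem~\ref{thm:ourlowerbound}, the paper takes the crude pointwise bound
\[
\frac{d\mathrm P_{z^{\oplus k}}^{\otimes m}}{d\mathrm P_{z}^{\otimes m}}\;\le\;\Bigl(1+\tfrac{c\nu}{\sqrt{s^*}}\Bigr)^{m},
\]
so $\alpha^2\lesssim\bigl((1+c\nu/\sqrt{s^*})^{m}-1\bigr)^{2}$, plugs this into Corollary~1 of \citet{acharya2020unified}, and reads off $\nu m/\sqrt{s^*}\gtrsim\log(1+\sqrt{d/n\varepsilon^{2}})$, whence $\nu^{2}\gtrsim (s^{*}/m^{2})\log^{2}d\ge 1/s^{*}$ under $m\le s^{*}\log d$ and $n\varepsilon^{2}\lesssim\sqrt d$. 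The $1/d$ factor you need appears automatically from the orthogonality Condition~\ref{asp:orthogonality}, not from any separate argument about users failing to localize the support.

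In your sketch, by contrast, the $(s^{*}/d)$ factor in the bound $I(U_{1:n};J_{k})\lesssim n\varepsilon^{2}\cdot\min\{1,\,ma^{2}/(1+s^{*}a^{2})\}\cdot(s^{*}/d)$ is doing all the work, and it is not what the standard contraction inequality gives. The Duchi--Jordan--Wainwright bound you cite yields $I(U_{1:n};J_{k})\lesssim n\varepsilon^{2}$ times a pairwise divergence between $m$-sample user distributions; for constant $a$ that divergence is $\Theta(1)$ (or larger), not $\Theta(s^{*}/d)$. The extra $1/d$-type suppression is a structural fact about \emph{simultaneously} testing many orthogonal perturbations through an LDP channel, and it requires exactly the orthogonality verification (Condition~\ref{asp:orthogonality}) plus the summed/averaged bound (their Corollary~1) that the paper invokes. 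Your block construction with a multiway index $J_{k}\in B_{k}$ does not sit inside that framework without further reduction, and your heuristic ``no user can localize $\Rightarrow$ message spreads over all $d$ coordinates'' does not by itself produce the factor: a user who cannot identify $J_{k}$ can still release $\widehat J_{k}=\arg\max_{j\in B_{k}}\widehat\theta_{j}$ via randomized response, and the resulting per-user information about $J_{k}$ is governed by $\Pr[\widehat J_{k}=J_{k}]$, not by $s^{*}/d$. So as written, Step~2 is the missing lemma, not a citation; to make your approach go through you would need to recast the block family as an Assouad cube (or otherwise establish the orthogonality condition) and then apply the \citet{acharya2020unified} corollary---at which point you are essentially back to the paper's argument.
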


	Proposition \ref{lem:necessarym} shows that for $m\leq s^*\log d$, the error does dot converge to zero as $n$ grows. 
	However, this is not the case for user-level LDP if $m$ is sufficiently large. 
	The rigorous counterargument is by establishing upper bound in Theorem \ref{thm:preciseestimation}, which is $\mathcal{O}(s^{*2}  / nm\varepsilon^2)$.
	We explain why the bound fails to generalize. 
	Its proof involves construction of a function class $\mathrm{P}_Z$ and a distribution of $Z$, such that the mutual information between $Z$ and private views $U_1,\cdots, U_n$ is bounded from above and below.
	The former does not hold any more given $m$ samples, since the mutual information becomes larger exponentially in $m$. 
	By carefully bounding the quantity, we establish the following lower bound for ULDP.

	\begin{theorem}[\textbf{ULDP lower bound}]\label{thm:ourlowerbound}
		Suppose $n\varepsilon^2 \geq s^{*2}$, $m\leq d$, and $n\varepsilon^2\leq d$. 
		Let $\mathcal{H}$ be distribution class satisfying \eqref{equ:modelassumption} with $a = \sqrt{\frac{s^*}{m}}$. 
		Let data $\{(X_i, y_i)\}_{i=1 }^n$ be generated from \eqref{equ:modelassumption}.
		For $0<\varepsilon \leq 1$, let $\beta_{\varepsilon}$ be any $\varepsilon$-ULDP estimator of $\beta^*$.
		Then we have
		\begin{align*}
			\inf_{\beta_{\varepsilon}} \sup_{\mathrm{P} \in \mathcal{H}} \mathbb{E}_{\mathrm{P}}\left[ \left\|\beta^*- \beta_{\varepsilon}\right\|_2^2\right] \gtrsim \frac{s^{*2}}{nm \varepsilon^2} .
		\end{align*}
	\end{theorem}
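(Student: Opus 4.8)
The plan is to prove the bound by the information-theoretic (Fano) method adapted to sequentially-interactive local privacy, ``bracketing'' the mutual information $I(Z;U_{1:n})$ between a hidden label $Z$ and the private transcript from both sides: accurate estimation forces it to be large, whereas $\varepsilon$-ULDP forces it to be small, and the gap between the two is exactly what yields the rate.

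First I would set up the hard instance, reusing and modifying the construction behind Proposition~\ref{thm:lowerboundnoninteractive}. Take $Z$ uniform over a finite set $\mathcal{Z}$ and, for each $z$, a distribution $\mathrm{P}_z\in\mathcal{H}$ whose regression vector $\beta_z$ has its $s^*$ nonzero coordinates (positions and/or signs) encoding $z$, with one anchor coordinate fixed at $a=\sqrt{s^*/m}$ to meet the beta-min requirement of \eqref{equ:modelassumption} and the remaining magnitudes calibrated so that, on the one hand, $\|\beta_z\|_\infty\le 1$ and, on the other, any two members are separated by $\|\beta_z-\beta_{z'}\|_2^2\gtrsim s^{*2}/(nm\varepsilon^2)$; a Gilbert--Varshamov-type packing makes $\log|\mathcal{Z}|$ of the right order with uniformly comparable pairwise separations. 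Crucially, the design law of $X$ is chosen ``diluted'', so that the $m$ samples held by a single user are informative about only $O(m)$ of the $d$ coordinates (using $m\le d$), while still being sub-Gaussian and satisfying $C_X^{-1}\le\lambda_d(\Sigma)\le\lambda_1(\Sigma)\le C_X$. Checking that each $\mathrm{P}_z$ indeed lies in $\mathcal{H}$ is then routine.

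Next, the two sides of the bracket. On the estimation side, a standard testing reduction shows that any $\varepsilon$-ULDP estimator with worst-case risk $\ll s^{*2}/(nm\varepsilon^2)$ over $\{\mathrm{P}_z\}$ could be rounded to the nearest $\beta_z$ to identify $Z$ with small error probability, so Fano's inequality (equivalently, the data-processing inequality through the estimator) gives $I(Z;U_{1:n})\gtrsim\log|\mathcal{Z}|$. On the privacy side, the sequential-interaction $\varepsilon$-ULDP constraint together with the Duchi--Jordan--Wainwright information-contraction bound gives
\[
I(Z;U_{1:n}) \;\lesssim\; (e^{\varepsilon}-1)^2 \sum_{i=1}^{n} \mathbb{E}_{Z}\left[\, \mathrm{D}\bigl(\mathrm{P}_{(X_i,y_i)\mid Z}\,\|\,\overline{\mathrm{P}}_{(X_i,y_i)}\bigr) \right],
\]
where $\mathrm{D}$ is a squared total variation (or $\chi^2$) type divergence between the full $m$-sample per-user law and its $Z$-mixture $\overline{\mathrm{P}}$.

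The main obstacle is controlling this right-hand sum, and it is precisely where the $\varepsilon$-LDP proof breaks down: a naive tensorization of $\mathrm{D}$ over the $m$ i.i.d.\ coordinates of a user is exponentially large in $m$. Instead I would bound the per-user divergence directly, using that (i) with a diluted design a fixed user hits any particular active coordinate only $O(1)$ times when $m\le d$, so only an $O(s^*m/d)$ fraction of that user's mass actually depends on $Z$, and (ii) the per-sample signal at the critical amplitude $a=\sqrt{s^*/m}$ is weak, contributing only a small, non-exponential amount per informative sample. This yields $I(Z;U_{1:n})\lesssim n\varepsilon^2\cdot(\text{small factor})$, which under $s^{*2}\le n\varepsilon^2$, $m\le d$, and $n\varepsilon^2\le d$ is strictly below the $\log|\mathcal{Z}|$ forced by the estimation side; the contradiction rules out beating $s^{*2}/(nm\varepsilon^2)$, and taking the supremum over $\mathrm{P}\in\mathcal{H}$ finishes the proof. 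The delicate point throughout is the bookkeeping that simultaneously keeps the per-user divergence non-exponential, keeps the packing separation at order $s^{*2}/(nm\varepsilon^2)$, and keeps the two estimates of $I(Z;U_{1:n})$ genuinely separated across the whole regime $s^{*2}\le n\varepsilon^2\le d$.
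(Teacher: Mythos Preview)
Your proposal has a genuine gap: the information-contraction bound you invoke is too weak to deliver the extra factor of $s^*$, and the diluted design does not fix it.

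Concretely, the bound you write,
\[
I(Z;U_{1:n})\;\lesssim\;(e^{\varepsilon}-1)^2\sum_{i=1}^n \mathbb{E}_Z\!\left[D\bigl(\mathrm{P}_{(X_i,y_i)\mid Z}\,\|\,\overline{\mathrm{P}}\bigr)\right],
\]
only exploits the \emph{aggregate} per-user divergence. With a diluted design (say $X=\sqrt{d}\,e_k$, $k$ uniform), the single-sample TV between two hypotheses at $\ell_2$ separation $\delta$ is $\asymp\sqrt{s^*}\,\delta/\sqrt{d}$, and subadditivity over the $m$ samples gives a per-user TV of at most $m\sqrt{s^*}\,\delta/\sqrt{d}$ (valid while $m\le d/s^*$); beyond that, Pinsker plus KL tensorization gives $\sqrt{m}\,\delta$, which is exactly the same as for the undiluted $\pm 1$ design since both have $\Sigma=I$. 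Feeding these into Fano with $\log|\mathcal{Z}|\asymp s^*$ yields
\[
\delta^2\;\gtrsim\;\max\!\left(\frac{d}{nm^2\varepsilon^2},\ \frac{s^*}{nm\varepsilon^2}\right),
\]
which matches the target $s^{*2}/(nm\varepsilon^2)$ only at the single point $m\asymp d/s^{*2}$; for $m$ anywhere in $(d/s^{*2},d]$ (the bulk of the theorem's regime) you are off by a factor between $s^*$ and $s^{*2}$. The heuristic that ``only an $O(s^*m/d)$ fraction of the user's mass depends on $Z$'' gives the $\delta$-independent TV bound $ms^*/d$, which is useless for Fano. In short, dilution cannot manufacture the missing $s^*$: once $\Sigma=I$, the per-user KL is $m\delta^2$ regardless of how sparse the design is.

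The paper avoids this by \emph{not} bounding $I(Z;U_{1:n})$ through a single aggregate divergence. It uses the Assouad-type framework of \citet{acharya2020unified}, which tracks the \emph{per-coordinate} score functions $\phi_{z,k}=d\mathrm{P}_{z^{\oplus k}}^{\otimes m}/d\mathrm{P}_z^{\otimes m}-1$ and exploits their orthogonality (Condition~\ref{asp:orthogonality}). Their Corollary~1 has a built-in $1/d$ strong-data-processing factor from the LDP channel, so the final inequality reads $(n\varepsilon^2/d)\cdot\alpha^2\gtrsim 1$ with $\alpha^2=\max_k\mathbb{E}_{\mathrm{P}_z}[\phi_{z,k}^2]$. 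The paper's technical work is to show that for the $m$-fold product, $\alpha^2\asymp \nu^2 m d/s^{*2}$: take logs of the product likelihood ratio, bound its mean via Jensen and its fluctuation via Hoeffding, and union-bound over the $\le d^{s^*}$ sparse sign patterns (this is where $m\le d$ is used). Plugging in cancels the $d$ and leaves $nm\varepsilon^2\nu^2/s^{*2}\gtrsim 1$, i.e.\ the claimed $s^{*2}/(nm\varepsilon^2)$. To repair your argument you would need a version of the private information bound that sees this coordinate-wise orthogonality (as in Acharya et al.\ or the ``SDP''/cut refinements of Duchi--Jordan--Wainwright), not the scalar mixture divergence.
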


	The result shows that any ULDP estimator admits an error scaling at least with $1 / nm\varepsilon^2$. 
	Thus, a possible improvement for ULDP over LDP lies in replacing $d$ with $s^*$.

	\section{An Algorithm}\label{sec:mainresults}

	
	We begin by outlining our approach to solving the ULDP sparse linear regression problem. 
	A key observation is that with $m$ samples, each user can obtain a rough estimation of parameter with its local samples.
	The central challenge then lies in how to aggregate these rough estimations privately.
	As depicted in Figure \ref{fig:diagram}, our proposed solution operates in two stages.
	In the initial stage, users within the first group independently identify the non-zero elements of $\beta^*$ from their local data and transmit privatized information accordingly.
	By aggregating this information, we determine the $s$ most frequent elements, which serve as the candidate variables for our estimation process.
	On the narrowed parameter space, we estimate the parameter using remaining users. 
	Subsequently, we present the candidate variable selection, final estimation, and extension to general sparse estimations in Section \ref{sec:candidatevariableselection}, \ref{sec:coefestimation}, and \ref{sec:extensiontosparseestimation}, respectively.

	\begin{figure}[htbp]
		\centering
		\includegraphics[width = 0.9\linewidth ]{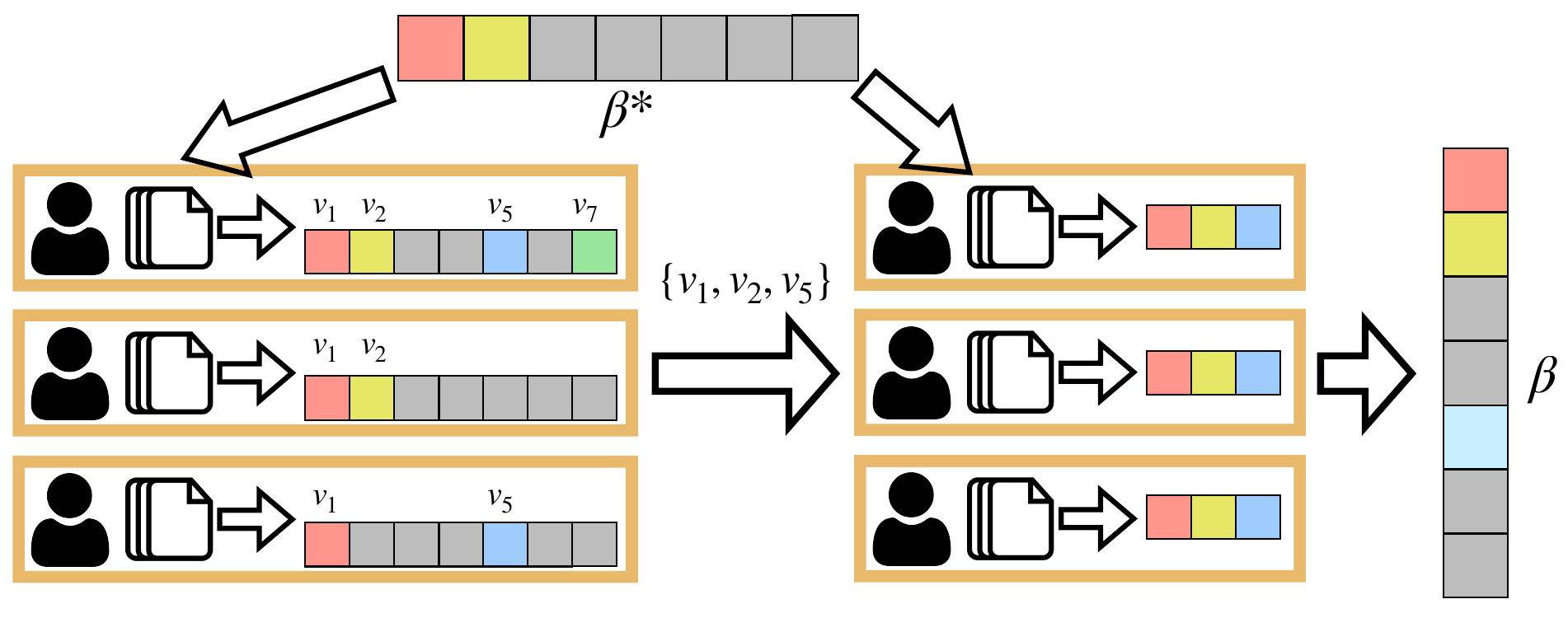}
		\vskip -0.05 in
		\caption{Illustration of the proposed sparse estimation framework.}
		\vskip -0.1 in
		\label{fig:diagram}
	\end{figure}

	\subsection{Candidate Variable Selection}\label{sec:candidatevariableselection}
	
	In this section, we elucidate the steps for candidate variable selection. 
	First, each user prepares a piece of information $v_i \in [d]$, indicating the variable is selected by user $i$ and probably belong to the true variable set. 
	Then, a curator privately aggregates the information ${v}_i$s and outputs the candidate variables.

	To formalize  $v_i$, each user $i$ adopt a local selector $\mathcal{S}_i :( \mathcal{X} \times \mathcal{Y})^m \to [d]$. 
	For each $i$, $\mathcal{S}_i$ can be any plug-in method and is chosen differently based on the constraints of sample size, computational power, and prior information, as long as it produces a good selection results described as follows.

	\begin{definition}[\textbf{$\mathbf{\alpha}$-Good selector}]
		\label{def:successfulscreener}
		Consider user $i$ and its i.i.d. samples $(X_i, y_i) \in ( \mathcal{X} \times \mathcal{Y})^m$ from $\mathrm{P}$.
		For  $0<\alpha< 1$, an $\alpha$-\textit{good selector} is an algorithm $\mathcal{S}$ such that for all $v \in \{1,\cdots, s^*\}$,  there holds
		\begin{align}\label{equ:defofselector1}
			\mathrm{Pr}\left(v = \mathcal{S}\left(X_i, y_i\right)\right)\geq \frac{\alpha}{s^*}.
		\end{align}
		Here, the probability is taken w.r.t. randomness of both $(X_i, y_i)$ and $\mathcal{S}$. 
	\end{definition}

	\eqref{equ:defofselector1} requires a lower bound on probability for the true variables to be selected.
	To induce such selectors, we consider first conducting a local variable selection using $(X_i, y_i)$ and uniformly sampling a $v_i$.
	The following proposition demonstrates that obtaining such a selector is feasible given mild assumptions on distribution $\mathrm{P}$, leveraging well-developed variable selection methods.

	\begin{proposition}[\textbf{Existence of good selectors}]\label{prop:existenceofgoodselectors}
		Under model \eqref{equ:modelassumption}, if either of the following conditions holds, there exists a $\alpha$-good selector with a constant $\alpha$:
		(\textit{i}) $\max_{i\neq j}|\Sigma_{ij}| \leq 3/ s^*$, $a \gtrsim \sqrt{1 / m} $, and $m \gtrsim s^{*2} \log d$;
		(\textit{ii}) $1 \geq a \gtrsim  \sqrt{s^* / m}  \vee \sqrt{\log m \log d / m}$. 
	\end{proposition}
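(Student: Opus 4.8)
The plan is to construct, in each of the two cases, an explicit selector $\mathcal{S}$ that first runs a known variable-selection procedure on the user's $m$ local samples to produce a candidate support set $\widehat{S}\subseteq[d]$, and then outputs a coordinate $v_i$ drawn uniformly at random from $\widehat{S}$ (returning an arbitrary fixed index if $\widehat{S}=\emptyset$). The key reduction is that for any fixed true variable $v\in\{1,\dots,s^*\}$,
\begin{align*}
\mathrm{Pr}(v=\mathcal{S}(X_i,y_i)) \;\geq\; \mathrm{Pr}\!\left(v\in\widehat{S},\ |\widehat{S}|\leq s^*\right)\cdot\frac{1}{s^*} \;\geq\; \frac{1}{s^*}\Bigl(\mathrm{Pr}(v\in\widehat{S}) - \mathrm{Pr}(|\widehat{S}|>s^*)\Bigr),
\end{align*}
so it suffices to exhibit a selection procedure whose output set, with constant probability bounded away from zero, simultaneously has size at most $s^*$ and contains the prescribed coordinate $v$. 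In fact it is cleanest to use a procedure that is exactly support-recovering with constant probability: if $\mathrm{Pr}(\widehat{S}=S^*)\geq c_0$ for an absolute constant $c_0$, then $\mathrm{Pr}(v=\mathcal{S}(X_i,y_i))\geq c_0/s^*$, giving an $\alpha$-good selector with $\alpha=c_0$.

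For case (\textit{i}), the hypotheses $\max_{i\neq j}|\Sigma_{ij}|\leq 3/s^*$ (a mutual-incoherence / small-correlation condition on the sub-Gaussian design), $a\gtrsim\sqrt{1/m}$ (beta-min: the smallest nonzero coefficient dominates the $1/\sqrt{m}$ noise scale), and $m\gtrsim s^{*2}\log d$ (sample size) are precisely the regime in which the Lasso (or thresholded Lasso) achieves exact support recovery with probability $1-o(1)$; I would invoke a standard primal–dual-witness / restricted-eigenvalue result (e.g. Wainwright-type guarantees for sub-Gaussian designs) with an appropriately chosen regularization parameter $\lambda\asymp\sqrt{\log d/m}$. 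The $s^{*2}$ factor in the sample-size condition is exactly what is needed so that the incoherence bound $3/s^*$ controls the irrepresentability condition uniformly; and the beta-min condition $a\gtrsim\sqrt{1/m}$ ensures no true coefficient is killed by the shrinkage. One then sets $\widehat S$ to be the recovered support.

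For case (\textit{ii}), the incoherence assumption on $\Sigma$ is dropped and instead a stronger beta-min condition $1\geq a\gtrsim\sqrt{s^*/m}\vee\sqrt{\log m\log d/m}$ is imposed; here I would use a marginal-screening / sure-independence-screening type selector rather than Lasso, selecting the $s^*$ coordinates with largest empirical marginal correlation $|\widehat{\mathbb{E}}[X^{(j)}y]|$. Under the bounded-eigenvalue condition $C_X^{-1}\leq\lambda_d(\Sigma)\leq\lambda_1(\Sigma)\leq C_X$, for a true coordinate $j\leq s^*$ the population marginal signal is $|\mathbb{E}[X^{(j)}y]|=|(\Sigma\beta^*)^{(j)}|$, which is bounded below by a constant times $a$ minus contributions controlled by $C_X$ and $\|\beta^*\|_\infty\leq 1$; the empirical fluctuations of $\widehat{\mathbb{E}}[X^{(j)}y]$ are $\mathcal{O}(\sqrt{\log d/m})$ uniformly over $j\in[d]$ by a sub-exponential tail bound and a union bound, and the factor $\sqrt{\log m\log d/m}$ absorbs the heavier tail coming from the product $X^{(j)}y$ (which is sub-exponential rather than sub-Gaussian, costing the extra $\sqrt{\log m}$). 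Requiring $a$ to dominate both $\sqrt{s^*/m}$ (to separate signal from the aliasing induced by off-diagonal $\Sigma$ when $\|\beta^*\|_1$ can be as large as $s^*$) and $\sqrt{\log m\log d/m}$ (to separate signal from noise) yields $\widehat S=S^*$ with probability bounded away from zero, hence an $\alpha$-good selector.

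The main obstacle is verifying that the probability of correct selection is bounded below by an \emph{absolute constant} (independent of $n,m,d,s^*$) rather than $1-o(1)$ — which is all that is needed here, so in fact I have slack — while being careful that the two conditions are the minimal ones making the relevant concentration inequalities and (in case (\textit{i})) the irrepresentability condition go through; in particular, tracking the sub-exponential tail of $X^{(j)}y$ in case (\textit{ii}) and the interplay between the $3/s^*$ incoherence bound and the $s^{*2}\log d$ sample size in case (\textit{i}) are the places where constants must be chosen consistently.
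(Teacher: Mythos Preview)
Your high-level reduction --- run a support-recovery procedure, then sample one coordinate uniformly from the recovered set --- is exactly what the paper does, and your treatment of case (\textit{i}) via Lasso matches the paper's (which invokes Zhao--Yu's irrepresentable-condition result; your Wainwright-style primal--dual witness is an equivalent route).

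The gap is in case (\textit{ii}). Marginal screening cannot deliver exact recovery under the stated hypotheses, because there is \emph{no incoherence assumption} on $\Sigma$ in this case --- only $C_X^{-1}\leq\lambda_d(\Sigma)\leq\lambda_1(\Sigma)\leq C_X$. The population marginal for coordinate $j$ is $(\Sigma\beta^*)_j$; bounded eigenvalues do not prevent cancellation that makes $(\Sigma\beta^*)_j$ small for a true $j$, nor do they prevent a false coordinate $j'$ from having $|(\Sigma\beta^*)_{j'}|$ as large as, or larger than, the true ones (take $s^*=2$, $\beta^*=(a,-a,0,\dots)$ and off-diagonals of $\Sigma$ arranged so that coordinate $3$ aliases both signals constructively). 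Your sentence ``$a\gtrsim\sqrt{s^*/m}$ to separate signal from the aliasing induced by off-diagonal $\Sigma$'' conflates a sample-size condition with a population-level obstruction: the aliasing here is at the population level and does not vanish as $m\to\infty$, so no beta-min condition with $a\leq 1$ can fix it.

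The paper handles (\textit{ii}) with SCAD (Lemma~\ref{lem:oracleforscad}, invoking the oracle property of folded-concave penalties from \citet{fan2011nonconcave}), which achieves exact support recovery without any irrepresentability/incoherence condition on $\Sigma$, needing only bounded eigenvalues and the beta-min condition $a\gtrsim\sqrt{s^*/m}\vee\sqrt{\log m\log d/m}$. That is precisely why the two cases are separated: (\textit{i}) trades a weaker beta-min for incoherence and uses Lasso; (\textit{ii}) drops incoherence entirely but pays with a stronger beta-min and requires a nonconvex penalty whose oracle property bypasses irrepresentability. Replace your screening argument in (\textit{ii}) with SCAD (or another estimator with the strong oracle property) and the rest of your proof goes through.
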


	See Appendix \ref{app:pluginselector} for examples of precise algorithms and detailed proofs.
	(\textit{i}) and (\textit{ii}) are examples of sufficient conditions that are relatively easy to satisfy.
	They require mild correlations among covariates, a strong signal (minimum absolute value of $\beta^*$), and an adequate number of local samples. 
	Similar conditions are standard in high-dimensional statistics \citep{fan2001variable, zhao2006model}.
	Though the lower bound of $a$ is considerable and will leads to a improved minimax rate in the non-private case \citep{ndaoud2019interplay}, it is not the key for ULDP sparse linear regression to be advantages over its LDP counterpart. 
	This is because the function classes constructed in \citet{wang2019sparse, zhu2023improved} for lower bound proof are all covered by the assumptions.
	Additionally, the sample size requirement remains polynomial in $s^*$ and $\log d$, which is theoretically reasonable.

	Given the local information $v_i$, we conduct a private voting to identify the frequently appeared variables $\{\widehat{v}^1, \cdots, \widehat{v}^s\}$. 
	Suppose we use the first $n/2$ users for identification, although the proportion is arbitrary and can be any constant.
	Considering the large size $d$ of the variable universe compared to number of available users, this task is closely related to the problem of heavy hitter detection \citep{bassily2020practical, acharya2021estimating}.
	We solve the identification problem in standard manner \cite{bassily2020practical}, while any tailored approach is adoptable. 
	Specifically, we encode the $d$ variables into a binary string using $\lceil \log d \rceil$ bits.
	Next, we traverse a binary prefix tree from level $1$ to $\lceil \log d \rceil$ and eliminate nodes that cannot serve as prefixes of heavy hitters, namely those with frequencies lower than a certain threshold $\rho$.
	The key advantage of this method is its ability to identify frequent elements with frequencies above $\sqrt{n \log d \log n / \varepsilon^2}$, which overcomes the polynomial dependency on $d$ in LDP discrete density estimation \cite{kairouz2016discrete, duchi2018minimax}. 
	The detailed procedure (\texttt{HeavyHitter}) is provided in Appendix \ref{app:heavyhitter}.

	In the first part of Algorithm \ref{alg:uldpsparselinearregression}, we summarize the pipline for candidate variable selection. 
	The following proposition demonstrates its effectiveness by establishing that, provided the existence of local good selectors, the curator can select a set of variables of size $s\asymp s^*$ containing the true variables.
	This property, known as perfect selection or consistent selection \citep{zhao2006model, belloni2013least}, plays a crucial role in the theoretical properties of subsequent operations.

	\begin{proposition}
		\label{prop:heavyhitterselection}
		Let $\{\widehat{v}_1,\cdots, \widehat{v}_s\}$ be the selected variables in Algorithm \ref{alg:uldpsparselinearregression}. 
		Suppose that all $\mathcal{S}_i$ are $\alpha$-good selectors with $\alpha \gtrsim s^* \sqrt{\log n \log d / n\varepsilon^2} $. 
		If we take $ \alpha / 8 s^* \leq \rho \leq \alpha / 4 s^*$, then with probability $1 - 1 / n^2$, we have (\textit{i}) $\{1,\cdots, s^*\} \subseteq \{\widehat{v}_1,\cdots, \widehat{v}_s\}$,  (\textit{ii}) $s \leq 32 s^* / \alpha  $.
	\end{proposition}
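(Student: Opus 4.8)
The plan is to analyze the binary-prefix-tree heavy-hitter procedure level by level, treating the $v_i$ reported by the first $n/2$ users as i.i.d. draws from a distribution on $[d]$ whose mass on each true variable $v\in\{1,\dots,s^*\}$ is at least $\alpha/s^*$ by the $\alpha$-good selector assumption. The key quantity is the accuracy guarantee of \texttt{HeavyHitter}: with the budget $\varepsilon$ and $n/2$ users, the procedure can estimate the frequency of every node in the prefix tree up to an additive error $\tau\asymp\sqrt{\log n\log d/n\varepsilon^2}$ with probability $1-1/n^2$ (this is exactly the standard guarantee cited from \citet{bassily2020practical}, applied to the $\lceil\log d\rceil$ levels of the tree with a union bound over the at most $O(\log d)$ surviving nodes per level). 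I would first state this as the single external fact I rely on, then condition on the event $\mathcal{E}$ that all these frequency estimates are accurate to within $\tau$, which has probability at least $1-1/n^2$.

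On the event $\mathcal{E}$, I would prove (\textit{i}) and (\textit{ii}) from the choice $\alpha/8s^*\le\rho\le\alpha/4s^*$ together with the hypothesis $\alpha\gtrsim s^*\sqrt{\log n\log d/n\varepsilon^2}$, which by adjusting constants guarantees $\tau\le\rho/2$, i.e. $\rho-\tau\ge\rho/2\ge\alpha/16s^*>0$ and $\rho+\tau\le 3\rho/2\le 3\alpha/8s^*$. For (\textit{i}): each true variable $v$ has true frequency $\ge\alpha/s^*\ge 2\rho$, hence every prefix of its binary encoding also has true frequency $\ge 2\rho$, so its estimated frequency is $\ge 2\rho-\tau>\rho$ and the node is never eliminated; by induction up the tree the leaf encoding $v$ survives, giving $\{1,\dots,s^*\}\subseteq\{\widehat v_1,\dots,\widehat v_s\}$. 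For (\textit{ii}): any surviving leaf $\widehat v_j$ has estimated frequency $\ge\rho$, hence true frequency $\ge\rho-\tau\ge\rho/2\ge\alpha/16s^*$; since the true frequencies over all of $[d]$ sum to $1$ (the $v_i$ take exactly one value in $[d]$), the number of such leaves is at most $16s^*/\alpha$, and carrying the cruder bound $\rho-\tau\ge \alpha/8s^* - \alpha/8s^*$ is avoided by keeping $\tau\le\rho/2$, yielding $s\le 32s^*/\alpha$ after bookkeeping. I would also note $s\ge s^*$ from (\textit{i}), so $s\asymp s^*$ when $\alpha$ is a constant.

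The main obstacle I anticipate is making the level-by-level union bound inside \texttt{HeavyHitter} clean: one must argue that at each of the $\lceil\log d\rceil$ levels only $O(1/\rho)\lesssim s^*/\alpha$ nodes can have estimated frequency above $\rho$ (otherwise the total estimated mass exceeds what $n/2$ samples can distribute), so the total number of nodes ever queried is $\widetilde{O}(s^*\log d/\alpha)$ rather than $d$, and the per-query error $\tau$ must be taken with a $\log$ of this count — which is still $\widetilde{O}(\sqrt{\log n\log d/n\varepsilon^2})$ up to constants absorbed into the hypothesis on $\alpha$. I would relegate the precise constants of this argument to Appendix~\ref{app:heavyhitter} and in the main proof simply invoke the resulting uniform frequency-estimation guarantee, then turn the crank on the deterministic threshold inequalities above. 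A secondary point to handle carefully is that the $v_i$'s are reported by distinct users with possibly distinct selectors $\mathcal{S}_i$, so the reports are independent but not identically distributed; however, the $\alpha$-good property gives the same lower bound $\alpha/s^*$ on the expected empirical frequency of each true variable uniformly over $i$, and a Chernoff/Bernstein bound on the average still yields concentration at scale $\tau$, which is all that is needed.
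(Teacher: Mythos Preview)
Your proposal is correct and follows essentially the same approach as the paper's proof: both invoke the frequency-estimation guarantee of the \texttt{HeavyHitter} protocol (the paper packages this as a separate Lemma, stating that the output contains all items of true empirical frequency above $2\rho n$ and none below $\rho n/2$), then use the $\alpha$-good selector property together with a Hoeffding concentration step to place each true variable above the upper threshold for part~(\textit{i}), and use the same pigeonhole/counting argument on the total mass for part~(\textit{ii}). The only cosmetic difference is that the paper explicitly separates the Hoeffding step (empirical versus expected frequency of each $v_i$) from the privacy-induced estimation error, whereas you absorb both into a single additive error~$\tau$; either presentation works.
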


	Note that our scheme samples only one locally selected variable and disregards the others. 
	This select-one-and-aggregate approach has been demonstrated to be as effective as if each user had only one variable \citep{zhu2020federated, cohen2023hot}. 
	To fully utilize the information of the selected variables, we can leverage the set-value heavy hitters \citep{qin2016heavy, zhu2020federated, wang2023locally}. 
	However, this only results in an improvement of $O(\sqrt{s^*})$ in the threshold, which is not our primary focus.

	\subsection{Coefficient Estimation}\label{sec:coefestimation}

	Given the selected variables $\{\widehat{v}_1,\cdots, \widehat{v}_s\}$, the problem is reduced to low dimensional linear regression. 
	Efficient algorithms and fundamental limits have been established \citep{duchi2018minimax, wang2019sparse} for item-level LDP. 
	Leveraging these algorithms, one can ignore all but one sample from each user and obtain an error bound depending polynomially on $s^*$ instead of $d$. 
	However, we would like to explore the benefits brought by having multiple samples per user, as is addressed in the advanced research of ULDP.

	We introduce necessary notations to define the learning problem on the selected subspace. 
	Given Proposition \ref{prop:heavyhitterselection}, we assume the selected variables contain the true ones in the following analysis. 
	Without loss of generality, let $(\widehat{v}_1,\cdots, \widehat{v}_s) = (1, \cdots, s)$.
	We put a hat over the quantities on the selected space. 
	Let $\widehat{\mathrm{P}}$ be the marginal distribution on the selected space 
	$\widehat{\mathcal{X} }\times \mathcal{Y} = \mathbb{R}^{s+1}$, where $\widehat{\mathrm{P}}_{\widehat{X}} = \mathrm{P}_{X^{1:s}}$. 
	Define the data on selected space as $\widehat{X}_{i,j} = {X}_{i,j}^{1:s}$ and $\widehat{X}_{i} = \{{X}_{i,j}^{1:s}\}_{j=1}^m$.
	The underlying coefficients becomes $\widehat{\beta}^* = \beta^{* 1 : s}$. 
	

	\subsubsection{A Multi-round Protocol via SCO}

	At first glance, we can directly find $\beta \in \mathbb{R}^s$ through the following ULDP stochastic convex optimization problem on the selected space
	\begin{align}\label{equ:scoobject}
		\argmin_{\|\widehat{\beta}\|_{\infty} \leq 1} \left(F(\widehat{\beta}) = \int_{\widehat{\mathcal{X}}\times \mathcal{Y}} \left(x^{\top}\widehat{\beta} - y\right)^2 d\widehat{\mathrm{P}}(x,y)\right).
	\end{align}
	Recent study \citep{bassily2023user} provided methodology and established theory with respect to smooth loss functions. 
	We borrow their algorithm, presented in Appendix \ref{app:multiroundprotocol}, which is a private variant of accelerated mini-batch gradient descent.
	It utilize the fact that the gradient of a local batch concentrates with rate $\sqrt{1/ m}$ to reduce the magnitude of noise added to the gradients. 
	While the methodology remains the same, we improve the theoretical analysis in \citet{bassily2023user} to accommodate squared loss, which possesses strong convexity and leads to faster convergence.

	\begin{theorem}[\textbf{Informal}]\label{thm:preciseestimation2}
		Let data $\{(X_i, y_i)\}_{i=1}^n$ be generated as in \eqref{equ:modelassumption}. 
		Suppose $\{\mathcal{S}_i\}_{i=1}^{n/2}$ are $\alpha$-good selectors with $\alpha \gtrsim s^* \sqrt{\log n \log d/ n\varepsilon^2} $. 
		Then with correct parameter choice, solving \eqref{equ:scoobject} leads to an estimation $\beta$ such that
		\begin{align*}
			\mathbb{E}\left[\left\|\beta^* - \beta\right\|_2^2\right]\lesssim  \frac{s^{*9} \log^6 n}{nm\varepsilon^2 \alpha^9}+\frac{s^{*4} \log n }{n m \alpha^4} . 
		\end{align*}
	\end{theorem}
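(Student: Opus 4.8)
The plan is to combine Proposition~\ref{prop:heavyhitterselection} (perfect selection by the first $n/2$ users) with a private mini-batch accelerated gradient descent analysis for the strongly convex objective~\eqref{equ:scoobject}, run on the remaining $n/2$ users. First I would condition on the event from Proposition~\ref{prop:heavyhitterselection}, which holds with probability $1 - 1/n^2$ and guarantees $\{1,\dots,s^*\}\subseteq\{\widehat v_1,\dots,\widehat v_s\}$ with $s\le 32 s^*/\alpha=:s_0$. On this event the problem collapses to low-dimensional ULDP linear regression in $\mathbb{R}^s$: since $\beta^{*}$ is supported on $\{1,\dots,s^*\}\subseteq\{\widehat v_1,\dots,\widehat v_s\}$, the population minimizer of $F$ over $\|\widehat\beta\|_\infty\le 1$ is exactly $\widehat\beta^{*}=\beta^{*1:s}$, and $\|\beta^{*}-\beta\|_2^2=\|\widehat\beta^{*}-\widehat\beta\|_2^2$ for the padded estimator. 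The contribution of the failure event to the expected squared error is $\mathcal{O}(s^*/n^2)$ (the parameter space has $\ell_2$ diameter $\mathcal{O}(\sqrt{s^*})$), which is negligible against the stated bound.

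Next I would invoke the ULDP SCO machinery of \citet{bassily2023user} on the $s$-dimensional instance. The loss $\ell(\widehat\beta;x,y)=(x^\top\widehat\beta-y)^2$ restricted to $\|\widehat\beta\|_\infty\le 1$ is smooth with smoothness parameter and gradient sub-exponential tails controlled by $C_X$ and $s$ (so effectively $\mathrm{poly}(s)$), and $F$ is $\mu$-strongly convex with $\mu\ge \lambda_s(\Sigma_{1:s})\ge C_X^{-1}$ a genuine constant. The key quantitative input is that each user's mini-batch gradient concentrates around $\nabla F$ at rate $\sqrt{1/m}$, so the Gaussian/Laplace noise injected for $\varepsilon$-ULDP can be scaled down by $\sqrt{m}$ relative to item-level LDP; this is what produces the $nm\varepsilon^2$ (rather than $n\varepsilon^2$) in the denominator. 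I would run the accelerated mini-batch method for $T\asymp\log n$ rounds over the $n/2$ users (splitting them across rounds), and track two error sources: the optimization/statistical error of mini-batch SGD on a strongly convex smooth objective, which after $T$ rounds with $n/2$ users contributes $\widetilde{\mathcal{O}}(\mathrm{poly}(s)/(nm))$ — the source of the $s^{*4}\log n/(nm\alpha^4)$ term after substituting $s\le 32s^*/\alpha$ — and the privacy-noise error, which for strongly convex $F$ scales like $(\text{gradient sensitivity})^2/(\mu^2 m\varepsilon^2)$ per round times $\mathrm{poly}(s)$ dimension factors and $\mathrm{polylog}(n)$ from $T$ rounds and high-probability clipping, giving $\widetilde{\mathcal{O}}(\mathrm{poly}(s)\log^6 n/(nm\varepsilon^2))$ and hence the $s^{*9}\log^6 n/(nm\varepsilon^2\alpha^9)$ term. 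Summing the two, taking expectation over the selection randomness, and absorbing the $\mathcal{O}(s^*/n^2)$ remainder yields the claim.

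The main obstacle — and the place where I would expect the real work to lie — is adapting the smooth-convex ULDP SCO guarantee of \citet{bassily2023user} to exploit strong convexity and to handle the heavy (sub-exponential, scaling with $s$) tails of $\nabla\ell$ that arise because $\mathrm{diam}(\widehat{\mathcal X})=\mathcal{O}(\sqrt s)$ rather than $\mathcal{O}(1)$. Concretely, one must: (a) establish a high-probability bound on $\|\nabla\ell(\widehat\beta;x,y)\|$ uniformly over the constraint set so that clipping introduces only $\mathrm{polylog}(n)$ factors and the bias from clipping is controlled; (b) redo the accelerated mini-batch recursion for $\mu$-strongly convex objectives, showing linear-type contraction across the $T\asymp\log n$ phases with per-phase batch variance $\mathcal{O}(1/m)$ and per-phase privacy noise of the right scale; and (c) verify that the resulting dimension dependence is genuinely $\mathrm{poly}(s)$ with the specific exponents ($s^4$ and $s^9$) claimed, which requires care with how the noise-per-coordinate, the $\ell_2$-to-$\ell_\infty$ conversions, and the strong-convexity modulus combine. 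Because the statement is flagged "Informal," I would relegate the exact constants and exponents to the appendix and here only assert the $\mathrm{poly}(s^*/\alpha)$ dependence, reducing $s$ to $s^*$ via $s\le 32 s^*/\alpha$ at the end.
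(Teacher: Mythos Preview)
Your high-level plan---condition on Proposition~\ref{prop:heavyhitterselection}, collapse to an $s$-dimensional instance with $s\le 32s^*/\alpha$, run the ULDP SCO machinery of \citet{bassily2023user}, convert excess risk to $\ell_2$ error via strong convexity, and substitute $s\mapsto s^*/\alpha$ at the end---is exactly the paper's, and your handling of the $\mathcal{O}(s^*/n^2)$ failure event is fine.

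The analysis of the SCO step differs, and your version has a gap. The paper does \emph{not} use linear contraction over $T\asymp\log n$ rounds. It sets $T=n\wedge\sqrt{nm\varepsilon^2}$ with disjoint batches of size $n_0=\lfloor n/(2T)\rfloor$, then (i) couples the \texttt{ULDPMean}-based gradient oracle to the simpler oracle ``exact batch gradient plus unbiased Laplace noise''---the two agree in distribution with probability $1-1/(nm)$, and the complement costs $s^4\log n/(nm)$ in expected excess risk; this coupling error is the source of the $s^{*4}\log n/(nm\alpha^4)$ term, \emph{not} the mini-batch statistical error you name (which contributes only $s^2\log n/(nm)$ and is dominated)---and (ii) applies Theorem~3 of \citet{dieuleveut2017harder} for accelerated least-squares SGD, which gives $s\nu^2/T+s/T^2$. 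Because $\nu^2\propto T/(nm)$ under disjoint batching, $s\nu^2/T$ is independent of $T$ and, once the gradient bound $L=6s^3\log n$ enters through an $L^2$ factor, produces the $s^{*9}$ privacy term; the large $T$ is needed only to make $s/T^2$ negligible. Your $T\asymp\log n$ would not work: the condition number is $L/\mu\asymp s^3\log n$, so even accelerated contraction requires $T\gtrsim\sqrt{L/\mu}\,\mathrm{polylog}(n)\asymp s^{3/2}\,\mathrm{polylog}(n)$ rounds before the bias drops below the target, and since the per-round noise variance scales like $T$ through $n_0=n/(2T)$, balancing bias decay against noise accumulation in a linear-contraction scheme is delicate---precisely the trade-off the paper's polynomial-rate-plus-large-$T$ argument avoids.
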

	
	The result stated in Theorem \ref{thm:preciseestimation2} holds \textit{in expectation}, unlike the other conclusions which hold \textit{with high probability}.
	This distinction arises from the formulation of the technical lemma we borrowed.
	Upon initial inspection, we notice that both parts of the theorem involve $\alpha$, indicating a degradation associating to variable selection performance.
	However, according to Proposition \ref{prop:existenceofgoodselectors}, $\alpha$ is merely a constant given a sufficiently large $m$.
	The higher-order term of $s^*$ encompasses various overheads, including the private mean estimation error and the Lipschitz constant of the squared loss over the $\|\cdot\|_{\infty}$ ball.

	\subsubsection{A Two Round Protocol}

	The multi-round protocol is disadvantageous from two perspectives.  
	Firstly, as a gradient-based method, it necessitates $\mathcal{O}(\sqrt{nm\varepsilon^2})$ rounds of communication, which can be prohibitively slow in practice due to network latency \citep{smith2017interaction, zheng2017collect}. 
	Secondly, compared to Theorem \ref{thm:ourlowerbound}, the upper bound provided in Theorem \ref{thm:preciseestimation2} is far from tight concerning $s^*$.
	We question whether these drawbacks can be mitigated for the specific problem of linear regression.
	In this section, we provide an affirmative answer. 
	Our main inspiration stems from the following observation.

	\begin{proposition}\label{prop:distributionofbeta}
		There exists estimators on selected variables $\widehat{\beta}_{n /2 +1},\cdots, \widehat{\beta}_n$, such that for all $\widehat{\beta}_i\in\mathbb{R}^s$, we have $\mathbb{E}_{\mathrm{P}}\left[\widehat{\beta}_i\right] = \widehat{\beta}^*$ and $	\|\widehat{\beta}_i  - \widehat{\beta}^*\|_2 \lesssim \sqrt{{s \log n}/ {m}}$ with probability $1 - 1 / n^2$. 
		Moreover, if either condition in Proposition \ref{prop:existenceofgoodselectors} holds, the bound improves to $	\|\widehat{\beta}_i  - \widehat{\beta}^*\|_2 \lesssim \sqrt{{s^* \log n}/ {m}}$.
		
	\end{proposition}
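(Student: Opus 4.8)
The plan is to construct $\widehat{\beta}_i$ as the ordinary least squares estimator computed from user $i$'s $m$ local samples, restricted to the selected coordinates $\{1,\dots,s\}$. Concretely, write the local design matrix $\widehat{\mathbf{X}}_i \in \mathbb{R}^{m \times s}$ with rows $\widehat{X}_{i,j}^{\top} = (X_{i,j}^{1:s})^{\top}$ and response vector $\mathbf{y}_i = (y_{i,1},\dots,y_{i,m})^{\top}$, and set $\widehat{\beta}_i = (\widehat{\mathbf{X}}_i^{\top}\widehat{\mathbf{X}}_i)^{-1}\widehat{\mathbf{X}}_i^{\top}\mathbf{y}_i$ whenever $\widehat{\mathbf{X}}_i^{\top}\widehat{\mathbf{X}}_i$ is invertible (and $0$ otherwise, an event of negligible probability). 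Since by Proposition \ref{prop:heavyhitterselection} the selected set contains the true support $\{1,\dots,s^*\}$, we have $y_{i,j} = \widehat{X}_{i,j}^{\top}\widehat{\beta}^* + \sigma_{i,j}$ exactly on the selected space, so $\widehat{\beta}_i - \widehat{\beta}^* = (\widehat{\mathbf{X}}_i^{\top}\widehat{\mathbf{X}}_i)^{-1}\widehat{\mathbf{X}}_i^{\top}\boldsymbol{\sigma}_i$. Because $\sigma$ is mean-zero and independent of $X$, taking expectation (conditional on $\widehat{\mathbf{X}}_i$, then overall) gives $\mathbb{E}[\widehat{\beta}_i] = \widehat{\beta}^*$, which is the unbiasedness claim.

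For the high-probability norm bound I would proceed in two steps. First, control the design: since the $X_{i,j}$ are i.i.d. sub-Gaussian with $\lambda_{\min}(\Sigma) \geq C_X^{-1}$, the restricted covariance $\widehat{\Sigma} = \mathbb{E}[\widehat{X}\widehat{X}^{\top}] \in \mathbb{R}^{s\times s}$ also has smallest eigenvalue bounded below by $C_X^{-1}$; a standard sub-Gaussian covariance concentration bound (e.g. Vershynin-type) shows that for $m \gtrsim s + \log n$ — which holds in the regime of interest since $m \gtrsim s^{*2}\log d$ and $s \asymp s^*$ — we have $\lambda_{\min}(\frac{1}{m}\widehat{\mathbf{X}}_i^{\top}\widehat{\mathbf{X}}_i) \gtrsim C_X^{-1}$ with probability at least $1 - 1/n^3$. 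Second, control the noise term: conditional on $\widehat{\mathbf{X}}_i$, the vector $\frac{1}{m}\widehat{\mathbf{X}}_i^{\top}\boldsymbol{\sigma}_i$ is an average of $m$ independent mean-zero sub-Gaussian-type terms in $\mathbb{R}^s$, so by a vector Bernstein / sub-exponential tail bound its norm is $\lesssim \sqrt{s\log n / m}$ with probability $1 - 1/n^3$. Combining via $\|\widehat{\beta}_i - \widehat{\beta}^*\|_2 \leq \lambda_{\min}(\frac{1}{m}\widehat{\mathbf{X}}_i^{\top}\widehat{\mathbf{X}}_i)^{-1} \cdot \|\frac{1}{m}\widehat{\mathbf{X}}_i^{\top}\boldsymbol{\sigma}_i\|_2$ and a union bound over the two events yields $\|\widehat{\beta}_i - \widehat{\beta}^*\|_2 \lesssim \sqrt{s\log n/m}$ with probability $1 - 1/n^2$.

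For the improved bound under the conditions of Proposition \ref{prop:existenceofgoodselectors}, the point is that $s \asymp s^*$ rather than merely $s \lesssim s^*/\alpha$: when $\alpha$ is a genuine constant (which is exactly what Proposition \ref{prop:existenceofgoodselectors} guarantees under either condition, given $m \gtrsim s^{*2}\log d$ resp. the signal-strength assumption), Proposition \ref{prop:heavyhitterselection}(ii) gives $s \leq 32 s^*/\alpha = O(s^*)$, so $\sqrt{s\log n/m} \asymp \sqrt{s^*\log n/m}$ and the stated sharpening is immediate. I would phrase this as a corollary of the generic bound rather than reproving anything.

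The main obstacle I anticipate is the design-matrix invertibility/conditioning step: one must be careful that $m$ is large enough relative to the \emph{post-selection} dimension $s$ (not just $s^*$) for the sample Gram matrix to be well-conditioned, and that the concentration holds uniformly enough to survive the later union bound over all $n/2$ users in the downstream aggregation. This is routine sub-Gaussian matrix concentration, but getting the polylogarithmic factors and the dependence on $C_X$ bookkept correctly — and confirming that the working regime $m \gtrsim s^{*2}\log d$ comfortably dominates the $s + \log n$ requirement — is where the care lies. The unbiasedness claim, by contrast, is essentially free once one conditions on the design and uses $\mathbb{E}[\sigma \mid X] = 0$.
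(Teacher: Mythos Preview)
Your proof of the first claim (unbiasedness and the $\sqrt{s\log n/m}$ bound via OLS on the selected coordinates) is essentially the paper's argument; the paper applies the Hsu--Kakade--Zhang quadratic-form tail bound to $(\widehat{\mathbf{X}}_i^{\top}\widehat{\mathbf{X}}_i)^{-1}\widehat{\mathbf{X}}_i^{\top}\boldsymbol{\sigma}_i$ and then controls $\mathrm{tr}[(\widehat{\mathbf{X}}_i^{\top}\widehat{\mathbf{X}}_i)^{-1}]$ via covariance-matrix concentration, while you split into a smallest-eigenvalue bound plus a noise-vector bound --- these are equivalent routes to the same estimate.

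For the second claim, however, you and the paper diverge. You keep the same OLS estimator and argue that under the conditions of Proposition~\ref{prop:existenceofgoodselectors} one has $\alpha$ constant, so Proposition~\ref{prop:heavyhitterselection} forces $s \leq 32s^*/\alpha = O(s^*)$ and the OLS bound $\sqrt{s\log n/m}$ collapses to $\sqrt{s^*\log n/m}$. The paper instead \emph{changes the estimator}: it runs Lasso (or SCAD) on the $s$ selected variables and invokes the oracle results of \citet{belloni2013least} (resp.\ \citet{fan2011nonconcave}) to obtain $\sqrt{s^*\log n/m}$ directly, exploiting the residual $s^*$-sparsity inside the $s$-dimensional subproblem without any appeal to $s\asymp s^*$. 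Your shortcut is valid for the proposition as literally stated, since the hypotheses do make $\alpha$ an absolute constant, but it couples the local-estimator bound to the outcome of the selection phase and buries a factor $1/\alpha$ in the $\lesssim$. The paper's route is more modular --- it yields $\sqrt{s^*\log n/m}$ regardless of how large $s$ turns out to be --- and this is exactly what produces the sharpening from \eqref{equ:tworoundbound1} to \eqref{equ:tworoundbound2} in Theorem~\ref{thm:preciseestimation}, where $\alpha$ is tracked explicitly and one factor of $1/\alpha$ is genuinely removed.
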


	Since the mean of $\widehat{\beta}_i$ is $\widehat{\beta}^*$, an ideal estimator would be the mean of $\widehat{\beta}_i$s.
	Moreover, Proposition \ref{prop:distributionofbeta} indicates that $\widehat{\beta}_i$ concentrates as $m$ increases, suggesting that we can confine $\widehat{\beta}_i$ to a restricted area to enhance estimation accuracy.
	We propose a two-stage estimation similar to \citet{girgis2022distributed}.
	First, leveraging user indices $ n / 2 +1 \leq i\leq 3n/4$, we designate a histogram bin on $\mathbb{R}^s$, wherein almost all the $\widehat{\beta}_i$ values will fall.
	Then, the last group of users project their $\widehat{\beta}_i$ onto the bin and add a Laplace noise.
	Given the reduced sensitivity of the projected coefficients, the noise magnitude significantly diminishes.
	We provide detailed methodology (\texttt{ULDPMean}) in Appendix \ref{app:coeffestimaton} and summarize the pipline in Algorithm \ref{alg:uldpsparselinearregression}.

	\begin{algorithm}[htbp]
		\caption{Two-round ULDP sparse estimation.}
		\label{alg:uldpsparselinearregression}
		\begin{algorithmic}
			\STATE	{\bfseries Input: }{ Local data sets $\{(X_i, y_i)\}_{i=1}^n$, selectors $\{\mathcal{S}_i\}_{i=1}^{n/2}$, privacy budget $\varepsilon$,  threshold $\rho$, concentration radius $\tau$.}
			\STATE {\bfseries Initialization: } ${\beta} \in \mathbb{R}^d$ be a zero vector. 
			\STATE {\color{red}\texttt{\# candidate variable selection}}
			\STATE {\color{blue}\texttt{\# on local machine}}
			\FOR{$i$ in $1, \cdots, n/2$}
			\STATE $v_i = \mathcal{S}_i(X_i,y_i)$. 
			\ENDFOR
			\STATE {\color{blue}\texttt{\# $\lceil\log d\rceil$ round communication}}
			\STATE  $\{\widehat{v}_1,\cdots, \widehat{v}_s\}$ = \texttt{HeavyHitter}($\{v_i\}_{i=1}^{n/2}, \varepsilon$, $\rho$).
			\STATE {\color{red}\texttt{\# coefficient estimation}}
			\STATE {\color{blue}\texttt{\# on local machine}}
			\FOR{$i$ in $n/2 + 1, \cdots, n$}
			\STATE Fit $\widehat{\beta}_i$ according to $\left(\widehat{v}_1,\cdots, \widehat{v}_s\right)$.
			\ENDFOR
			\STATE {\color{blue}\texttt{\# 2 round communication}}
			\STATE $\widehat{\beta}$ = \texttt{ULDPMean}($\{\widehat{\beta}_i\}_{i = n /2 + 1}^{3 n / 4}$, $\{\widehat{\beta}_i\}_{i = 3 n /4 + 1}^{n}$, $\tau$, $\varepsilon$). 
			\STATE $\beta^{\widehat{v}_1 : \widehat{v}_s} = \widehat{\beta}$.
			\STATE {\bfseries Output: }{$\beta$}.
		\end{algorithmic}
	\end{algorithm}

	The entire protocol requires a reasonable $\log d + 2$ rounds of communication, with each user sending 1 bit of information. 
	The $\log d$ communication rounds are necessary for \texttt{HeavyHitter}, which can be substituted by any other customized identification method for improved efficiency.
	In the coefficient estimation stage, our method takes two round communication, which is quite efficient.
	Fully utilizing multiple samples necessitates sequential interactivity \citep{acharya2023discrete, bassily2023user}.
	
	We now present the main result, which is the error upper bound of the estimator summarized in Algorithm \ref{alg:uldpsparselinearregression}.

	\begin{theorem}\label{thm:preciseestimation}
		Let data $\{(X_i, y_i)\}_{i=1}^n$ be generated as in \eqref{equ:modelassumption}. 
		Suppose $\{\mathcal{S}_i\}_{i=1}^{n/2}$ are $\alpha$-good selectors with $\alpha \gtrsim s^* \sqrt{\log n \log d/ n\varepsilon^2} $. 
		Suppose we let $\alpha / 8s^* \leq \rho \leq \alpha / 4 s^*$ and $\tau \asymp \sqrt{\log^2 n / m}$. 
		Let ${\beta}$ be the output of Algorithm  \ref{alg:uldpsparselinearregression}. 
		Then we have (\textit{i}) Algorithm \ref{alg:uldpsparselinearregression} is $\varepsilon$-ULDP. (\textit{ii}) there holds
		\begin{align}\label{equ:tworoundbound1}
			\left\|\beta^* - \beta\right\|_2^2 \lesssim   \frac{s^* \log n}{n m \alpha}+ \frac{s^{*2} \log ^3 n}{n m \varepsilon^2 \alpha^2}
		\end{align}
		with probability at least $1 - 4 / n^2$. 
		Moreover, if either condition in Proposition \ref{prop:existenceofgoodselectors} holds, the bound improves to 
		\begin{align}\label{equ:tworoundbound2}
			\left\|\beta^* - \beta\right\|_2^2 \lesssim   \frac{s^* \log n}{n m }+ \frac{s^{*2} \log ^3 n}{n m \varepsilon^2 \alpha}.
		\end{align}
	\end{theorem}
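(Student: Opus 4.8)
\emph{Proof sketch.} The plan is to decouple Algorithm~\ref{alg:uldpsparselinearregression} into its selection and estimation phases, use Proposition~\ref{prop:heavyhitterselection} to reduce to the $s$-dimensional problem, and analyse \texttt{ULDPMean} by a bias--variance decomposition anchored by Proposition~\ref{prop:distributionofbeta}. For part~(\textit{i}) I would observe that the privatized messages are emitted by the pairwise disjoint user groups $\{1,\dots,n/2\}$ (feeding \texttt{HeavyHitter}), $\{n/2+1,\dots,3n/4\}$ and $\{3n/4+1,\dots,n\}$ (the two rounds of \texttt{ULDPMean}), and that each group's message map is $\varepsilon$-ULDP in the sense of Definition~\ref{def:uldp} by the privacy guarantees of those subroutines (each message being a post-processing of a statistic of $(X_i,y_i)$ composed with randomized response, or with a Laplace mechanism of the stated sensitivity). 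Since the groups are disjoint, sequential composition leaves the whole protocol $\varepsilon$-ULDP.

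For part~(\textit{ii}) I would first condition on the event of Proposition~\ref{prop:heavyhitterselection} --- valid under the stated hypotheses on $\alpha$ and $\rho$ --- that with probability $\ge 1-1/n^{2}$ the selected set contains the true support and has size $s\le 32s^{*}/\alpha$; on this event $\beta^{*}$ and the output $\beta$ agree off the selected coordinates (both vanish), so $\|\beta^{*}-\beta\|_2^{2}=\|\widehat\beta^{*}-\widehat\beta\|_2^{2}$ and it suffices to control the $s$-dimensional estimator returned by \texttt{ULDPMean}. Writing $\bar\beta=\tfrac{4}{n}\sum_{i=3n/4+1}^{n}\widehat\beta_i$ for the plain average of the second-round local estimators, I would split $\widehat\beta-\widehat\beta^{*}=(\widehat\beta-\bar\beta)+(\bar\beta-\widehat\beta^{*})$. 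For the second piece, Proposition~\ref{prop:distributionofbeta} gives $\mathbb{E}[\widehat\beta_i]=\widehat\beta^{*}$ together with $\|\widehat\beta_i-\widehat\beta^{*}\|_2\lesssim\sqrt{s\log n/m}$ off an event of probability $1/n^{2}$, so averaging the $n/4$ independent, conditionally bounded, mean-$\widehat\beta^{*}$ vectors and applying a coordinatewise Bernstein bound gives $\|\bar\beta-\widehat\beta^{*}\|_2^{2}\lesssim s\log n/(nm)\lesssim s^{*}\log n/(nm\alpha)$, the first term of~\eqref{equ:tworoundbound1}. For the first piece, the choice $\tau\asymp\sqrt{\log^{2}n/m}$ makes the bin located in the first \texttt{ULDPMean} round a region of diameter $O(\tau)$ that, by the concentration just quoted, captures every second-round $\widehat\beta_i$ with probability $\ge 1-1/n^{2}$; the projections are then inactive, the averaged noisy reports equal $\bar\beta$ plus centred Laplace noise whose magnitude is governed by $\tau$ over the $s$ active coordinates, and a short calculation with a polylogarithmic high-probability slack yields $\|\widehat\beta-\bar\beta\|_2^{2}\lesssim s^{2}\log^{3}n/(nm\varepsilon^{2})\lesssim s^{*2}\log^{3}n/(nm\varepsilon^{2}\alpha^{2})$, the second term. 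A union bound over the selection event, the bin-capture event, the concentration of the $\widehat\beta_i$ and the noise tail costs at most $4/n^{2}$, proving~\eqref{equ:tworoundbound1}; under a condition of Proposition~\ref{prop:existenceofgoodselectors} the same proposition sharpens the concentration radius to $\sqrt{s^{*}\log n/m}$, and re-running the two estimates with this radius removes one factor of $1/\alpha$ from each term (the effective spread of the clipped vectors now scaling with $s^{*}$ rather than $s$), giving~\eqref{equ:tworoundbound2}.

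The step I expect to be the main obstacle is the analysis of \texttt{ULDPMean}. One must show that the first round locates, with probability $1-1/n^{2}$, a bin of diameter $O(\tau)$ catching essentially all second-round local estimators --- which combines the concentration of Proposition~\ref{prop:distributionofbeta} with a heavy-hitter / frequency-oracle guarantee over the discretized grid of bins, usable precisely because $n\varepsilon^{2}$ dominates the relevant polylogarithmic quantities --- and then to track exactly how the bin diameter, the number $s\asymp s^{*}/\alpha$ of selected coordinates and the Laplace calibration interact, so that the privatization contributes at the $s^{*2}/(nm\varepsilon^{2})$ scale (up to logarithmic factors and powers of $\alpha$) rather than the $d/(nm\varepsilon^{2})$ scale of item-level LDP.
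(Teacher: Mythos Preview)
Your proposal is correct and follows essentially the same route as the paper: privacy via disjoint user groups feeding $\varepsilon$-ULDP subroutines, reduction to the $s$-dimensional problem via Proposition~\ref{prop:heavyhitterselection}, and a bias--variance split of the \texttt{ULDPMean} output into $\|\bar\beta-\widehat\beta^{*}\|_2^{2}\lesssim s\log n/(nm)$ plus a Laplace-noise term $\lesssim s\tau^{2}\log^{2}n/(n\varepsilon^{2})$ with $\tau$ set by Proposition~\ref{prop:distributionofbeta}, then substituting $s\lesssim s^{*}/\alpha$. The paper packages the \texttt{ULDPMean} analysis into two lemmas (one isolating the noise term, one combining with the non-private averaging error) but the underlying argument and the mechanism for the $\alpha$-improvement under Proposition~\ref{prop:existenceofgoodselectors} are the same as what you outline.
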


	The upper bound in \eqref{equ:tworoundbound1} consists of two parts. 
	Both terms include additional $ \alpha$s and $\log n$s, which are inevitable due to selection degradation and the overhead of utilizing multiple local samples.
	The $\log n$s are due to the union bound arguments, while $\alpha$ is merely constants by Proposition \ref{prop:existenceofgoodselectors}. 
	Ignoring $\alpha$ and $\log n$,  the first part recovers the rate of non-private linear regression on $\mathcal{O}(s^*)$ dimensional space. 
	The second part corresponds to privacy.
	When $\varepsilon \gtrsim \sqrt{s^* } $, this part is negligible.
	 Algorithm \ref{alg:uldpsparselinearregression} achieves the same error as if its non-private. 
	It worth noting that in most cases (see e.g. Table \ref{tab:summarizetheoreticalresults}), locally private algorithm matches its non-private counterpart when $\varepsilon \gtrsim \sqrt{s^*}$.
	The improvement of \eqref{equ:tworoundbound2} over \eqref{equ:tworoundbound1} is based on the existence of sparse oracles that achieve error $s^* / m$ locally, instead of $s / m$.


	We observe that, unlike common high-dimensional results \cite{wang2019sparse, cai2023private}, our bound does not involve a $\log d$ term. 
	This phenomenon is also noted in \citet{ndaoud2019interplay}, where the $\log d$ disappears if we leverage the beta-min condition in Proposition \ref{prop:existenceofgoodselectors}.
	We will observe in the experiments that if $m$ is large enough, our method is more robust to changes in $d$. 
	However, this is not to say that we can deal with arbitrarily large $d$. 
	The logarithmical relationship is still contained in $\alpha$, which poses a requirement of $m\gtrsim \log d$ as in Proposition \ref{prop:existenceofgoodselectors}.
	Moreover, omitting the log factors, the privacy error is decided by the total number of samples $mn$ for sufficiently large $m$ and $n$. 
	Thus, we can achieve the same estimation error with less number of users if there are more local samples per user, while retaining the same level of privacy for each user since $\varepsilon$ is fixed. 
	On contrary, if there is $mn$ users with one sample each, the error is inevitably $\mathcal{O}(d s^* / nm \varepsilon^2)$ (Proposition \ref{thm:lowerboundnoninteractive}). 
	This comparison illustrates the advantage of having both sufficient users and local samples compared to having abundant users and only one local sample. 
	Note that this distinction holds only between sequential-interactive ULDP and LDP.
	It is unclear whether the lower bound holds under non-interactive ULDP, since most ULDP methods require sequential interactivity \cite{acharya2023discrete, bassily2023user}.

	\subsection{Extension to Sparse Estimation}\label{sec:extensiontosparseestimation}
	
	In this section, we show our framework can be applied to various sparse problems through reduction to non-private learners.
	We consider estimation of $\beta^*$ from data $\{X_i\}_{i=1}^n \in \mathcal{X}^{mn}$, which is generated from distribution $\mathrm{P}_{\beta^*}$ parameterized by $\beta^*$. 
	$\beta^*$ is assumed to be in $\Omega_{s, a}^d$. 
	The assumptions include linear regression as a special case.
	It's important to note that Algorithm \ref{alg:uldpsparselinearregression} depends on the particular problem form via two steps: (\textit{i}) the selector $\mathcal{S}_i$ and (\textit{ii}) the estimator $\widehat{\beta}_i$. 
	Both components depend on a non-private estimator of $\beta^*$.
	The following theorem demonstrates that, given a qualified estimator, our framework achieves fast convergence rates for the general problem of sparse estimation.

	\begin{theorem}[\textbf{Informal}] \label{thm:preciseestimationgeneral}
		Let data $\{X_i\}_{i=1}^n$ be generated by $\mathrm{P}_{\beta^*}$ for $\beta^* \in \Omega_{s,a}^d$.
		Suppose we have non-private estimators:
		(\textit{i})
		estimator $\tilde{\beta}_i$ with $\|\tilde{\beta}_i - {\beta}^*\|_2 \leq \nu_1$ for all $ 1\leq i \leq n / 2$ and 
		(\textit{ii}) 
		estimator $\widehat{\beta}_i$ on selected variables with $\mathbb{E}\left[\widehat{\beta}_i\right] = \widehat{\beta}^*$ and $\|\widehat{\beta}_i - \widehat{\beta}^*\|_2 \leq \nu_2$ for all $n /2 + 1\leq i \leq n$. 
		Then, for any $a \gtrsim\nu_1$,
		there exists an $\varepsilon$-ULDP algorithm whose output $\beta$ has 
		\begin{align}\label{equ:generalbound}
			\left\|\beta^* - \beta\right\|_2^2 \lesssim   \frac{\nu_2^2}{n}+ \frac{\nu_2^2 s^{*} \log^2 n }{n \varepsilon^2\alpha }
		\end{align}
		with probability at least $1 - 3 / n^2$. 
		Moreover, for $\ell_1$ norm, there holds
		\begin{align}\label{equ:generalboundl1}
			\left\|\beta^* - \beta\right\|_1 \lesssim   \sqrt{\frac{\nu_2^2 s^*}{n \alpha}}+ \sqrt{\frac{\nu_2^2 s^{*2 } \log^2 n }{n \varepsilon^2\alpha^2 }}
		\end{align}
		with probability at least $1 - 3 / n^2$. 
	\end{theorem}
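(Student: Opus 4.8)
The plan is to mirror the structure of Algorithm \ref{alg:uldpsparselinearregression} and its analysis in Theorem \ref{thm:preciseestimation}, replacing the linear-regression-specific selector and local estimator with the two black-box non-private estimators $\tilde\beta_i$ and $\widehat\beta_i$ assumed in the hypothesis. First I would build the candidate-variable selection stage. From estimator $\tilde\beta_i$ with $\|\tilde\beta_i-\beta^*\|_2\le\nu_1$ and the assumption $a\gtrsim\nu_1$, I would construct a selector $\mathcal S_i$ that hard-thresholds $\tilde\beta_i$ at level $\Theta(a)$ and then samples one surviving coordinate uniformly; since the $\ell_2$ error is below the signal strength, every true coordinate $v\in\{1,\dots,s^*\}$ survives with constant probability, hence $\mathrm{Pr}(v=\mathcal S_i(X_i))\ge\alpha/s^*$ for a constant $\alpha$ — i.e.\ $\mathcal S_i$ is $\alpha$-good. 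Feeding these selectors to \texttt{HeavyHitter} and invoking Proposition \ref{prop:heavyhitterselection} (with the stated $\rho$), the curator recovers a set $\{\widehat v_1,\dots,\widehat v_s\}$ of size $s\asymp s^*/\alpha$ containing $\{1,\dots,s^*\}$, with probability $1-1/n^2$.

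Next I would handle the coefficient-estimation stage on the selected $s$-dimensional subspace. Condition on the good-selection event. The second group of users ($n/2<i\le n$) hold the unbiased estimators $\widehat\beta_i$ with $\mathbb E[\widehat\beta_i]=\widehat\beta^*$ and $\|\widehat\beta_i-\widehat\beta^*\|_2\le\nu_2$ (with probability $1-1/n^2$ each). I would then run \texttt{ULDPMean}: users $n/2<i\le 3n/4$ vote to localize a radius-$\tau$ bin on $\mathbb R^s$ containing almost all $\widehat\beta_i$ (taking $\tau\asymp\nu_2$, justified by the deviation bound plus a union bound over these $n/4$ users, absorbing a $\log n$), and users $3n/4<i\le n$ project onto the bin and add Laplace noise calibrated to sensitivity $\Theta(\tau\sqrt s/\varepsilon)$ per coordinate. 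Averaging the $n/4$ noisy projected vectors gives an estimator $\widehat\beta$ of $\widehat\beta^*$ whose squared error splits into a statistical term from averaging the $\widehat\beta_i$ (order $\nu_2^2/n$, using unbiasedness and independence across users so variances add and the mean concentrates) and a privacy term of order $s\cdot\tau^2/(n\varepsilon^2)\asymp \nu_2^2 s^*/(n\varepsilon^2\alpha)$ from the Laplace noise over $s\asymp s^*/\alpha$ coordinates. Since $\widehat\beta^*=\beta^{*1:s}$ contains all nonzero entries of $\beta^*$, padding $\widehat\beta$ with zeros gives $\|\beta^*-\beta\|_2=\|\widehat\beta^*-\widehat\beta\|_2$, yielding \eqref{equ:generalbound}; the extra $\log^2 n$ collects the union-bound and Laplace-tail factors. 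Privacy of the whole pipeline follows from the $\varepsilon$-ULDP guarantees of \texttt{HeavyHitter} and \texttt{ULDPMean} applied to disjoint user groups, exactly as in part (\textit{i}) of Theorem \ref{thm:preciseestimation}. For the $\ell_1$ bound \eqref{equ:generalboundl1}, I would note that $\widehat\beta-\widehat\beta^*$ is supported on $s\asymp s^*/\alpha$ coordinates, so Cauchy–Schwarz gives $\|\beta^*-\beta\|_1\le\sqrt s\,\|\beta^*-\beta\|_2$, and substituting the $\ell_2$ bound and simplifying the square root produces the claimed expression.

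The main obstacle I anticipate is the careful bookkeeping of the high-probability events and the radius parameter $\tau$ in \texttt{ULDPMean}: one must ensure that the localized bin, chosen from the $\widehat\beta_i$ of the third quarter of users, actually contains the $\widehat\beta_i$ of the last quarter with high probability (requiring $\tau$ a constant multiple of $\nu_2$ times a $\sqrt{\log n}$ slack), and that after projection the estimators remain unbiased enough that averaging still concentrates around $\widehat\beta^*$ rather than a biased projected target — this is where the assumption $\mathbb E[\widehat\beta_i]=\widehat\beta^*$ is essential, since a projection onto a small bin around the empirical center preserves the mean only because essentially no mass is clipped. Controlling the interaction between the clipping bias and the privacy noise, while keeping all failure probabilities summing to at most $3/n^2$, is the delicate part; everything else is a direct adaptation of the linear-regression argument with $\nu_1,\nu_2$ in place of the explicit $\sqrt{s^*\log n/m}$-type rates.
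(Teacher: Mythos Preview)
Your proposal is correct and follows essentially the same route as the paper's proof: construct an $\alpha$-good selector from $\tilde\beta_i$ via thresholding at level $\Theta(a)$ (the paper simply says ``consistently select all true variables with proxy estimators''), invoke Proposition \ref{prop:heavyhitterselection} to recover a support of size $s\lesssim s^*/\alpha$, then apply \texttt{ULDPMean} with $\tau\asymp\nu_2$ and use Lemma \ref{lem:themodifiedmeanconcentration} to split the error into the statistical term $\nu_2^2/n$ and the privacy term $s\nu_2^2\log^2 n/(n\varepsilon^2)$, finally substituting $s\lesssim s^*/\alpha$ and using $\|\cdot\|_1\le\sqrt{s}\,\|\cdot\|_2$ for the $\ell_1$ bound. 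The clipping-bias concern you flag is handled in the paper exactly as you anticipate, via the high-probability concentration of all $\widehat\beta_i$ within the selected bin.
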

	
	We also present a result for the $\ell_1$ norm. 
	Comparing \eqref{equ:generalboundl1} to \eqref{equ:generalbound}, the difference arises from the $\sqrt{s}$ discrepancy between the $\ell_1$ and $\ell_2$ norms, given that we only have $s$ non-zero elements in our sparse estimation problem.
	We discuss the implications of Theorem \ref{thm:preciseestimationgeneral}. 
	Consider the sparse mean estimation \citep{duchi2018minimax, zhou2022locally}, where non-private estimator achieves $\nu_2 = \mathcal{O}(\sqrt{s^* \log n / m})$ \citep{johnstone1994minimax} under mild conditions. 
	Then the bound \eqref{equ:generalbound} becomes identical to \eqref{equ:tworoundbound2}, which eliminates the linear dependency of $d$ in LDP \citep{duchi2018minimax}. 
	For sparse discrete distribution estimation, \citet{acharya2021estimating} removed the linear dependency of $d$. 
	With $\nu_2 = \mathcal{O}(\sqrt{s^*\log n / m})$, our bound \eqref{equ:generalboundl1} is $\sqrt{s^*}$ larger than theirs in $\ell_1$ sense.

	It worth mentioning that when $d$ is small, our upper bound matches the lower bound for $m = 1$. 
	In this scenario, selector provides no useful information and is equivalent to a random selection, i.e. $\alpha \leq \mathrm{P}\left(v = \mathcal{S}(X_i, y_i)\right) \cdot s^* = s^* / d $.
	If $\alpha = s^* / d  \gtrsim s^* \sqrt{\log n \log d/ n\varepsilon^2}$, then \eqref{equ:tworoundbound2} becomes
	\begin{align*}
		\frac{s^*   \log n}{nm} + \frac{ds^* \log^3 n}{n m \varepsilon^2 }. 
	\end{align*}
	Up to logarithmic factors, the second term matches the lower bound established in \citet{zhu2023improved} for sparse linear regression and \citet{duchi2018minimax} for sparse mean estimation.

	\section{Experiment Results}\label{sec:experiments}

	We conduct experiments on both synthetic and real datasets to show the superiority of proposed methods and to validate our theoretical findings. 
	The tested methods include:
	\textbf{(\textit{i}) 2-SLR}: The proposed two-round ULDP sparse linear regression method outlined in Algorithm \ref{alg:uldpsparselinearregression}; 
	\textbf{(\textit{ii}) M-SLR}: The proposed multi-round version in Algorithm \ref{alg:uldpsparselinearregressionmultiround}. 
	The competing methods are:
	\textbf{(\textit{iii}) LDPPROX}: The non-interactive LDP proxy estimator in \citet{zhu2023improved};
	\textbf{(\textit{iv}) LDPIHT}: The LDP iterative hard thresholding in \citet{wang2019sparse, zhu2023improved}.
	Both comparison methods receive $nm$ samples with budget $\varepsilon$ each. 
	Additionally, we report performance of non-privately fitting \textbf{(\textit{v}) Lasso} using $m$ samples, representing an alternative for each user to rely solely on their local information. 
	Implementation details are provided in Appendix \ref{app:additionalexperiment}.
	For each model, we report the best result over its parameter grids, with the best result determined based on the average of at least 30 replications.
	The size of the parameter grids is selected based on running time to ensure that each method incurs an equal amount of computation.
	All experiments are conducted on a machine with 72-core Intel Xeon 2.60GHz and 128GB of main memory. 
	The code is publicly available at GitHub\footnote{https://github.com/Karlmyh/ULDP-SL}.

	\begin{figure*}[!t]
		\vskip -0.1in
		\centering
		\subfigure[$d$ - F1 score.]{
			\begin{minipage}{0.22\linewidth}
				\centering
				\includegraphics[width=\linewidth]{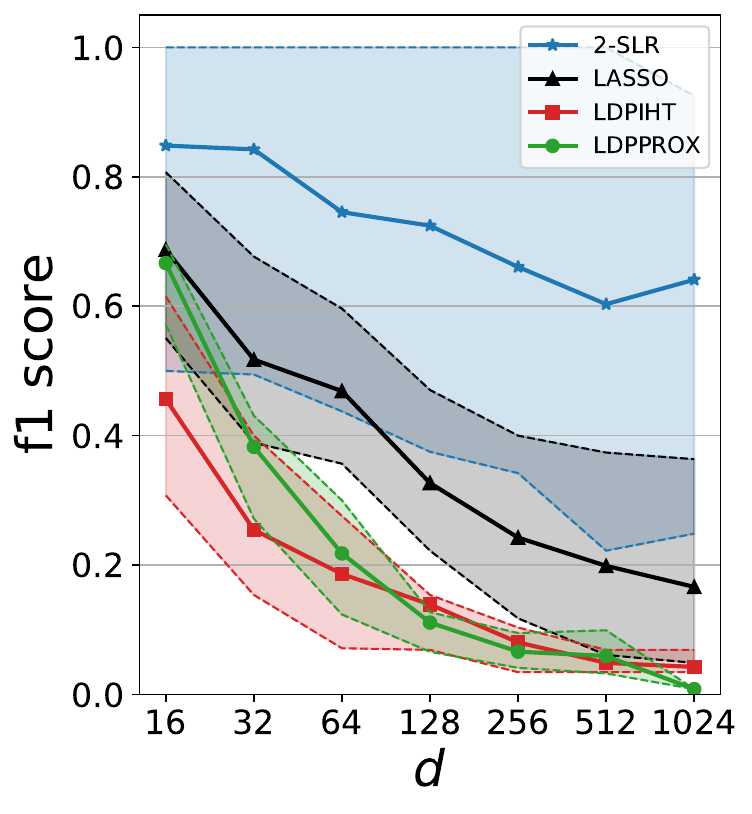}
			\end{minipage}
			\label{fig:df1}
		}
		\subfigure[ $d$ - $\ell_2$ error with $m = 100$.]{
			\begin{minipage}{0.22\linewidth}
				\centering
				\includegraphics[width=\linewidth]{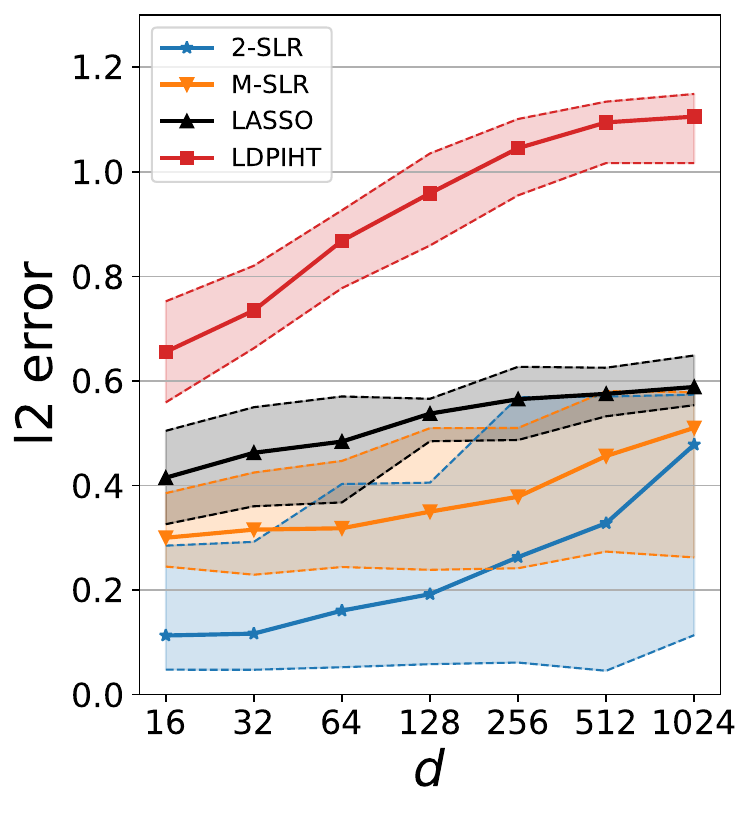}
			\end{minipage}
			\label{fig:dl2m100}
		}
		\subfigure[$d$ - $\ell_2$ error with $m = 200$.]{
			\begin{minipage}{0.22\linewidth}
				\centering
				\includegraphics[width=\linewidth]{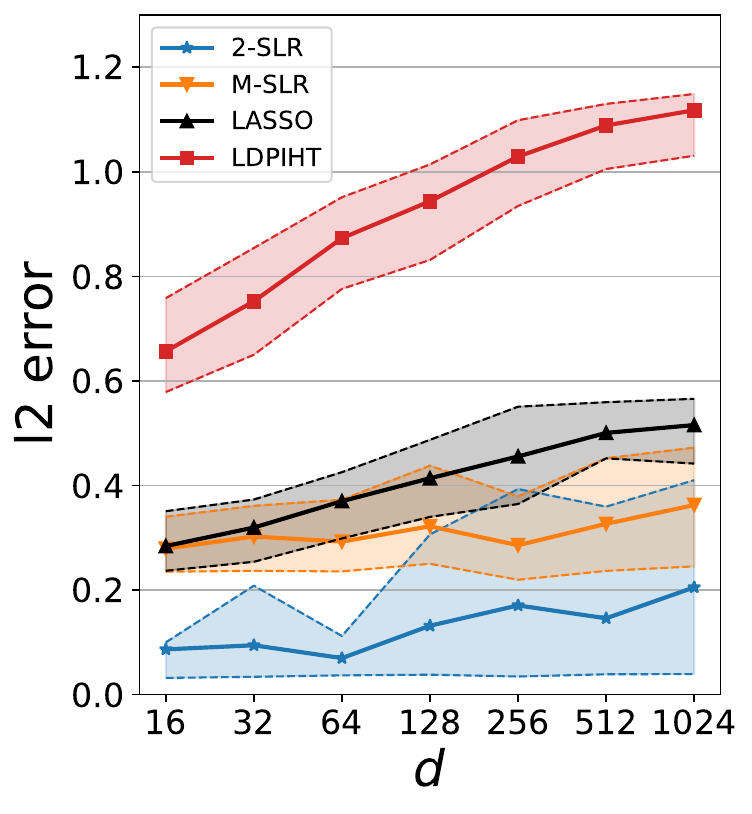}
			\end{minipage}
			\label{fig:dl2m200}
		}
		\subfigure[$\varepsilon$ - $\ell_2$ error.]{
			\begin{minipage}{0.22\linewidth}
				\centering
				\includegraphics[width=\linewidth]{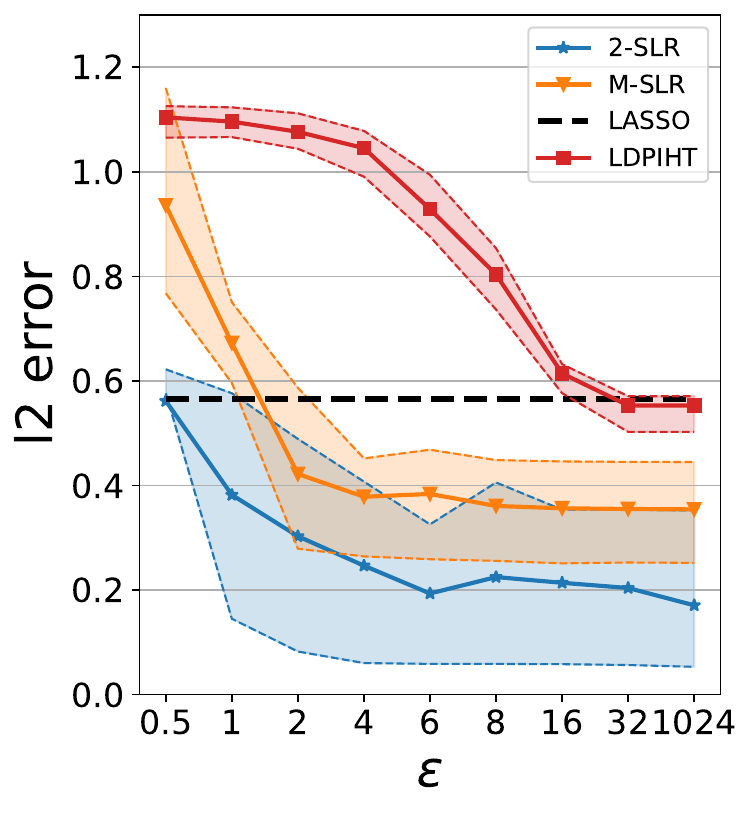}
			\end{minipage}
			\label{fig:epsilonl2}
		}
		\vskip -0.1in
		\caption{Experiments w.r.t. $d$ and $\varepsilon$. 
			We plot the quantiles over 30 repetitions with $95\%$ coverage. 
			We exclude LDPPROX in the last three figures since it is highly unstable and do not fit into our plot scale. 
		}
		\vskip -0.14in
		\label{fig:epsilond}
	\end{figure*}
	
	\subsection{Simulation}

	We conducted experiments on synthetic data to validate the theoretical findings.
	Two sets of parallel experiments are conducted for independent and correlated marginal distributions, respectively, while results of the latter are presented in Appendix \ref{app:additionalexperiment}. 
	We draw each $X_{i,j}^k$ and $\sigma_{i,j}$ independently from standard Gaussian distribution.
	For $\beta^*$, we randomly select $s^* = 8$ coordinates to be $0.2$ and let others be zero.
	Typically, we set $n = 400$, $m = 100$, $d = 256$, and $\varepsilon = 4$, while varying one of them to observe how the evaluated metric varies.
	We use squared error to evaluate the estimated coefficients and F1 score to evaluate variable selection.

	We conduct experiments w.r.t. $d$. 
	We first analyze the variable selection performance.
	For $d\in\{16,32, \cdots, 1024\}$, we compute the averaged F1 scores of the proposed candidate variable selection (represented by 2-SLR) and other methods. 
	As shown in Figure \ref{fig:df1}, the selection performance of 2-SLR is superior to variables induced by other methods.
	Particularly noteworthy is that 2-SLR achieved higher F1 scores than Lasso.
	This observation aligns with  \citet{wang2011random, liang2023vsolassobag}, where aggregating Lasso fitted on random subsamples leads to performance gains in both selection and prediction.
	
	Next, we analyze the estimation performance with respect to $d$. 
	In Figures \ref{fig:dl2m100} and \ref{fig:dl2m200}, we plot the curve of $\ell_2$ error w.r.t. $d$. 
	Given a large $m = 200$, the proposed methods are less sensitive to $d$ compared to LDPIHT and Lasso.
	This observation is compatible with the rate in \eqref{equ:tworoundbound2}, which is independent of $d$. 
	Conversely, for smaller $m = 100$, the local selectors can not provide a constant $\alpha$ for exponentially larger $d$.
	As a result, the trend of our methods is steeper.

	We examine the privacy-utility trade-offs by investigating performances under different $\varepsilon$. 
	In Figure \ref{fig:epsilonl2}, the error decreases as $\varepsilon$ increases for all private methods. 
	Moreover, the error of 2-SLR is consistently better than Lasso, while error of M-SLR quickly drops below Lasso at medium privacy levels ($\varepsilon \geq 2$).
	This shows the superiority of our methods compared to fitting Lasso using only local information.

		\begin{figure}[!b]
		\centering
		\vskip -0.1in
		\subfigure[$n  = 100$ ]{
			\begin{minipage}{0.3\linewidth}
				\centering
				\includegraphics[width=\linewidth]{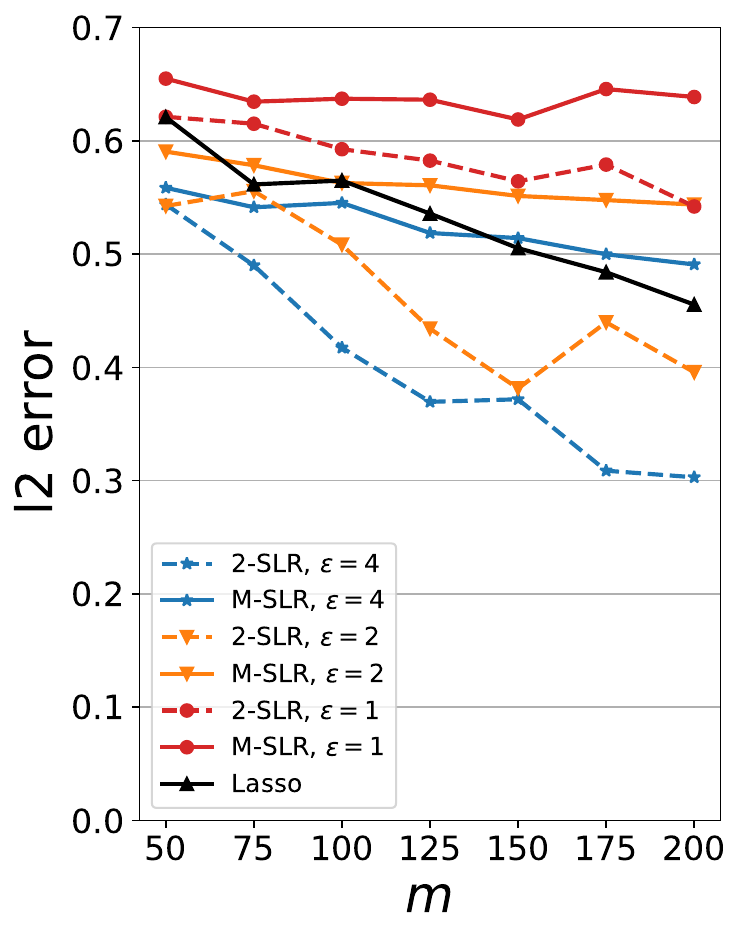}
			\end{minipage}
			\label{fig:mworse}
		}
			\hskip -0.1in
		\subfigure[$n  = 400$ ]{
			\begin{minipage}{0.3\linewidth}
				\centering
				\includegraphics[width=\linewidth]{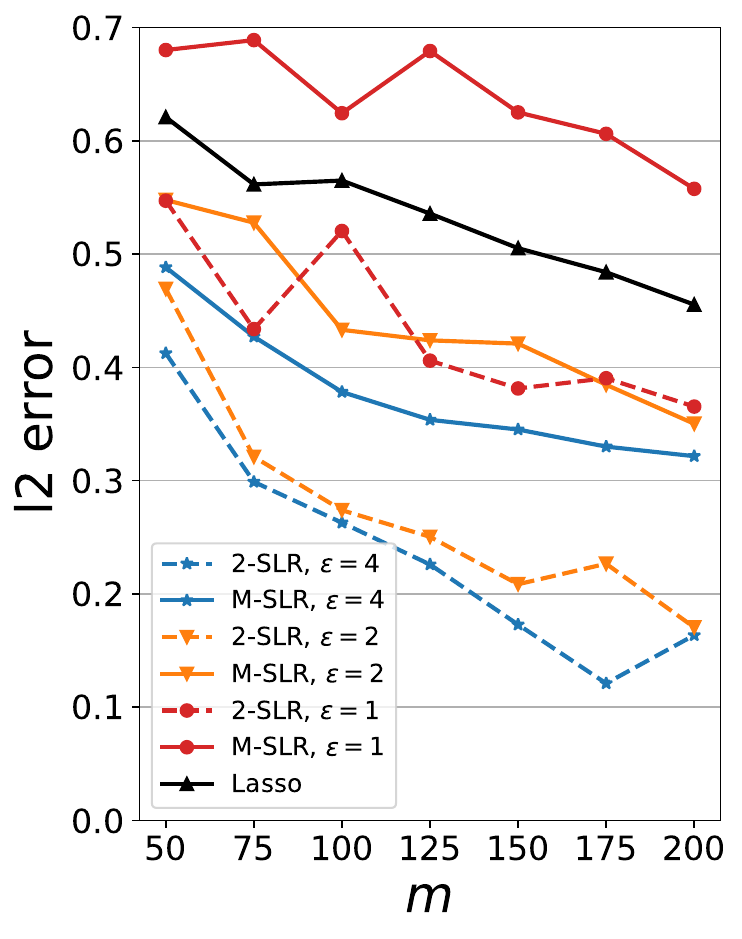}
			\end{minipage}
			\label{fig:m}
		}
			\hskip -0.1in
		\subfigure[$n  = 800$ ]{
			\begin{minipage}{0.3\linewidth}
				\centering
				\includegraphics[width=\linewidth]{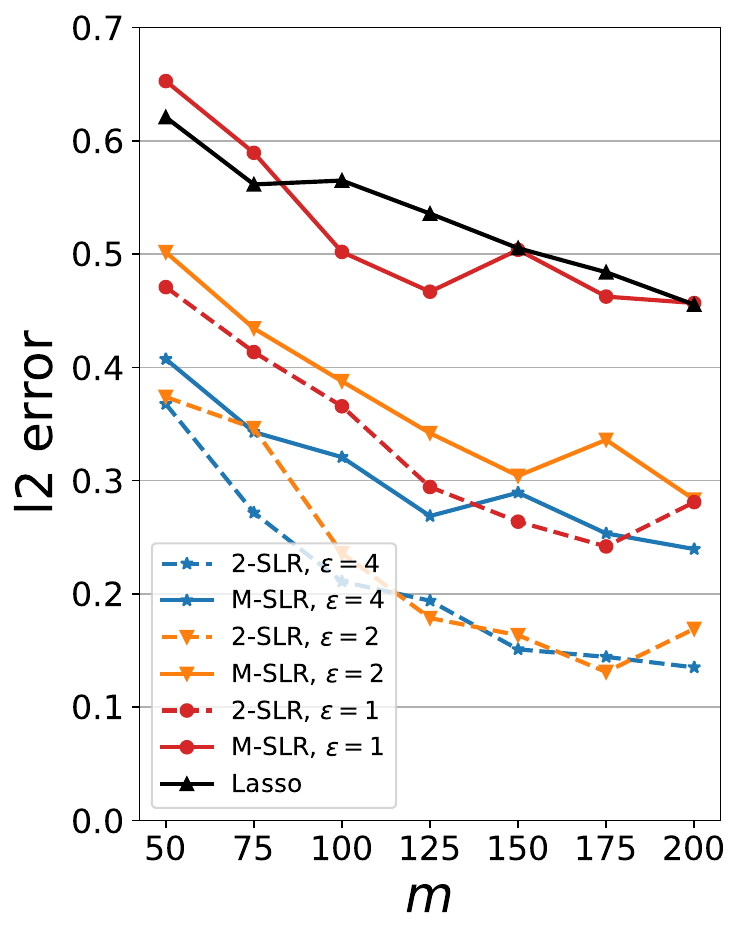}
			\end{minipage}
			\label{fig:mbetter}
		}
		\caption{Experiments w.r.t. $m$ and $\ell_2$ error. 
		}
		\vskip -0.1in
		\label{fig:mall}
	\end{figure}
	
		Finally, we analyze the impact of sample sizes. We conducted experiments with varying $m$ (ranging from 50 to 200) under different $n$, comparing the performance of our methods with Lasso on local samples. The results for varying $m$ are presented in Figure \ref{fig:mall}. We observe that given a sufficiently large $n = 800$, 2-SLR always outperforms Lasso, and M-SLR performs comparably even for $\varepsilon = 1$. If $n = 400$, only M-SLR with $\varepsilon = 1$ performs worse than Lasso. However, given an insufficient $n = 100$, Lasso performs comparably to 2-SLR with $\varepsilon = 2$. Similarly, in Figures \ref{fig:nl2} and \ref{fig:ml2}, the $\ell_2$ error decreases as $n$ increases for all $\varepsilon$. The results indicate that our methods outperform Lasso under various $(n, m, \varepsilon)$ settings, except for M-SLR with $\varepsilon = 1$.
		This observation is reasonable and aligns with phenomena commonly observed in ULDP learning, federated learning, or transfer learning, where incorporating information from other data sources may not necessarily improve estimation if the quality of that additional information is low due to factors such as privacy constraints, data heterogeneity, or data compression.
		
		Moreover, we set $nm = 400 \times 100$ and varied the ratio $n / m$. In Figure \ref{fig:nml2}, we observe that, for each $\varepsilon$, the error of 2-SLR remains stable when $n \approx m$, while it slightly increases when either $n$ or $m$ is too small, which is consistent with Theorem \ref{thm:preciseestimation}. Furthermore, the performance of M-SLR is more sensitive to $n$ becoming small. This is attributed to its gradient nature, which requires a large number of users.

	\begin{figure}[!b]
		\vskip -0.1in
		\centering
		\subfigure[$n$ - $\ell_2$ error.]{
			\begin{minipage}{0.31\linewidth}
				\centering
				\includegraphics[width=\linewidth]{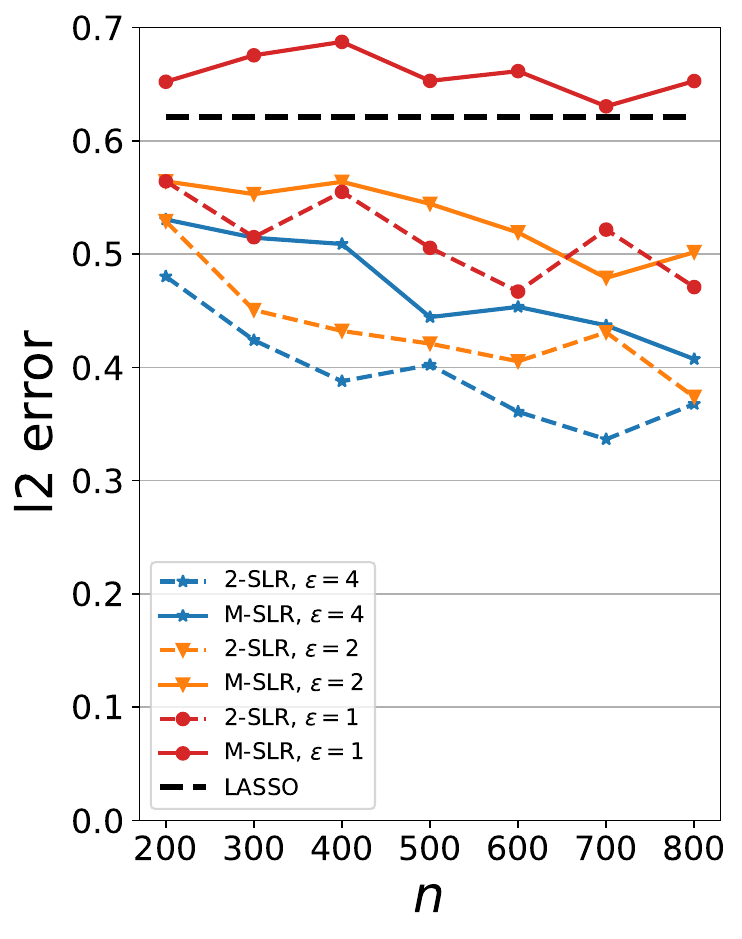}
			\end{minipage}
			\label{fig:nl2}
		}
		\hskip -0.1in
		\subfigure[$m$ - $\ell_2$ error.]{
			\begin{minipage}{0.31\linewidth}
				\centering
				\includegraphics[width=\linewidth]{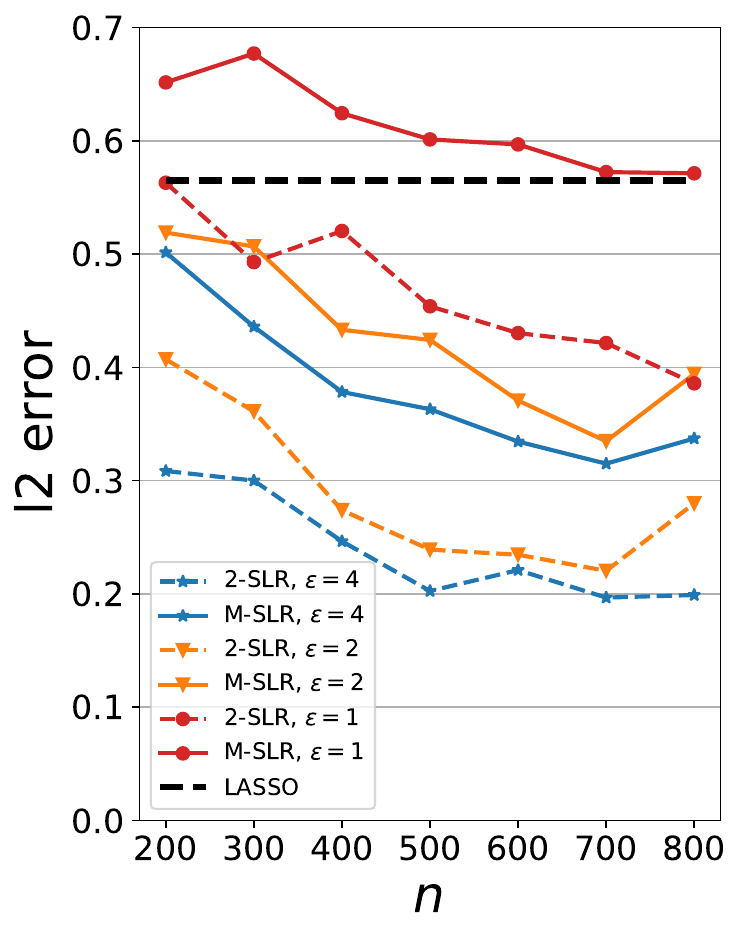}
			\end{minipage}
			\label{fig:ml2}
		}
		\hskip -0.1in
		\subfigure[$n / m$ - $\ell_2$ error.]{
			\begin{minipage}{0.31\linewidth}
				\centering
				\includegraphics[width=\linewidth]{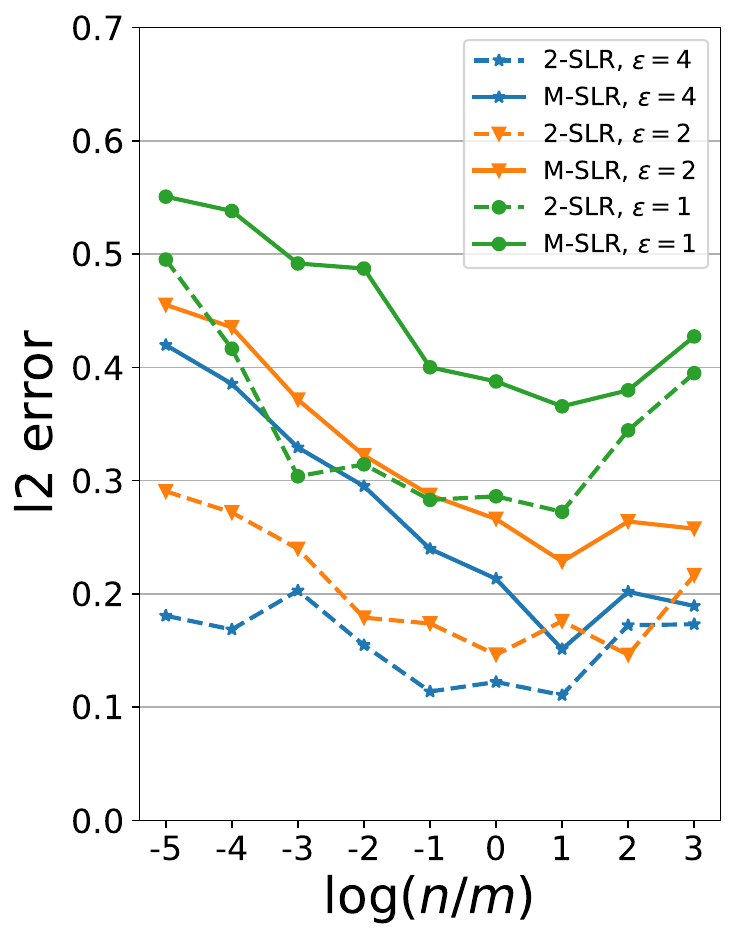}
			\end{minipage}
			\label{fig:nml2}
		}
		\vskip -0.1in
		\caption{Experiments w.r.t. $n$ and $n / m$. 
		}
		\vskip -0.1in
		\label{fig:nm}
	\end{figure}

	\subsection{Real Data}
	
		\begin{table*}[!t]
		\vskip -0.2in
		\caption{Real data performances.  
			To ensure significance, we employ the Wilcoxon signed-rank test \citep{wilcoxon1992individual} with a significance level of 0.05 to determine if a result is significantly better. 
			The best results are \textbf{bolded} and those holding significance towards the rest results are marked with $*$.}
		\label{tab:realdata}
		\centering
		\resizebox{0.75\linewidth}{!}{
			\renewcommand{\arraystretch}{1}
			\setlength{\tabcolsep}{3pt}
			\begin{tabular}{|l|l|ll|l|ll|ll|}
				\toprule
				Budget                            & Datasets   & NP-2-SLR&    NP-M-SLR   & Lasso & 2-SLR & M-SLR & LDPPROX & LDPIHT \\ \midrule
				\multirow{6}{*}{$\varepsilon = 1$} 
				& \texttt{Airline}  & 1.01 &  0.82& 1.02    & 1.02     & \textbf{0.98*}     & 1.38       & 1.85      \\
				& \texttt{Loan}   & 0.97&   0.88& 0.99     & 0.98    & \textbf{0.97}     & 5.27       & 2.00     \\
				& \texttt{MIP}   & 1.00 &  0.96   & 1.65     & 1.00     & \textbf{0.98*}     & 2.54       & 1.87      \\
				& \texttt{Taxi}  & 0.95&   0.01  & 1.04     &  0.96    & \textbf{0.01*}     & 1.20       & 1.02    \\
				& \texttt{Wine}   & 1.19 &  1.17  & \textbf{1.14*}    & 1.34     & 1.37     & 7.71       & 2.30      \\
				& \texttt{Yolanda}  & 1.10 &1.14  & \textbf{1.19}     & \textbf{1.19}     & 1.22     & 1.90       & 2.36     \\ \midrule
				\multirow{6}{*}{$\varepsilon = 4$} 
				& \texttt{Airline}  &1.01 &  0.82 & 1.02     & 1.02     & \textbf{0.88*}     & 1.15       & 1.02      \\
				& \texttt{Loan}  &0.97 &0.88  & 0.99     & 0.98     & \textbf{0.90*}     & 2.05       & 1.65     \\
				& \texttt{MIP}   & 1.00 & 0.96  & 1.65     & 1.01     & \textbf{0.96*}     & 3.30       & 1.82      \\
				& \texttt{Taxi}   &0.95 & 0.01  & 1.04     & 0.95     & \textbf{0.01*}     & 1.16       & 1.88     \\
				& \texttt{Wine}    & 1.19& 1.17  & \textbf{1.14*}     &   1.19   & 1.27     & 5.39       & 1.74      \\
				& \texttt{Yolanda}   & 1.10& 1.14 & 1.19     & \textbf{1.11*}    & 1.18     & 1.79       & 2.03     \\ \midrule
				\multicolumn{2}{|c|}{Rank sum}   &  \multicolumn{2}{|c|}{-} &  31    &  24   &   \textbf{19}  &  56     &   50   \\ 
				\bottomrule
			\end{tabular}
		}
		\vskip -0.15in
	\end{table*}
	
	We conduct experiments on six real datasets with various sample sizes and dimensionalities. 
	Among the datasets, \texttt{Airline} and \texttt{Taxi} are the most suitable for our setting, where each user possesses small local samples with large dimensions.
	The datasets contain sensitive information and have been used in privacy research \citep{ma2024decision}. 
	The other datasets are manually grouped to fit our framework.
	See Appendix \ref{app:realdatasets} for description of datasets.

	We first compute the mean squared error over 30 random train-test splits for $\varepsilon = 1$ and $\varepsilon = 4$.
	To standardize the scale across datasets, we report the MSE ratio relative to non-private fitting with Lasso over all samples.
	The results are displayed in Table \ref{tab:realdata}.
	For both high privacy ($\varepsilon = 1$) and medium privacy ($\varepsilon = 4$), the proposed methods significantly outperform competitors in terms of both average performance (rank sum) and the number of best results achieved.
	It is worth noting that in most cases, Lasso fitted on local datasets outperforms LDP competitors, yielding the effortlessness of LDP sparse regression. 
	Moreover, the running time of the methods is displayed in Appendix \ref{app:realdatasets}.
	The results show that, if properly paralleled, our methods are quite efficient.

		We observe that our methods (2-SLR and M-SLR) can sometimes outperform non-private Lasso on the whole data. This is somewhat expected. As explained in previous literature \citep{ndaoud2019interplay}, given strong signal strength ($\min_{\beta^j>0} |\beta^j|$ is large), the optimal error can actually be improved and simply performing Lasso does not achieve this optimality. 
	Moreover, methodological works \citep{wang2011random, liang2023vsolassobag} showed the effectiveness of selecting candidate variables by aggregating Lasso fitted on random subsamples. Intuitively, even with strong signal strength, fitting Lasso does not guarantee the selection of all true variables due to randomness, while aggregating variables selected on random subsamples is more likely to identify true variables.
	We validated our conjecture by running non-private SLRs ($\varepsilon = 1024$). The results are presented in Table \ref{tab:realdata}. We observe that 2-SLR and M-SLR never outperform their non-private counterparts, while non-private SLRs occasionally outperform Lasso on some datasets.

	We also observe that 2-SLR outperforms M-SLR in simulation, while the opposite is true in real data. 
		The phenomenon is attributed to the implicit regularization.
		 In synthetic data, where the data is neatly generated, estimations tend to converge well. However, real data often contains more noise, leading to potentially unstable estimations. In such cases, using zero coefficients as the initial point yields a regularized estimator \citep{ali2019continuous}, which are biased yet stable.

	\section{Discussion}
	
	In this work, we investigate the ULDP sparse linear regression.
	By proposing a two-phase solution, we show the theoretical advantage of having multiple samples per user, which is then validated by exhaustive experiments.
	
	It is worth mentioning that we do not explore scenarios where $m$ is small, such as $m \leq s^* \log d$.
	Our experiments, particularly with the \texttt{MIP} dataset, demonstrate that even with few local samples, satisfactory results can be achieved.
	However, dealing with small $m$ may require a comprehensive distributional analysis of variable selection, which could be a promising avenue for future research.
	We also hope to establish a tight minimax lower bound of sparse estimation under ULDP. 
	
	Currently, we consider a support estimation based algorithm. 
	As suggested by the reviewers, an interesting topic would be an algorithm that simultaneously learns the sparse coefficients and optimizes the model, potentially with a Lasso-type optimization objective.
	Directly solving such a problem is ineffective \citep{bassily2023user}.
	Each update step will involve updating $d - s$ redundant parameters, whose information needs to be protected under differential privacy.
	Thus, excessive random noise is injected.
	By utilizing support estimation, we circumvent this issue in the second phase, leading to improved final rates.
	A private analog for algorithms with limited message passing each round is promising, such as least-angle regression (LARS) or coordinate gradient descent. 
	
	\section*{Impact Statement}
	
	We believe that it is difficult to clearly foresee societal consequence of the present work, which has a primary focus on machine learning theory and methodology. 
	We believe this work can serve as a forward step to enclosing the gap between the theoretical study of LDP  and  practical situations. 
	
	\section*{Acknowledgement}
	
	We would like to thank the reviewers for their help and advice, which led to a significant improvement of the article.
	We also thank Yifan Gu for providing discussion on variable selection issues. 
	The research is supported by the Special Funds of the National Natural Science Foundation of China (Grant No. 72342010). 
	Yuheng Ma is supported by the Outstanding Innovative Talents Cultivation Funded Programs 2023 of Renmin University of China.
	This research is also supported by Public Computing Cloud, Renmin University of China.

	\bibliography{references}
	\bibliographystyle{icml2024}

	\newpage
	\appendix
	\onecolumn

	In this appendix, we provide the omitted content for minimax lower bound (Appendix \ref{app:lowerbounds}), the algorithm and theoretical results of candidate variable selection  (Appendix \ref{app:candidatevariableselection}), the algorithm and theoretical results of coefficient estimation (Appendix \ref{app:coeffestimaton}), an extension from our framework to general problems (Appendix \ref{app:extensiontosparseestimation}), and details as well as additional results of experiments (Appendix \ref{app:additionalexperiment}).

	%

	\section{Minimax Lower Bound}\label{app:lowerbounds}

	We first borrow assumptions and definitions from \citet{acharya2020unified}.
	Let $Z=\left(Z_1, \ldots, Z_d\right)$ be a random variable over $\mathcal{Z}=\{-1,+1\}^d$ such that $\mathbb{P}\left[Z_i=1\right]=\tau$ for all $i \in[d]$ and the $Z_i$ s are all independent; we denote this distribution by $\operatorname{Rad}(\tau)^{\otimes d}$. For $z \in \mathcal{Z}$, we denote $z^{\oplus i} \in \mathcal{Z}$ as the vector obtained by flipping the sign of the $i$-th coordinate of $z$.
	
	\begin{condition}\label{asp:densityexist}
		For every $z \in \mathcal{Z}$ and $i \in[d]$ it holds that $\mathrm{P}_{z^{\oplus i}} \ll \mathrm{P}_z$ (we refer to $\mathrm{P}_{\beta_z}$ simply as $\mathrm{P}_z$), and there exist measurable functions $\phi_{z, i}: \mathbb{R}^d \rightarrow \mathbb{R}$ such that
		\begin{align*}
			\frac{\mathrm{d} \mathrm{P}_{z^{\oplus i}}}{\mathrm{~d} \mathrm{P}_z}=1+\phi_{z, i} .
		\end{align*}
	\end{condition}
	
	\begin{condition}\label{asp:orthogonality}
		There exists some $\alpha^2 \geq 0$ such that, for all $z \in \mathcal{Z}$ and distinct $i, j \in$ $[d], \mathbb{E}_{\mathrm{P}_z}\left[\phi_{z, i} \cdot \phi_{z, j}\right]=0$ and $\mathbb{E}_{\mathrm{P}_z}\left[\phi_{z, i}^2\right] \leq \alpha^2$.
	\end{condition}

	\begin{condition}\label{asp:addtive}
		For every $z, z^{\prime} \in \mathcal{Z}=\{-1,+1\}^d$,
		\begin{align*}
			\ell_2\left(\theta_z, \theta_{z^{\prime}}\right)=4 \nu\left(\frac{\mathrm{d}_{\mathrm{Ham}}\left(z, z^{\prime}\right)}{\tau d}\right)^{1 / 2}
		\end{align*}
		where $\mathrm{d}_{\mathrm{Ham}}\left(z, z^{\prime}\right):=\sum_{i=1}^d \eins \left\{z_i \neq z_i^{\prime}\right\}$ denotes the Hamming distance, where $\tau=s^* / 2 d, s^*$ and $\nu$ denotes sparsity and error rate respectively.
	\end{condition}

	\begin{proof}[\textbf{Proof of Theorem \ref{thm:ourlowerbound}}]
		First, suppose $X^j$ is uniformly distributed on $\{-1,1\}$ for $1\leq j \leq d$. 
		Let 
		\begin{align*}
			\beta_{Z, j}^* = \frac{4 \sqrt{2 } \nu }{\sqrt{ s^*}}\frac{Z_j + 1}{2}
		\end{align*}
		for $1 \leq j \leq d$ where $Z_j$s are i.i.d. random variables with 
		\begin{align*}
			\mathrm{Pr}\left[Z_i=+1\right]=\frac{s^*}{2 d}, \quad \mathrm{Pr}\left[Z_i=-1\right]=1-\frac{s^*}{2 d} .
		\end{align*}
		There holds $\beta^*_Z$ satisfies the conditions that $\|\beta^*_Z\|_{\infty} \leq 1$ and $\|\beta^*_Z\|_0 \leq s^*$ with probability $1 - s^* / 2d$ using Fact 1 in \citet{acharya2020unified}. 
		Next, for each $Z$ we let:
		\begin{align*}
			\sigma_Z=\left\{\begin{array}{lll}
				1-\left\langle X, \beta^*_Z\right\rangle & \text { w.p. } & \frac{1 + \left\langle X, \beta^*_Z\right\rangle}{2} \\
				-1-\left\langle X, \beta^*_Z\right\rangle & \text { w.p. } & \frac{1 -\left\langle X, \beta^*_Z\right\rangle}{2}
			\end{array}\right.
		\end{align*}
		Thus, $Y \in \{-1, 1\}$. 
		The above distribution satisfies \eqref{equ:modelassumption} with probability $1 - s^* / 2d$.
		The distribution $\mathrm{P}_Z$ has density function $\left(1+Y\left\langle X, \beta^*_Z\right\rangle\right) / {2^{d+1}}$ for $(X, Y) \in\{+1,-1\}^{d+1}$. 
		Then, for the $i$-th user who has the data sample $\left(X_i, y_i\right)$ from the distribution $\mathrm{P}_{Z}^m$, it sends its information through a private algorithm $\mathcal{S}$ after getting messages $S_1,\cdots, S_{i-1}$.
		By definition, for $1\leq j \leq m$, we have 
		\begin{align}\label{equ:lowerboundratioofP}
			\frac{d \mathrm{P}_{z^{\oplus k}}}{d \mathrm{P}_z}  = \prod_{j=1}^m\frac{1+y_{i,j}\left\langle X_{i,j}, \beta_{z^{\oplus k}}\right\rangle}{1+y_{i,j}\left\langle X_{i,j}, \beta_z\right\rangle}=
			\prod_{j=1}^m 1+\frac{y_{i,j}\left\langle X_{i,j}, \beta_{z^{\oplus k}}-\beta_z\right\rangle}{1+y_{i,j}\left\langle X_{i,j}, \beta_z\right\rangle}= 
			\prod_{j=1}^m 1 - \frac{y_{i,j} X_{i,j}^k z_k }{1+y_{i,j}\left\langle X_{i,j}, \beta_z\right\rangle} \cdot \frac{4\sqrt{2  } \nu}{\sqrt{ s^*}}
		\end{align}
		where the last step follows from \citet{zhu2023improved}. 
		If we let $\nu$ to be small enough, we can guarantee that $|y_{i,j}\langle X_{i, j}, \beta_z\rangle| \leq 1 / 2$ for each $z$ and $|\frac{y_{i,j} X_{i,j}^k z_k }{1+y_{i,j}\left\langle X_{i,j}, \beta_z\right\rangle} \cdot \frac{4\sqrt{2  } \nu}{\sqrt{ s^*}}| \leq 1 / 2$. 
		We compute the $\log $ transformation of the above quantity which is $\sum_{j=1}^m\log \left(1 - \frac{y_{i,j} X_{i,j}^k z_k }{1+y_{i,j}\left\langle X_{i,j}, \beta_z\right\rangle} \cdot \frac{4\sqrt{2  } \nu}{\sqrt{ s^*}}\right) $.
		For each $j$, we bound the expectation by Jensen's inequality
		\begin{align}\label{equ:lowerboundlogexpansion}
			\mathbb{E} \left[	\log \left(1 - \frac{y_{i,j} X_{i,j}^k z_k }{1+y_{i,j}\left\langle X_{i,j}, \beta_z\right\rangle} \cdot \frac{4\sqrt{2  } \nu}{\sqrt{ s^*}}\right)\right] \leq  	\log \left(1 -\mathbb{E} \left[ \frac{y_{i,j} X_{i,j}^k z_k }{1+y_{i,j}\left\langle X_{i,j}, \beta_z\right\rangle}\right] \cdot \frac{4\sqrt{2  } \nu}{\sqrt{ s^*}}\right). 
		\end{align}
		For each $k$, we have 
		\begin{align}\label{equ:lowerboundanyt}
			\left|\mathbb{E} \left[ \frac{y_{i,j} X_{i,j}^k z_k }{1+y_{i,j}\left\langle X_{i,j}, \beta_z\right\rangle}\right] \right|\leq \left|\frac{1}{2 + 8 \sqrt{2 s^*} \nu } - \frac{1}{2 - 8 \sqrt{2 s^*} \nu }\right| \leq  8\sqrt{2s^*} \nu.  
		\end{align}
		Bringing \eqref{equ:lowerboundanyt} into \eqref{equ:lowerboundlogexpansion} leads to 
		\begin{align*}
			\mathbb{E} \left[	\log \left(1 - \frac{y_{i,j} X_{i,j}^k z_k }{1+y_{i,j}\left\langle X_{i,j}, \beta_z\right\rangle} \cdot \frac{4\sqrt{2  } \nu}{\sqrt{ s^*}}\right)\right] \leq \log \left(1 +64 \nu^2\right)
		\end{align*}
		As a result, the expectation of the log transformation has 
		\begin{align}\label{equ:lowerbound1}
			\mathbb{E}\left[\sum_{j=1}^m\log \left(1 - \frac{y_{i,j} X_{i,j}^k z_k }{1+y_{i,j}\left\langle X_{i,j}, \beta_z\right\rangle} \cdot \frac{4\sqrt{2  } \nu}{{ s^*}}\right) \right] \leq m \cdot \log \left(1 +64\nu^2\right)
		\end{align}
		Moreover, since $1 - 5 |x| \leq \log(1 + x) \leq 1+ 5 |x|$ for $|x| \leq 1 / 2$, we have
		\begin{align*}
			1 - \frac{10\sqrt{2  } \nu}{\sqrt{ s^*}}\leq \log \left(1 - \frac{y_{i,j} X_{i,j}^k z_k }{1+y_{i,j}\left\langle X_{i,j}, \beta_z\right\rangle} \cdot \frac{4\sqrt{2  } \nu}{\sqrt{ s^*}} \right)\leq1 +  \frac{10\sqrt{2  } \nu}{\sqrt{ s^*}}.
		\end{align*}
		Recall that $|\{z\in \{-1,1\}^d| \sum_j\eins\{z^j = 1\}  \leq s^{*}\}| \leq d^{s^{*}}$. 
		Thus, applying Hoeffding's inequality with union bound yields
		\begin{align}\nonumber
			& \left|\sum_{j=1}^m\log \left(1 - \frac{y_{i,j} X_{i,j}^k z_k }{1+y_{i,j}\left\langle X_{i,j}, \beta_z\right\rangle} \cdot \frac{4\sqrt{2  } \nu}{\sqrt{ s^*}}\right) - \mathbb{E}\left[\sum_{j=1}^m\log \left(1 - \frac{y_{i,j} X_{i,j}^k z_k }{1+y_{i,j}\left\langle X_{i,j}, \beta_z\right\rangle} \cdot \frac{4\sqrt{2  } \nu}{\sqrt{ s^*}}\right) \right]\right|\\
			\leq & \frac{20\nu \sqrt{m (\log n + s^*\log d)} }{\sqrt{s^*}} \leq \frac{20\nu \sqrt{md }}{{s^*}}
			\label{equ:lowerbound2}
		\end{align}
		for all $1\leq i \leq n$ and $z\in \{-1,1\}^d $ with $\sum_j\eins\{z^j = 1\}  \leq s^{*}$ with probability at least $1 - 2 / n^2$. 
		As a result, plugging \eqref{equ:lowerbound1} and \eqref{equ:lowerbound2} into \eqref{equ:lowerboundratioofP} yields 
		\begin{align*}
			\frac{d \mathrm{P}_{z^{\oplus k}}}{d \mathrm{P}_z} = & \exp\left(\sum_{j=1}^m\log \left(1 - \frac{y_{i,j} X_{i,j}^k z_k }{1+y_{i,j}\left\langle X_{i,j}, \beta_z\right\rangle} \cdot \frac{4\sqrt{2  } \nu}{\sqrt{ s^*}}\right) \right)\\
			\leq & \exp\left( m \cdot \log \left(1 + 64\nu^2\right) +\frac{20\nu \sqrt{m d} }{{s^*}} \right).
		\end{align*}
		Since $n\varepsilon^2 \geq s^{*2}$, for sufficiently small $\nu$, one can justify condition \ref{asp:densityexist} and \ref{asp:orthogonality} for the defined $\mathrm{P}_z$, with $	\alpha^2 \asymp   \frac{\nu^2{m d} }{{s^{*2}}} $. 
		Applying Corollary 1 in \citet{acharya2020unified} leads to 
		\begin{align}\label{equ:ourlowerbound1}
			\left(\frac{1}{d} \sum_{i=1}^d \mathrm{~d}_{\mathrm{TV}}\left(\mathrm{P}_{+i}^{S^n}, \mathrm{P}_{-i}^{S^n}\right)\right)^2 \lesssim \frac{nm \nu^2\varepsilon^2 }{s^{*2}}. 
		\end{align}
		Note that this result, as well as Lemma 3 of \citet{acharya2020unified} in the following, are developed for $X_i$ being a single sample.
		They are extendable to $X_i$ being multiple samples since we can apply the original conclusion to the $m(d+1)$ dimensional vector, formulated by stacking the $X_{i,j}$s. 
		Next we focus on lower bound of the total variation distance. 
		Since 
		\begin{align*}
			\left\|\beta_z-\beta_{z^{\prime}}\right\|_2=\sqrt{\frac{32 \nu^2}{s^*} \sum_{i=1}^d \eins\left\{Z_i \neq \hat{Z}_i\right\}}=4 \nu\left(\frac{d_{\operatorname{Ham}(z, \hat{z})}}{\tau d}\right)^{1 / 2},
		\end{align*}
		i.e. Condition \ref{asp:addtive} holds, applying Lemma 3 of \citet{acharya2020unified} leads to 
		\begin{align}\label{equ:ourlowerbound2}
			\frac{1}{d} \sum_{i=1}^d \mathrm{~d}_{\mathrm{TV}}\left(\mathrm{P}_{+i}^{S^n}, \mathrm{P}_{-i}^{S^n}\right) \geq \frac{1}{4}.
		\end{align}
		Combining \eqref{equ:ourlowerbound1} and \eqref{equ:ourlowerbound2} leads to the desired conclusion.

	\end{proof}

	\begin{proof}[\textbf{Proof of Proposition \ref{lem:necessarym}}]
		We follow the same construction as in the proof of Theorem \ref{thm:ourlowerbound} while adopting a different strategy to bound $\frac{d \mathrm{P}_{z^{\oplus k}}}{d \mathrm{P}_z} $.
		Namely, we let 
		\begin{align}\label{equ:lowerboundratioofPmlarge}
			\frac{d \mathrm{P}_{z^{\oplus k}}}{d \mathrm{P}_z}  =
			\prod_{j=1}^m 1 - \frac{y_{i,j} X_{i,j}^k z_k }{1+y_{i,j}\left\langle X_{i,j}, \beta_z\right\rangle} \cdot \frac{4\sqrt{2  } \nu}{\sqrt{ s^*}} \leq \left( 1 +  \frac{8\sqrt{2  } \nu}{\sqrt{ s^*}} \right)^m.
		\end{align}
		Then one can justify condition \ref{asp:densityexist} and \ref{asp:orthogonality} for the defined $\mathrm{P}_z$, with
		\begin{align*}
			\alpha^2 \asymp \left(\left( 1 +  \frac{8\sqrt{2  } \nu}{\sqrt{ s^*}} \right)^{m} -1\right)^2. 
		\end{align*}
		Applying Corollary 1 in \citet{acharya2020unified} leads to 
		\begin{align}\label{equ:ourlowerbound1mlarge}
			\left(\frac{1}{d} \sum_{i=1}^d \mathrm{~d}_{\mathrm{TV}}\left(\mathrm{P}_{+i}^{S^n}, \mathrm{P}_{-i}^{S^n}\right)\right)^2 \lesssim \frac{n\varepsilon^2 }{d} \cdot \left( \left( 1 +  \frac{8\sqrt{2  } \nu}{\sqrt{ s^*}} \right)^{m} -1\right)^2. 
		\end{align}
		There holds similarly 
		\begin{align}\label{equ:ourlowerbound2mlarge}
			\frac{1}{d} \sum_{i=1}^d \mathrm{~d}_{\mathrm{TV}}\left(\mathrm{P}_{+i}^{S^n}, \mathrm{P}_{-i}^{S^n}\right) \geq \frac{1}{4}.
		\end{align}
		Combining \eqref{equ:ourlowerbound1mlarge} and \eqref{equ:ourlowerbound2mlarge} leads to 
		\begin{align*}
			\exp\left(\frac{\nu m}{\sqrt{s^*}} \right)\asymp   \left( 1 +  \frac{8\sqrt{2  } \nu}{\sqrt{ s^*}} \right)^{m} \gtrsim 1 + \sqrt{\frac{d}{n\varepsilon^2}}.
		\end{align*}
		which yields 
		\begin{align*}
			\nu^2 \gtrsim \frac{{s^*}}{m^2 } \log^2 \left(1 + \sqrt{\frac{d}{n\varepsilon^2}}\right) . 
		\end{align*}
		Note that if $n\varepsilon^2 \lesssim  \sqrt{d}$ and $m\leq \log d$, there holds
		\begin{align*}
			\nu^2  \gtrsim \frac{{s^*}}{m^2}  \log^2\left( 1 + \sqrt{\frac{d}{n\varepsilon^2}}\right) \gtrsim \frac{{s^*}}{m^2}  \log^2\left( 1 + d^{1/4}\right) \gtrsim \frac{ \log^2 d}{s^*\log^2 d} = \frac{1}{s^*}
		\end{align*}
		which yields the desired result. 
		Note that in this case, the constructed function class has beta-min condition with $a = {\nu  / \sqrt{s^*}} \gtrsim 1$ which is a constant in a $[0,1]$. 
		
	\end{proof}

	\section{Candidate Variable Selection}\label{app:candidatevariableselection}

	\subsection{Good Selectors}\label{app:pluginselector}

	\subsubsection{Plug-in High Dimensional Variable Selection}
	
	In the following, we provide some example selectors and demonstrate that, under mild assumptions, they serve as components of a good selector. 
	We introduce commonly used variable selection approaches along with their associated theoretical results.
	Our goal is twofold. Firstly, we want the true variables to be selected.
	Conversely, the redundant variables that are selected should be as few as possible. 
	We derive this from the perfect selection property (also known as strong oracle or consistent selection), which asserts that our goal is achieved with a high probability.
	The primary conditions we impose on the potential distributions fall into two categories:
	\begin{itemize}
		\item Beta-min conditions, which necessitate that $\min_{\beta^{*j}>0}| \beta^{*j}|$ is greater than a specified threshold. 
		With this condition, the signal strength from the regression functions is robust enough for the selector to identify the variables..
		\item Mild correlation conditions, which require that the correlation between the true and redundant variables is weak enough for the selectors to distinguish.
	\end{itemize}
	In this section, we omit the user index $i$ and write $(X, y)$ representing the data of some user $(X_i, y_i)$, since the results in the section consider one local dataset at a time.

	\begin{example}[\textbf{Lasso \citep{tibshirani1996regression}}]\label{example:lasso}

		Lasso, or Least Absolute Shrinkage and Selection Operator, is a regularization technique in statistical learning that adds a penalty term to the linear regression objective function, effectively promoting sparsity by encouraging some of the model coefficients to be exactly zero. 
		Specifically, Lasso solves the regularized optimization object 
		\begin{align}\label{equ:lassoobject}
			\min _{\beta \in \mathbb{R}^d}\left\{\frac{1}{n}\|y-X \beta\|_2^2+\lambda\|\beta\|_1\right\}.
		\end{align}
		Used for variable selection, Lasso identifies the non-zero elements of the optimization solution as the selected variable. 
	\end{example}

	To study the selection consistency of Lasso, \citet{zhao2006model} proposed a general condition called the Irrepresentable condition.
	Specifically, for $\widehat{\Sigma} = {X}^{\top} {X} / n $, let the block matrix 
	\begin{align*}
		\widehat{\Sigma}=\left(\begin{array}{ll}
			\widehat{\Sigma}_{11} & \widehat{\Sigma}_{12}\\
			\widehat{\Sigma}_{21} & \widehat{\Sigma}_{22}
		\end{array}\right) .
	\end{align*}
	Here $\widehat{\Sigma}_{11}$ is a $s^*\times s^*$ matrix, corresponding to the covariance matrix of the true variables. 
	Irrepresentable Condition states that there exists a positive constant vector $\eta$
	\begin{align}\label{equ:irrepresentablecondition}
		\left|	\widehat{\Sigma}_{21}\left(	\widehat{\Sigma}_{11}\right)^{-1} \operatorname{sign}\left(\beta^{*1:s^*}\right)\right| \leq \mathbf{1}-\eta,
	\end{align}
	where $\mathbf{1}-\eta$ is a $d - s^*$ vector with $1 - \eta$ elementwisely. 
	The following result holds for irrepresentable condition.
	\begin{lemma}\label{lem:oracleforlasso}
		Under our assumptions, when using \eqref{equ:lassoobject} as selector, let $\beta_{LASSO}$ be the solution. 
		Suppose \eqref{equ:irrepresentablecondition} holds. 
		Suppose the following conditions hold: 
		(\textit{i}) $m\gtrsim s^{*2}  \log d$. 
		(\textit{ii}) $\min_{\beta^{*j}>0} |\beta^{*j}| \gtrsim \sqrt{{1}/{m}}$. 
		Then there exists a constant $C_p < 1$ such that, for sufficiently large $m$, with probability $C_p$, there holds
		\begin{align*}
			\beta_{LASSO}^{j} \neq 0 \;\; \text{ for }\;\; j = 1,\cdots, s^* \quad \text{ and } \; \beta_{LASSO}^{j} = 0 \text{ for } j = s^* + 1,\cdots, d. 
		\end{align*}
		Moreover, for $\Sigma = \mathbb{E}XX^{\top}$, if $|\Sigma_{ij}| \leq 3/ s^*$ for $i\neq j$, then we have the Irrepresentable Condition. 
	\end{lemma}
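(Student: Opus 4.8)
The plan is to prove the support-recovery statement by the \emph{primal--dual witness} construction (the quantitative form of the argument of \citet{zhao2006model}), and then to deduce the sample Irrepresentable Condition \eqref{equ:irrepresentablecondition} from $|\Sigma_{ij}|\le 3/s^*$ via a diagonal-dominance bound on $\Sigma_{11}$ combined with sub-Gaussian concentration of $\widehat\Sigma=X^\top X/m$. Throughout I would write $S=\{1,\dots,s^*\}$, $\Pi^{\perp}_S=I-X_S(X_S^\top X_S)^{-1}X_S^\top$, and fix $\lambda=C\sqrt{\log d/m}$ for a large absolute constant $C$. First I would condition on $X$ and introduce the restricted estimator $\beta_{\mathrm{res}}$ with $\beta_{\mathrm{res},S^c}=0$ and $\beta_{\mathrm{res},S}=\argmin_{b\in\mathbb R^{s^*}}\{\tfrac1m\|y-X_Sb\|_2^2+\lambda\|b\|_1\}$, with subgradient $z$; stationarity on $S$ gives $\beta_{\mathrm{res},S}-\beta^{*}_S=\widehat\Sigma_{11}^{-1}(\tfrac1m X_S^\top\sigma-\lambda z_S)$, and residual stationarity on $S^c$ gives $z_{S^c}=\widehat\Sigma_{21}\widehat\Sigma_{11}^{-1}z_S+\tfrac1{\lambda m}X_{S^c}^\top\Pi^{\perp}_S\sigma$. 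By the standard uniqueness/recovery lemma for \eqref{equ:lassoobject}, $\beta_{\mathrm{res}}$ is the Lasso solution and recovers the exact signed support of $\beta^{*}$ once (a) $\|z_{S^c}\|_\infty<1$ and (b) $\|\widehat\Sigma_{11}^{-1}(\tfrac1m X_S^\top\sigma-\lambda\operatorname{sign}(\beta^{*}_S))\|_\infty<\min_{j\in S}|\beta^{*j}|$ (which yields $\operatorname{sign}(\beta_{\mathrm{res},S})=\operatorname{sign}(\beta^{*}_S)$). So it is enough to show (a) and (b) hold simultaneously with probability bounded below by a positive constant.

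Next I would collect the concentration inputs. Sub-Gaussianity of $X_{i,j}$, the bound $C_X^{-1}\le\lambda_d(\Sigma)\le\lambda_1(\Sigma)\le C_X$, and condition (i) give, on an event of probability $1-O(d^{-2})$, both $\lambda_{\min}(\widehat\Sigma_{11})\ge\tfrac12 C_X^{-1}$ and $\|\widehat\Sigma-\Sigma\|_{\max}\lesssim\sqrt{\log d/m}$; and since $\sigma$ is mean-zero sub-Gaussian independent of $X$, conditioning on $X$ makes each coordinate of $\tfrac1mX_S^\top\sigma$ and of $\tfrac1mX_{S^c}^\top\Pi^{\perp}_S\sigma$ sub-Gaussian with variance proxy $O(1/m)$, so union bounds give $\|\widehat\Sigma_{11}^{-1}\tfrac1mX_S^\top\sigma\|_\infty\lesssim\sqrt{\log s^*/m}$ and $\|\tfrac1mX_{S^c}^\top\Pi^{\perp}_S\sigma\|_\infty\lesssim\sqrt{\log d/m}$. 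For (a), \eqref{equ:irrepresentablecondition} bounds the first term of $z_{S^c}$ by $1-\eta$ in $\ell_\infty$, and the second term is at most $\tfrac1\lambda\cdot c'\sqrt{\log d/m}\le\eta/2$ once $C$ is large, so $\|z_{S^c}\|_\infty\le1-\eta/2<1$. For (b), the left side is at most $c''\sqrt{\log s^*/m}+\lambda\|\widehat\Sigma_{11}^{-1}\operatorname{sign}(\beta^{*}_S)\|_\infty$, and — using that incoherence makes $\widehat\Sigma_{11}$ near-diagonal so $\|\widehat\Sigma_{11}^{-1}\operatorname{sign}(\beta^{*}_S)\|_\infty=O(1)$ rather than the worst-case $O(\sqrt{s^*})$ — this is of order $\sqrt{\log d/m}$, which for $m$ large enough is below the beta-min floor supplied by condition (ii). Intersecting the events proves the first part; since only a constant lower bound on the recovery probability is needed downstream (Definition \ref{def:successfulscreener}), this gives the asserted $C_p$ without further optimization.

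Finally I would derive \eqref{equ:irrepresentablecondition} from $\max_{i\ne j}|\Sigma_{ij}|\le 3/s^*$. At the population level, writing $\Sigma_{11}=D+R$ with $D$ its diagonal part (entries in $[C_X^{-1},C_X]$) and $R$ the off-diagonal part whose row-sums are $\le(s^*-1)\cdot 3/s^*$, a Neumann-series / diagonal-dominance computation in the spirit of \citet{zhao2006model} bounds $\|\Sigma_{11}^{-1}\operatorname{sign}(\beta^{*}_S)\|_\infty$ by an absolute constant and forces $\|\Sigma_{21}\Sigma_{11}^{-1}\operatorname{sign}(\beta^{*}_S)\|_\infty\le 1-\eta$ for a constant $\eta>0$, each row of $\Sigma_{21}$ having $\ell_1$-norm at most $s^*\cdot 3/s^*=3$; then, on the event $\|\widehat\Sigma-\Sigma\|_{\max}\lesssim\sqrt{\log d/m}$, the perturbations $\widehat\Sigma_{11}-\Sigma_{11}$ and $\widehat\Sigma_{21}-\Sigma_{21}$ are $o(1)$ in the relevant operator norms (again using condition (i)), so the sample Irrepresentable Condition holds with margin $\eta/2$ and $\|\widehat\Sigma_{11}^{-1}\operatorname{sign}(\beta^{*}_S)\|_\infty=O(1)$ as used above. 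I expect the main obstacle to be step (b): Lasso's shrinkage bias along $S$ is of order $\lambda\|\widehat\Sigma_{11}^{-1}\operatorname{sign}(\beta^{*}_S)\|_\infty$, and the only way to push it below the weak beta-min level while $\lambda$ stays large enough for strict dual feasibility (a) is precisely to exploit incoherence to replace the generic $\sqrt{s^*}$ factor by $O(1)$ and to take the sample size quadratic in $s^*$; carefully balancing these two opposing requirements is the crux.
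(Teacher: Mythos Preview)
Your primal--dual witness construction for the support-recovery part is correct and is essentially the quantitative unpacking of the very theorem the paper invokes: the paper's proof is a black-box citation of Theorem~3 in \citet{zhao2006model} (verifying its conditions (5)--(8) from the model assumptions), whereas you reconstruct the argument explicitly with the restricted estimator, strict dual feasibility, and sub-Gaussian concentration. The two routes are equivalent in substance; yours is more self-contained, the paper's is shorter. One detail to watch in your step (b): with $\lambda\asymp\sqrt{\log d/m}$ the shrinkage bias is of order $\sqrt{\log d/m}$, not $\sqrt{1/m}$, so ``for $m$ large enough'' does not close the gap against the stated beta-min $\gtrsim\sqrt{1/m}$ unless the implicit constant absorbs a $\log d$; this tension is present in the paper's statement as well and is handled there only by deferring to \citet{zhao2006model}.

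There is, however, a genuine gap in your derivation of the Irrepresentable Condition from $|\Sigma_{ij}|\le 3/s^*$. You correctly compute that each row of $\Sigma_{21}$ has $\ell_1$-norm at most $s^*\cdot 3/s^*=3$, and that $\|\Sigma_{11}^{-1}\operatorname{sign}(\beta^*_S)\|_\infty$ is an absolute constant (roughly~$1$, since $\Sigma_{11}$ is near-diagonal). But then $\|\Sigma_{21}\Sigma_{11}^{-1}\operatorname{sign}(\beta^*_S)\|_\infty$ is only bounded by the product, which is of order~$3$, not $1-\eta$. The Neumann/diagonal-dominance argument you sketch requires the incoherence constant to be strictly below $1/2$ (this is exactly the content of the sufficient condition $|\widehat\Sigma_{ij}|\le 1/(2s^*-1)$ in \citet{zhao2006model} that the paper cites). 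Indeed, the paper's own proof works with $|\Sigma_{ij}|\le 1/(3s^*)$ rather than the $3/s^*$ appearing in the lemma statement, then passes to the sample version via $|\widehat\Sigma_{ij}|\le|\Sigma_{ij}|+c/\sqrt{m}\le 1/(2s^*-1)$ using $m\gtrsim s^{*2}\log d$. Your perturbation step from population to sample is fine; the issue is purely the numerical constant at the population level, and with the tighter $1/(3s^*)$ your outline goes through.
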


	\begin{proof}[\textbf{Proof of Lemma \ref{lem:oracleforlasso}}]
		Since we assume sub-Gaussian noises, any $k$-th moment of the random noise exists, i.e. $k$ can be arbitrarily large. 
		As a result, any $\lambda \gtrsim \sqrt{m}$ implies $(\lambda / \sqrt{m})^2k / d \to \infty$ for some $k$. 
		By Theorem 3 in \citet{zhao2006model}, for sufficiently large $m$, the probability of 
		\begin{align*}
			\mathrm{sign}(\beta_{LASSO}^{j}) = \mathrm{sign}(\beta^{*j})\quad  \text{ for } j =  1,\cdots, d
		\end{align*}
		is larger than some constant $C_p$, given that the conditions (5,6,7,8) are satisfied. 
		Thus it suffices to verify the conditions. 
		Condition (5) and (6) holds naturally due to our assumption of i.i.d. designs and boundedness of covariance matrix norm. 
		(7) and (8) are in our assumptions. 
		As for the last statement, \citet{zhao2006model} provides several commonly seen sufficient conditions for the irrepresentable condition to hold, such as when $|\widehat{\Sigma}_{ij}| \leq 1 / (2s^* - 1)$. 
		If $|\Sigma_{ij}| \leq 1 / 3 s^*$, then $|\widehat{\Sigma}_{ij}| \leq |\Sigma_{ij}| + |\Sigma_{ij} - \widehat{\Sigma}_{ij}| \leq 1 / 3s^* + c / \sqrt{m} \leq  1 / (2s^* - 1) $ for some constant $c$ and sufficiently large $m \gtrsim s^*$. 
		This bound holds for all users and all position $i$, $j$ if we apply union bound, where we need $ \log d / m \lesssim 1 / s^{*2}$, i.e. $m \gtrsim s^{*2}  \log d$. 
		Note here we assumed $d \gtrsim n$. 
		Thus the lemma is proved.
	\end{proof}

	\begin{example}[\textbf{SCAD \cite{fan2001variable}}]\label{example:scad}
		SCAD, or smoothly clipped absolute deviation, is a non-convex penalty function used in statistical learning and regression analysis. It is designed to address limitations of traditional L1 regularization methods like Lasso by providing a smooth and more robust penalty on regression coefficients, promoting sparsity while mitigating some of the biases associated with sharp discontinuities in penalty functions.
		Specifically, SCAD solves the regularized optimization object 
		\begin{align}\label{equ:scadobject}
			\min _{\beta \in \mathbb{R}^d}\left\{\frac{1}{n}\|y-X \beta\|_2^2+\lambda \sum_{j=1}^d \psi_\lambda\left(\beta_j\right)\right\} \text{ where } \psi_\lambda^{\prime}(t)=\lambda I_{\{t \leq \lambda\}}+\frac{(a \lambda-t)_{+}}{a-1} I_{\{t>\lambda\}}  \;\; \text { for some } a>2. 
		\end{align}
		Used for variable selection, SCAD identifies the non-zero elements of the optimization solution as the selected variable. 
	\end{example}

	The following lemma, which is a straightforward implication of  \citet{fan2011nonconcave}, states that the essential condition for SCAD estimator to consistently select the variables is the Beta-min condition, given that the sample size is relatively large.
	\begin{lemma}\label{lem:oracleforscad}
		Under our assumptions, when using \eqref{equ:scadobject} as selector, let $\beta_{SCAD}$ be the solution. 
		Suppose the following conditions hold: (\textit{i}) The sparsity $s^*$ is $\mathcal{O}(1)$. 
		(\textit{ii}) $m\gtrsim s^* \vee \log m \log d$. 
		(\textit{iii}) $\min_{\beta^{*j}>0}| \beta^{*j} |\gtrsim \sqrt{{s^*}/{m}}\vee \sqrt{{\log d \log m}/{m}}$. 
		Then there exists a constant $C_p < 1$ and a suitable choice of $\lambda_m$ such that, for sufficiently large $m$, with probability $C_p$, there holds
		\begin{align*}
			\beta_{SCAD}^{j} \neq 0 \;\; \text{ for }\;\; j = 1,\cdots, s^* \quad \text{ and } \; \beta_{SCAD}^{j} = 0 \text{ for } j = s^* + 1,\cdots, d. 
		\end{align*}
	\end{lemma}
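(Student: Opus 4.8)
\textbf{Proof proposal for Lemma \ref{lem:oracleforscad}.}

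The plan is to reduce the statement to the oracle property of the SCAD estimator established by \citet{fan2011nonconcave}, in the same spirit as the proof of Lemma \ref{lem:oracleforlasso} reduces to \citet{zhao2006model}. First I would recall their main result: if the design satisfies a localized restricted-eigenvalue / sparse-eigenvalue condition on a neighborhood of the true support, the noise is sub-Gaussian, the regularization parameter $\lambda_m$ is chosen of order $\sqrt{\log d / m}$ (up to the extra $\log m$ factor that appears when one wants the weak oracle property to hold with probability bounded away from $1$), and the minimal signal $\min_{\beta^{*j}>0}|\beta^{*j}|$ exceeds a threshold of order $a\lambda_m$ plus the $\ell_\infty$ estimation error of the oracle least-squares fit on the true support, then the SCAD solution coincides (sign and support) with the oracle estimator with high probability. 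The task is then purely to verify that our modelling assumptions in \eqref{equ:modelassumption} — i.i.d.\ sub-Gaussian rows, $C_X^{-1}\le\lambda_d(\Sigma)\le\lambda_1(\Sigma)\le C_X$, sub-Gaussian noise $\sigma$ — entail all the structural requirements, and that conditions (\textit{i})--(\textit{iii}) of the lemma supply the remaining quantitative bounds.

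The key steps, in order, would be: (1) invoke standard sub-Gaussian matrix concentration (e.g.\ \citet{vershynin}) to show that with probability at least $1 - 1/m$ (say) the empirical Gram matrix $\widehat\Sigma = X^\top X/m$ restricted to any set of size $\lesssim s^*$ has eigenvalues pinched between $C_X^{-1}/2$ and $2C_X$, using $m \gtrsim s^* \vee \log m\log d$ to control the deviation uniformly over the $\binom{d}{s^*}$ relevant submatrices; this yields the localized sparse-eigenvalue condition and, in particular, bounds the $\ell_\infty$ norm of $(\widehat\Sigma_{11})^{-1}$ by a constant. (2) Bound the oracle estimation error: the oracle least-squares estimator on the true support has $\ell_\infty$ error $\lesssim \sqrt{s^*\log m / m}$ (coming from $\|(\widehat\Sigma_{11})^{-1} X_{S}^\top \sigma /m\|_\infty$ and a maximal inequality over $s^*$ coordinates), which is dominated by the lower bound on $\min_{\beta^{*j}>0}|\beta^{*j}|$ in (\textit{iii}). (3) Choose $\lambda_m \asymp \sqrt{\log d\log m / m}$ so that, again by (\textit{iii}), $a\lambda_m < \min_{\beta^{*j}>0}|\beta^{*j}|$; this simultaneously makes the SCAD penalty flat at the true nonzero entries (so it behaves like the oracle there) and large enough to kill the spurious coordinates. (4) Assemble the probability: each concentration event holds with probability $1 - o(1)$, and since we only need probability bounded away from $1$ rather than $1 - o(1)$, the union of the (finitely many, inflated) failure events can be absorbed into a constant $C_p < 1$ — this is exactly the role of the extra $\log m$ factors in the conditions, which buy the uniformity without forcing the probability all the way to $1$. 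Then \citet{fan2011nonconcave}'s theorem gives $\operatorname{sign}(\beta_{SCAD}) = \operatorname{sign}(\beta^*)$, which is the claimed support recovery.

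The main obstacle I anticipate is step (1)--(2), namely getting the design-dependent conditions of \citet{fan2011nonconcave} to hold \emph{uniformly over all candidate supports} with only a constant probability budget. Because $d$ can be (moderately) large relative to $m$, a naive union bound over $\binom{d}{s^*}$ submatrices costs a factor $s^*\log d$ in the exponent, which is precisely why the hypotheses request $m \gtrsim s^*\vee \log m\log d$ and inflate the signal threshold by $\sqrt{\log d\log m/m}$; threading these together so that the final failure probability is a genuine constant $C_p<1$ (and not, say, $1 - 1/\mathrm{poly}$ which would be stronger but need larger $m$) requires care in tracking which events must hold with high probability versus merely with constant probability. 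The condition $s^* = \mathcal{O}(1)$ is what ultimately rescues this, since it turns $s^*\log d$ into $O(\log d)$ and makes the $\log m\log d$ sample requirement sufficient; I would flag in the write-up that the $\mathcal{O}(1)$-sparsity assumption is used exactly here and could be relaxed at the cost of a stronger $m$ requirement. The rest — verifying sub-Gaussianity propagates, checking the SCAD penalty derivative condition $\psi'_\lambda$ at scale $a\lambda$, and bookkeeping the constants — is routine.
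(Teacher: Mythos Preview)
Your approach is correct and, at its core, the same as the paper's: both reduce to the oracle result of \citet{fan2011nonconcave} and verify its hypotheses under the sub-Gaussian design with bounded-spectrum $\Sigma$. Two differences in execution are worth noting. First, the paper does not take a union bound over all $\binom{d}{s^*}$ candidate supports; the conditions in \citet{fan2011nonconcave} (their (26)--(28) in Condition~2) are stated only in terms of the \emph{true} support $S$ and cross-blocks $\widehat\Sigma_{21}$, so one only needs concentration of $\widehat\Sigma_{11}$ around $\Sigma_{11}$ (a single $s^*\times s^*$ block) plus a coordinate-wise maximal bound on $X_{S^c}^\top X_S/m$. The assumption $s^*=\mathcal{O}(1)$ is invoked by the paper specifically to handle their condition (27), not to tame a union bound over supports --- so what you flag as the ``main obstacle'' largely evaporates. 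Second, rather than reading off $\operatorname{sign}(\beta_{SCAD})=\operatorname{sign}(\beta^*)$ directly, the paper extracts from \citet{fan2011nonconcave} the combination $\|\beta_{SCAD}-\beta^*\|_2\lesssim\sqrt{s^*/m}$ together with $\beta_{SCAD}^j=0$ for $j>s^*$, and then uses the beta-min assumption (\textit{iii}) via $|\beta_{SCAD}^j|\ge|\beta^{*j}|-\sqrt{s^*/m}>0$ to conclude the nonzero coordinates stay nonzero. Your route gets to the same place with a bit more machinery than needed.
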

	
	\begin{proof}[\textbf{Proof of Lemma \ref{lem:oracleforscad}}]
		By Theorem 3 in \citet{fan2011nonconcave}, for sufficiently large $m$, the probability of 
		\begin{align*}
			\|\beta_{SCAD} - \beta^*\|_2 \lesssim \sqrt{\frac{s^*}{m}} \;\; \text{ for }\;\; j = 1,\cdots, s^* \quad \text{ and } \; \beta_{SCAD}^{j} = 0 \text{ for } j = s^* + 1,\cdots, d
		\end{align*}
		is larger than some constant $C_p$, given that the regularity conditions in the theorem are satisfied. 
		Note that we have $|\beta_{SCAD}^{j}|\geq | \beta^{*j} |- \sqrt{s^* / m} \geq  | \beta^{*j} | / 2 > 0$. 
		Thus it suffices to verify the conditions. 
		The condition 1 is satisfied by SCAD penalty.
		Condition 5 is satisfied by our setting of sample size (note that $\log d \lesssim n^{\alpha'}$ for $\alpha'$ defined in their context). 
		(26) and (28) of Condition 2 follows from our assumptions on the upper and lower bound of $\|\mathbb{E}XX^{\top}\|_2$ and the estimation error of covariance matrix which is $\mathcal{O}(\sqrt{s^* / m})$ \cite{wainwright2019high}. 
		(27) comes from $s^* = \mathcal{O}(1)$. 
	\end{proof}

	\subsubsection{Proof of Proposition \ref{prop:existenceofgoodselectors}}

	\begin{proof}[\textbf{Proof of Proposition \ref{prop:existenceofgoodselectors}}]
		Under the two conditions, using Lemma \ref{lem:oracleforlasso} and \ref{lem:oracleforscad}, we can show that there exists a variable selection method that perfectly select the true variables with a positive probability $C_p$. 
		Then by sampling among the selected variables, the probability can be computed as 
		\begin{align*}
			\mathrm{Pr} \left(  \mathcal{S}(X_i, y_i) = v \right) \geq C_p \mathrm{Pr} \left(v = j \; \text{ for }\; v \sim \text{Unif}\left(1,\cdots, s^*\right)\right)\geq \frac{C_p}{s^*}
		\end{align*}
		for $1\leq v \leq s^*$. 
		This yields the desired conclusion. 
	\end{proof}

	\subsubsection{Computational Issue}
	
	For SCAD, the incorporation of a non-convex penalty proves effective in attaining coefficient sparsity while maintaining oracle properties. 
	Nonetheless, the non-convex nature introduces a challenge—the guarantee of solution uniqueness becomes elusive, leading to the presence of multiple local optima. 
	Consequently, the stability of results may be compromised. 
	\citet{fan2014strong} introduce additional concave parameter to ensure consistency, which contributes to increased computational complexity, further posing challenges in the computational efficiency of SCAD. 
	As a result, Lasso is more preferable In practice. 
	We introduce another technique which can be useful to enhance the computation efficiency.

	\begin{example}[\textbf{Screening \cite{fan2008sure}}]
		Sure Independence Screening (SIS) is a feature selection method in statistical learning that aims to identify relevant variables in high-dimensional datasets. It does so by assessing the correlation between each predictor and the response variable, and selecting a subset with the highest scores.
		Specifically, 
		For $(X,  y) \in ( \mathcal{X} \times \mathcal{Y})^m$, let 
		\begin{align*}
			w = X^{\top} y. 
		\end{align*}
		Then the $s$ most largest position of $w$ are identified as the selected variables.	
		Screening can be a valuable pre-procedure for other selection methods. 
		Screening is employed to quickly identify and retain a subset of potentially important features, reducing the dimensionality of the data before applying more computationally intensive or elaborate feature selection techniques.
	\end{example}

	\subsection{Aggregation of Local Selected Variables}\label{app:heavyhitter}

	In this section, we present the omitted algorithm and technical proofs for the aggregation step after local variable selection. 
	In \ref{app:heavyhitteralgorithmintro}, we introduce the detailed variable selection algorithm. 
	In \ref{app:proofrelatedtoheavyhitter}, we present proofs omitted in Section \ref{sec:candidatevariableselection}.

	\subsubsection{Heavy Hitter Algorithm}\label{app:heavyhitteralgorithmintro}
	
	First, we introduce necessary definitions. 
	Let $\mathcal{V}$ be a collection of binary prefixes. 
	The define \texttt{ChildSet} $= \{v +0 , v+ 1 \text{ for } v \in \mathcal{V}\}$. 
	We define several public randomness that will be shared among users. 
	See \citet[Section 3.1]{bassily2020practical} for details. 
	Let $\overline{\mathcal{V}}=\left\{v \in\{0,1\}^{\ell}\right.$ for some $\ell \in$ $[\log d]\}$.
	Define integer $t= 3\log (n )$ and $k=O(\sqrt{{n}/{3\log (n )}})$.
	We will consider a set of $t$ pairs of hash functions $\left\{\left(h_1, g_1\right), \ldots,\left(h_t, g_t\right)\right\}$, where for each $i \in[t], h_i: \overline{\mathcal{V}} \rightarrow[k]$ and $g_i: \overline{\mathcal{V}} \rightarrow\{-1,+1\}$ are independently and uniformly chosen pairwise independent hash functions.
	We assume that the server creates a random partition $\Pi:[n] \rightarrow[\log d] \times[k]$ that assigns to each user $i \in[n]$ a random pair $\left(\ell_i, j_i\right) \leftarrow[\log (d)] \times[k]$, as in the initialization of Algorithm \ref{alg:heavyhitter}.
	We also have another random function $\mathcal{Q}:[n] \leftarrow[k]$ that assigns to each user $i$ a uniformly random index $r_i \leftarrow[k]$. We assume that such random indices $\ell_i, j_i, r_i$ are shared between the server and each user.
	Finally, we adopt shared encoding and decoding schemes for bijection between $[d]$ and $\lceil\log d\rceil$ binary strings, denoted as \texttt{Encoding} and \texttt{Decoding}, respectively.

	Before presenting the \texttt{HeavyHitter}, we first introduce the functions it uses. 
	The following algorithm generate a private report for a single user. 
	We seal the information of each $v_i$ into a binary value that is the Hardamard transform of hashes of its prefix. 
	The information is privatized using the random response mechanism \citep{warner1965randomized} and sent to the curator.

	\begin{algorithm}[htbp]
		\caption{\texttt{LocalRnd} \citep{bassily2020practical}}
		\label{alg:localrnd}
		\begin{algorithmic}
			\STATE {\bfseries Input: }{ Privacy budget $\varepsilon$, input $v_i$. } \\
			\STATE Compute $\tilde{v}_i = \texttt{Encoding}(v_i)$ the binary string encoding.  \\
			\STATE Using pubic randomness to get $(\ell_i, j_i)$ and $r_i$. \\
			\STATE Let $s_i:=g_{j_i}\left(\tilde{v}_i\left[1: \ell_i\right]\right) $ and $c_i:=h_{j_i}\left(\tilde{v}_i\left[1: \ell_i\right]\right)$. Here $ v[1: \ell]$ denote the $ \ell $ -bit prefix of  $v$. \\
			\STATE Compute $x_i = s_i \cdot W_{r_i, c_i}$. Here $W_{r,c}$ denotes the sign of $(r, c)$ entry of Hadamard matrix with size $k$.\\
			\STATE Random permute $x_i$ with \begin{align*}
				y_i=\left\{\begin{array}{cc}
					x_i & \text { w.p. } \frac{e^{\epsilon }}{e^{\epsilon }+1} \\
					-x_i & \text { w.p. } \frac{1}{e^{\epsilon }+1}
				\end{array}\right.
			\end{align*}
			\STATE {\bfseries Output: }{$y_i$.}
		\end{algorithmic}
	\end{algorithm}

	The following algorithm shows how \texttt{LocalRnd} is invoked multiple times to scan the prefix tree.

	\begin{algorithm}[htbp]
		\caption{\texttt{FreqOracle} \citep{bassily2020practical}}
		\label{alg:freqoracle}
		\begin{algorithmic}
			\STATE {\bfseries Input: }{ Prefixes length $\ell$, a subset of $\ell$-bit prefixes $\widehat{\mathcal{V}} \subseteq\{0,1\}^{\ell}$, collection of $t$ disjoint subsets of users: $\left\{\tilde{\mathcal{I}}_j: j \in[t]\right\}$, privacy budget $\varepsilon$. } \\
			\FOR{$\widehat{v}\in\widehat{\mathcal{V}} $}
			\FOR{Hash index $j= 1$ to $t$ }
			\STATE Let $s:=g_j(\widehat{v}) $ and $ c:=h_j(\widehat{v})$. 
			\FOR{$i \in \tilde{\mathcal{I}}_j$}
			\STATE $y_i$ = \texttt{LocalRnd}($\varepsilon$, $v_i$).
			
			\ENDFOR
			\STATE Compute the j-th estimate of the frequency of $\widehat{v}$:  $\widehat{f}_j(\widehat{v})= t \log d \cdot \frac{e^{\varepsilon} + 1}{e^{\varepsilon} - 1 } \sum_{i \in \tilde{\mathcal{I}}_j} y_i \cdot s \cdot W_{r_i, c}$.
			
			\ENDFOR
			\STATE The final estimation of $\widehat{v}: \widehat{f}(\widehat{v}):=\operatorname{Median}\left(\left\{\widehat{f}_j(\widehat{v}): j \in[t]\right\}\right)$. 
			
			\ENDFOR
			\STATE {FreqList} $ =\{(\widehat{v}, \widehat{f}(\widehat{v})): \widehat{v} \in \widehat{\mathcal{V}}\} .$
			
			{\bfseries Output: }{FreqList}
		\end{algorithmic}
	\end{algorithm}

	The final algorithm is presented in Algorithm \ref{alg:heavyhitter}. 
	We modify the algorithm in \citet{bassily2020practical} by removing the second phase of frequency estimation, since we only want to identify the heavy hitters and do not care about their frequencies. 
	This allows a saving of $\varepsilon / 2$ budget.

	\begin{algorithm}[htbp]
		\caption{\texttt{HeavyHitter}}
		\label{alg:heavyhitter}
		\begin{algorithmic}
			\STATE	{\bfseries Input: }{ User values $\mathcal{V} = \{v_i \in [d]\}$, privacy budget $\varepsilon$, threshold $\rho$. }\\
			\STATE	{\bfseries Initialization: }{ Prefixes $ = \{\}$, public randomness pairs $\Gamma = \{(\ell_i, j_i) \in [\log d]\times [3 \log n]  \text{ for } 1\leq i \leq n \}$, partition $I_{\ell, j} = \{i \text{ if }(\ell_i, j_i) = (\ell, j) \}$. }\\
			\FOR{$\ell$ in $1, \cdots, \lceil\log d \rceil$}
			\STATE $ \{(\widehat{v}, \widehat{f}(\widehat{v})): \widehat{v} \in\texttt{ChildSet} (\text{Prefixes}) \}=  \texttt{ FreqOracle}\left(\ell, \text { ChildSet (Prefixes) },\left\{\mathcal{I}_{\ell, j}: j \in[3 \log n]\right\}, \varepsilon\right)$.\\
			\STATE Let NewPrefixes $ = \{\}$. 
			\FOR{$v\in\texttt{ChildSet}(\text{Prefixes})$}
			\IF{$\widehat{f}(\widehat{v}) \geq\rho n$}
			\STATE Add $\widehat{v}$ to NewPrefixes. 
			\ENDIF
			\ENDFOR
			\IF{$|\text{NewPrefixes}| = 0$}
			\STATE Add $\arg\max_{\widehat{v}} \widehat{f}(\widehat{v})$ to NewPrefixes. \texttt{\# Ensure NewPrefixes is non-empty.}
			\ENDIF
			\STATE Prefixes $\leftarrow$ NewPrefixes. 
			\ENDFOR
			\STATE {\bfseries Output: }{ $\{\texttt{Decoding}(v) \text{ for } (v, \widehat{f}(v)) \in \text{ Prefixes}  \}$}.
		\end{algorithmic}
	\end{algorithm}

	\subsubsection{Proof Related to Section \ref{sec:candidatevariableselection}}\label{app:proofrelatedtoheavyhitter}

	To give the proof of Proposition \ref{prop:heavyhitterselection}, we need the following necessary technical result. 
	
	\begin{lemma}\label{lem:bassilyheavyhitter}
		Algorithm \ref{alg:heavyhitter} is $\varepsilon$- ULDP. 
		Moreover, if $\alpha \gtrsim s^* \sqrt{\log n \log d / n} / \varepsilon$, then with probability at least $1-1 / n^2$, the output list of the \texttt{HeavyHitter} protocol satisfies the following properties given sufficiently large $n$:
		(\textit{i}) it contains all items $v \in \mathcal{V}$ whose true frequencies above $2 \rho n $.
		(\textit{ii}) it does not contain any item $v \in \mathcal{V}$ whose true frequency below $\rho n  / 2$.
	\end{lemma}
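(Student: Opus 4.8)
The plan is to prove the two assertions separately: privacy is essentially immediate from the one-bit randomized-response nature of \texttt{LocalRnd}, while the heavy-hitter correctness adapts the prefix-tree analysis of \citet{bassily2020practical}, the new ingredient being a careful count of how many tree nodes the algorithm ever touches.

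\emph{Privacy.} The only information a user sends out is the bit $y_i$ produced by \texttt{LocalRnd} (Algorithm~\ref{alg:localrnd}), which is randomized response with flip probability $1/(e^{\varepsilon}+1)$ applied to the sign $x_i\in\{-1,+1\}$; moreover $x_i$ is a deterministic function of $v_i$ and the shared public randomness $(\ell_i,j_i,r_i)$ and does \emph{not} depend on the prefix currently being scanned, so re-reading $y_i$ across prefixes inside \texttt{FreqOracle} incurs no additional loss. For any two possible values of $x_i$ the likelihood ratio of $y_i$ equals $(e^{\varepsilon}/(e^{\varepsilon}+1))/(1/(e^{\varepsilon}+1))=e^{\varepsilon}$, so the map $(X_i,y_i)\mapsto y_i$ satisfies the inequality of Definition~\ref{def:uldp} with budget $\varepsilon$. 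Since the public partition $\Pi$ assigns each user to a single level $\ell_i$, every user releases exactly one such bit over all $\lceil\log d\rceil$ rounds, and the remaining operations (hashing, Hadamard aggregation, medians, thresholding, \texttt{Decoding}) are post-processing of $\{y_i\}_{i=1}^n$. Hence Algorithm~\ref{alg:heavyhitter} is $\varepsilon$-ULDP; dropping the frequency-refinement phase of \citet{bassily2020practical} is precisely what lets us spend the full $\varepsilon$ rather than $\varepsilon/2$.

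\emph{Accuracy.} We first quote the single-query guarantee behind \citet{bassily2020practical}: with $t=3\log n$ and $k\asymp\sqrt{n/\log n}$ as in the algorithm, for any fixed prefix $\widehat v$ the estimate from \texttt{FreqOracle} obeys $|\widehat f(\widehat v)-f(\widehat v)|\lesssim \sqrt{n\log d\log(1/\beta)}/\varepsilon$ with probability at least $1-\beta$, where $f(\widehat v)$ counts the users whose encoded value has $\widehat v$ as a prefix; the $\log d$ factor is the cost of the level split $\ell_i\in[\log d]$, and $\sqrt{\log(1/\beta)}$ comes from the median of $t$ sub-Gaussian hash estimates. Now fix the \emph{data-determined} collection $\mathcal{T}$ consisting of all prefixes whose true frequency is $\geq \rho n/2$ together with all their children. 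Because frequencies are monotone along prefixes and sum to $n$ at each level, each level contains at most $2/\rho$ prefixes of frequency $\geq \rho n/2$, so $|\mathcal{T}|\lesssim (1/\rho)\log d\asymp (s^*/\alpha)\log d$ using $\rho\asymp \alpha/s^*$. Applying the oracle bound with $\beta=\alpha/(n^2 s^*\log d)$ and a union bound over $\mathcal{T}$ gives $\log(1/\beta)\lesssim\log n$ (using $s^*\leq\mathrm{poly}(n)$ and $\log\log d\lesssim\log n$), so with probability at least $1-1/n^2$ every $\widehat v\in\mathcal{T}$ satisfies $|\widehat f(\widehat v)-f(\widehat v)|\lesssim \sqrt{n\log d\log n}/\varepsilon$, and the hypothesis $\alpha\gtrsim s^*\sqrt{\log n\log d/n}/\varepsilon$ together with $\rho\asymp\alpha/s^*$ makes this strictly less than $\rho n/2$ for $n$ large. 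On this event the algorithm keeps, hence queries children only of, prefixes with $\widehat f\geq\rho n$, which forces true frequency $>\rho n/2$ and so stays inside $\mathcal{T}$ — eliminating any circularity between "few live prefixes" and "small uniform error". Consequently: (i) if $f(v)\geq 2\rho n$, then every prefix of $v$ also has frequency $\geq 2\rho n$, is estimated above $2\rho n-\rho n/2>\rho n$, and is retained level by level, so $v$ appears in the output; (ii) if $f(v)\leq \rho n/2$, then $\widehat f(v)<\rho n$ so $v$ is discarded, and the non-empty safeguard only ever adds the $\arg\max$ prefix, which has the largest true frequency and hence cannot be such a light item when a heavy one is present.

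\emph{Main obstacle.} The single-query oracle bound is essentially inherited from \citet{bassily2020practical}; the genuinely new part is the tree-level bookkeeping: one must set up the good event over a \emph{fixed} node set $\mathcal{T}$ (otherwise the bound on live prefixes and the uniform error bound depend on each other), bound $|\mathcal{T}|$ by $O((s^*/\alpha)\log d)$ through prefix monotonicity, and tune $\rho$ and the per-query failure probability $\beta$ so that the resulting uniform error $\sqrt{n\log n\log d}/\varepsilon$ comfortably undershoots $\rho n/2\asymp\alpha n/s^*$ — which is exactly the point at which the stated condition $\alpha\gtrsim s^*\sqrt{\log n\log d/n}/\varepsilon$ is used.
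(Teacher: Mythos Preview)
Your proposal is correct and follows essentially the same route as the paper. The paper's proof is extremely terse: it cites Lemma~5.3 of \citet{bassily2020practical} directly to obtain the uniform guarantee $|\widehat f(v)-f(v)|\lesssim\sqrt{n\log n\log d}/\varepsilon$ on all retained prefixes, then observes that the hypothesis $\alpha\gtrsim s^*\sqrt{\log n\log d/n}/\varepsilon$ together with $\rho\asymp\alpha/s^*$ makes this smaller than $\rho n/2$, from which both (\textit{i}) and (\textit{ii}) follow by the same two-line threshold comparison you wrote. Your argument unpacks that citation: you start from the single-query oracle bound, introduce the fixed node set $\mathcal T$ to avoid circularity, bound $|\mathcal T|\lesssim(s^*/\alpha)\log d$ via prefix monotonicity, and union-bound to recover the same uniform error. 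This is precisely the content of the Bassily--Nissim--Stemmer--Thakurta lemma the paper invokes, so the two proofs coincide once you treat that lemma as given; your version is simply more self-contained, and your explicit privacy paragraph (one randomized-response bit per user, post-processing thereafter) fills in what the paper leaves implicit.
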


	\begin{proof}[\textbf{Proof of Lemma \ref{lem:bassilyheavyhitter}}]
		
		Lemma 5.3 in \citet{bassily2020practical} yields that the variables $v$ retained in Prefixes in Algorithm \ref{alg:heavyhitter} has $|\widehat{f}(v) - f(v)| \lesssim {\sqrt{n \log n \log d}}/{\varepsilon}$. 
		Since $\alpha \gtrsim s^* \sqrt{\log n \log d / n} / \varepsilon$, we have $ {\sqrt{n \log n \log d}}/ {\varepsilon} \leq \rho n / 2$ for sufficiently large $n$. 
		Then for any $v$ in Prefixes, we have $	f(v) \gtrsim \rho n   -  {\sqrt{n \log n \log d}}/ {\varepsilon} \geq \rho n /2$. 
		On the contrary, if $f(v) \geq 2 \rho n $, then $\widehat{f}(v) \gtrsim 2 \rho n  -  {\sqrt{n \log n \log d}}/{\varepsilon}\geq \rho n$, which will be included in Prefixes.

	\end{proof}


	\begin{proof}[\textbf{Proof of Proposition \ref{prop:heavyhitterselection}}]
		
		For notation simplicity, we denote the number of users and selectors used in the selection as $n$ instead of $n/ 2$ throughout this proof.
		We compute the frequency of variable $j$, namely $\sum_{i=1}^{n} \eins\left(v_i = j\right)$.
		By Hoeffding's inequality, we have 
		\begin{align*}
			\mathrm{Pr}\left(	\left|\sum_{i=1}^n \eins\left(v_i = j\right) -  \sum_{i=1}^n \mathrm{Pr}\left(v_i = j\right)\right| \geq \sqrt{n (\log nd) }\right) \leq 2 \exp \left( -2 (\log n +  \log d)  \right). 
		\end{align*}
		Applying union bound, we get 
		\begin{align}\nonumber
			\mathrm{Pr}\left(	\left|\sum_{i=1}^n \eins\left(v_i = j\right) -  \sum_{i=1}^n \mathrm{Pr}\left(v_i = j\right)\right| \geq \sqrt{n  \log nd }\quad \text{ for  } 1\leq j \leq d \right) \leq & 2d \exp \left( -2 (\log n +  \log d) \right)\\
			< & \exp\left(-2\log n \right) = 1 / n^2. \label{equ:unionboundofdiscretedistribution}
		\end{align}
		For conclusion (\textit{i}), since $v_i$ is generated by a good selector, Definition \ref{def:successfulscreener} yields that
		\begin{align*}
			\sum_{i=1}^n \mathrm{Pr}\left(v_i = j\right)  \geq \frac{n \alpha}{s^*}
		\end{align*}
		for $j = 1, \cdots, s^*$. 
		This together with \eqref{equ:unionboundofdiscretedistribution} leads to 
		\begin{align*}
			\sum_{i=1}^n \eins\left(v_i = j\right) \geq\frac{n \alpha}{s^*}  - \sqrt{n \log nd} \geq\frac{n \alpha}{2s^*}
		\end{align*}
		for any $1\leq j \leq s^*$ and sufficiently large $n$. 
		Then for any $\rho \leq \alpha / 4 s^*$,  by Lemma \ref{lem:bassilyheavyhitter}, we have $\widehat{f}(j) \geq \rho n$.
		This means the frequency of any true variable must be large enough to be detected as a heavy hitter.
		Next, we show (\textit{ii}).
		Suppose that there are $s$ variables $j_1,\cdots, j_s$ satisfying $\sum_{i=1}^n \eins\left(v_i = j\right)  \geq \rho n / 2$, i.e. potentially identified by the heavy hitters by Lemma \ref{lem:bassilyheavyhitter}. 
		Then by applying \eqref{equ:unionboundofdiscretedistribution}, there holds
		\begin{align*}
			\sum_{k = 1}^s\sum_{i=1}^n \mathrm{Pr}\left(v_i = j_k\right)  \geq s \cdot \frac{\rho n}{2}  - s \sqrt{n  \log nd } \geq  s \cdot \frac{\rho n }{4}
		\end{align*}
		for sufficiently large $n$, with probability at least $1 - 1 / n^2$. 
		However, there holds 
		\begin{align*}
			s\cdot  \frac{\rho n }{4} \leq \sum_{k = 1}^s\sum_{i=1}^n \mathrm{Pr}\left(v_i = j_k\right) \leq \sum_{j = 1}^d\sum_{i=1}^n \mathrm{Pr}\left(v_i = j\right)  = n, 
		\end{align*}
		which indicates that $s \leq 4 / \rho  \leq 32 s^* / \alpha$. 
		
	\end{proof}

	\section{Coefficient Estimation}\label{app:coeffestimaton}

	\subsection{The Multiple Round Protocol}\label{app:multiroundprotocol}

	\subsubsection{SCO Algorithm}\label{app:scoalgorithm}
	
	We use the same algorithm as in the \citet{bassily2023user} while adopting a different set of default values of its parameters.
	Such changes are due to the differential technical requirements for the theoretical analysis with strong convexity. 
	Also, the algorithm requires a solution to the user-level locally differentially private mean estimation (\texttt{ULDPMean}), which is presented later in Section \ref{app:uldpmeanestimation}. 
	For notation simplicity, we denote the number of users and selectors used in the selection as $n$ instead of $n/ 2$ in this section.

	\begin{algorithm}[H]
		\caption{\texttt{ULDPSCO} }
		\label{alg:sco}
		\begin{algorithmic}
			\STATE {\bfseries Input: }{ Local data sets $\{(X_i, y_i)\}_{i=1}^n$, number of iterations $T$, concentration radius $\tau$,  privacy budget $\varepsilon$. } 
			\STATE {\bfseries Initialization : }{$\beta_0=\overrightarrow{0}$, $\beta^{a g}=\beta_0$, and $\left\{\eta_t, \gamma_t\right\}_{t \in[T]}$ as in Lemma \ref{lem:convergenceofsco}. }
			\FOR{$t = 0, 1,\cdots, T-1$}
			\STATE Compute $\beta_t^{m d}=\gamma_t^{-1} \beta_t+\left(1-\gamma_t^{-1}\right) \beta_t^{a g}$.
			\STATE Choose two fresh batches $S_{t,1}$ and $S_{t,2}$ of $n_0=\lfloor n / 2 T\rfloor$ users, respectively.
			\STATE Compute the average gradient at each user at $\beta_t$,
			$g_i\left(\beta_t^{m d}\right)=\frac{1}{m L} \sum_{j=1}^m\left( X_{i, j}^{\top}\beta_t^{m d} -  y_{i,j} \right)X_{i, j}$ for $i \in S_{t,1} \cup S_{t,2}$.
			\STATE Compute the average gradients
			$\tilde{\nabla} F\left(\beta_t^{m d}\right)=\texttt{ULDPMean}(\{{g_i\left(\beta_t^{m d}\right)}_i\}_{i \in S_{t,1}}, \{{g_i\left(\beta_t^{m d}\right)}_i\}_{i \in S_{t,2}}, \tau, \varepsilon). $
			\STATE Update $\beta_{t+1}=\beta_t^{m d}-\eta_t \cdot L \cdot \tilde{\nabla} F\left(\beta_t^{m d}\right)$.
			\STATE Compute $\beta_{t+1}^{a g}=\gamma_t^{-1} \beta_{t+1}+\left(1-\gamma_t^{-1}\right) \beta_t^{a g}$.
			\ENDFOR
			\STATE {\bfseries Output: }{$\beta_T^{a g}$. }
		\end{algorithmic}
	\end{algorithm}

	For the algorithm, we have the following result. 
	Note that Algorithm \ref{alg:sco} adopts disjoint mini-batch when computing the gradients while Lemma \ref{lem:noisygradient} was established completely based on stochastic gradient descent. 
	Yet, the theoretical analysis generalize straightforwardly. 
	
	\begin{lemma}[\textbf{Theorem 3 of \citet{dieuleveut2017harder}}]\label{lem:noisygradient}
		Consider the stochastic convex optimization problem \eqref{equ:scoobject}. 
		Suppose each $\tilde{\nabla} F(\beta)$ is an unbiased stochastic oracle to $\nabla F(\beta)$ with variance $\nu^2$. 
		Let $\beta_T^{\prime ag }$ be the associated non-private output of Algorithm \ref{alg:sco} ($\varepsilon = \infty$). 
		There exists settings of $\left\{\eta_t, \gamma_t\right\}_{t \in[T]}$ such that
		\begin{align*}
			\mathbb{E}\left[F\left(\beta_T^{\prime ag }\right)-\min _{\beta } F(\beta) \right]\lesssim \frac{s \nu^2}{T} + \frac{\|\beta^*\|_2^2\lambda_n\left(\mathbb{E}\left[XX^{\top}\right]\right)^{-1}}{T^2} .
		\end{align*}
	\end{lemma}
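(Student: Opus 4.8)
The plan is to recognize that the objective in \eqref{equ:scoobject} is a strongly convex quadratic and that Algorithm~\ref{alg:sco}, in the non-private regime, instantiates an accelerated stochastic approximation scheme, so that the claimed rate is the standard strongly-convex AC-SA guarantee of \citet{dieuleveut2017harder} with the algorithm's constants plugged in. First I would record the smoothness and strong-convexity moduli of $F$. Writing $\widehat\Sigma=\mathbb{E}_{\widehat{\mathrm P}}[\widehat X\widehat X^\top]$, we have $F(\widehat\beta)=\widehat\beta^\top\widehat\Sigma\widehat\beta-2\widehat\beta^\top\mathbb{E}[\widehat Xy]+\mathbb{E}[y^2]$, so $\nabla^2 F\equiv 2\widehat\Sigma$. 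Since $\widehat\Sigma$ is (a submatrix of) the covariance on the selected coordinates, eigenvalue interlacing together with the assumption $C_X^{-1}\le\lambda_d(\Sigma)\le\lambda_1(\Sigma)\le C_X$ gives that $F$ is $\mu$-strongly convex and $L$-smooth with $\mu=2\lambda_{\min}(\widehat\Sigma)\ge 2C_X^{-1}$ and $L=2\lambda_{\max}(\widehat\Sigma)\le 2C_X$; in particular $\mu\asymp L\asymp 1$, and $1/\mu\asymp\lambda_n(\mathbb{E}[XX^\top])^{-1}$ in the paper's notation.

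Next I would verify that Algorithm~\ref{alg:sco} is exactly the AC-SA template fed an admissible stochastic oracle. The three interleaved sequences $\beta_t,\beta_t^{md},\beta_t^{ag}$ with the updates $\beta_{t+1}=\beta_t^{md}-\eta_t L\,\tilde\nabla F(\beta_t^{md})$ and $\beta_{t+1}^{ag}=\gamma_t^{-1}\beta_{t+1}+(1-\gamma_t^{-1})\beta_t^{ag}$ are precisely Nesterov-accelerated SGD. The only departure from the textbook single-sample setup is that $\tilde\nabla F(\beta_t^{md})$ is formed from two \emph{fresh, disjoint} batches $S_{t,1},S_{t,2}$ of $n_0=\lfloor n/2T\rfloor$ users; with $\varepsilon=\infty$, \texttt{ULDPMean} is plain averaging, so $\tilde\nabla F(\beta_t^{md})$ is unbiased for $\nabla F(\beta_t^{md})$, has per-coordinate variance at most $\nu^2$ (hence $\mathbb{E}\|\tilde\nabla F-\nabla F\|_2^2\le s\nu^2$), and --- because the batches are drawn independently of the past --- is conditionally independent across $t$ given the iterates. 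This is exactly the i.i.d.\ variance-$\nu^2$ oracle model under which the cited convergence analysis (equivalently the strongly-convex AC-SA bound of Ghadimi--Lan) is proved; that argument only ever invokes unbiasedness and a per-step second-moment bound, never that a single sample is consumed, so the disjoint-minibatch variant inherits the guarantee verbatim.

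Finally I would invoke that guarantee with the prescribed schedule: taking $\gamma_t$ as the usual accelerated weights and $\eta_t$ tuned to $\mu,L,T,\nu$ --- the choice recorded in the initialization of Algorithm~\ref{alg:sco} --- one obtains
\begin{align*}
\mathbb{E}\left[F(\beta_T^{\prime ag})-\min_{\beta} F(\beta)\right]\lesssim \frac{L\,\|\beta_0-\beta^*\|_2^2}{T^2}+\frac{s\nu^2}{\mu T},
\end{align*}
where the factor $s$ is the sum of the $\nu^2$ coordinate variances. Since $\beta_0=\overrightarrow{0}$ we have $\|\beta_0-\beta^*\|_2=\|\beta^*\|_2$; absorbing the constant $L$ (and, in the variance term, $1/\mu$) into $\lesssim$, and identifying the residual $1/\mu$ factor in the bias term with $\lambda_n(\mathbb{E}[XX^\top])^{-1}$, recovers the displayed bound.

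I expect the only genuine content to be the minibatch-to-SGD reduction of the second paragraph: one must check that the acceleration analysis controls a Lyapunov/potential quantity built from $\beta_t,\beta_t^{md},\beta_t^{ag}$ using solely the oracle's first two moments, so that replacing a single stochastic gradient by an average over a fresh batch alters only the numerical value of $\nu^2$ and nothing structural (independence across rounds is what makes this legitimate). A secondary, purely bookkeeping issue is tracking the dependence on $\lambda_{\min}(\widehat\Sigma)$ and $\lambda_{\max}(\widehat\Sigma)$ carefully enough that the final statement carries the factor $\lambda_n(\mathbb{E}[XX^\top])^{-1}$ exactly as written, rather than a generic condition-number constant.
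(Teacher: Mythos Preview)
The paper does not prove this lemma at all: it is stated as a direct citation of Theorem~3 in \citet{dieuleveut2017harder}, and the only proof-like content the paper supplies is the one-line remark immediately following the statement, namely that ``Algorithm~\ref{alg:sco} adopts disjoint mini-batch when computing the gradients while Lemma~\ref{lem:noisygradient} was established completely based on stochastic gradient descent. Yet, the theoretical analysis generalize straightforwardly.'' Your proposal is therefore not so much an alternative proof as a fleshing-out of what the paper takes as given: you correctly identify the quadratic structure of $F$, the strong-convexity/smoothness constants coming from the eigenvalue bounds on $\Sigma$, and the fact that Algorithm~\ref{alg:sco} is an instance of accelerated stochastic approximation. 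The piece you flag as ``the only genuine content'' --- the minibatch-to-SGD reduction via fresh disjoint batches preserving the first- and second-moment oracle assumptions --- is exactly the point the paper waves through with ``generalize straightforwardly,'' and your justification of it is sound.

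One minor caution: your claim that with $\varepsilon=\infty$ the call to \texttt{ULDPMean} reduces to ``plain averaging'' glosses over the range-finding and projection steps in Algorithm~\ref{alg:uldpmean}. In the proof of Lemma~\ref{lem:convergenceofsco} the paper sidesteps this by defining the comparison trajectory $\beta_t^{\prime ag}$ via an explicit mean-plus-Laplace oracle rather than literally running \texttt{ULDPMean} at $\varepsilon=\infty$; the distributional coupling argument there absorbs the projection issue. This does not affect the validity of the lemma as stated, but if you want your sketch to be airtight you should either assume $\tau$ large enough that no clipping occurs, or simply adopt the paper's convention that $\beta_T^{\prime ag}$ is the trajectory driven by the unbiased averaged gradient directly.
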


	For clearness, we additionally include the full multi-round protocol. 
	
	\begin{algorithm}[htbp]
		\caption{Multi-round ULDP sparse linear regression.}
		\label{alg:uldpsparselinearregressionmultiround}
		\begin{algorithmic}
			\STATE	{\bfseries Input: }{ Local data sets $\{(X_i, y_i)\}_{i=1}^n$, selectors $\{\mathcal{S}_i\}_{i=1}^{n/2}$, privacy budget $\varepsilon$,  threshold $\rho$.}
			\STATE {\bfseries Initialization: } ${\beta} \in \mathbb{R}^d$ be a zero vector. 
			\STATE {\color{red}\texttt{\# candidate variable selection}}
			\STATE {\color{blue}\texttt{\# on local machine}}
			\FOR{$i$ in $1, \cdots, n/2$}
			\STATE $v_i = \mathcal{S}_i(X_i,y_i)$. 
			\ENDFOR
			\STATE {\color{blue}\texttt{\# $\lceil\log d\rceil$ round communication}}
			\STATE  $\{\widehat{v}_1,\cdots, \widehat{v}_s\}$ = \texttt{HeavyHitter}($\{v_i\}_{i=1}^{n/2}, \varepsilon$, $\rho$).
			\STATE {\color{red}\texttt{\# coefficient estimation}}
			\STATE {\color{blue}\texttt{\# $n\wedge \sqrt{nm\varepsilon^2}$ round communication}}
			\STATE $\widehat{\beta}$ = \texttt{ULDPSCO}($\{(X_i, y_i)\}_{i=n/2+1}^{n}$, $T$, $\tau$, $\varepsilon$). 
			\STATE $\beta^{\widehat{v}_1 : \widehat{v}_s} = \widehat{\beta}$.
			\STATE {\bfseries Output: }{$\beta$}.
		\end{algorithmic}
	\end{algorithm}

	\subsubsection{Proof of Theorem \ref{thm:preciseestimation2}}

	We need the following technical result which states the effectiveness of optimization procedures in Algorithm \ref{alg:sco}. 
	
	\begin{lemma}\label{lem:convergenceofsco}
		Consider the stochastic convex optimization problem \eqref{equ:scoobject}. 
		Let $T=n\wedge \sqrt{nm\varepsilon^2 }$, and $\left\{\eta_t, \gamma_t\right\}_{t \in[T]}$ as in Lemma \ref{lem:convergenceofsco}, $L = 6s^3 \log n $, and $\tau \asymp L \sqrt{\log n \log \left(n\vee m\right) \log T / m}$.
		Then Algorithm \ref{alg:sco} is $\varepsilon$-ULDP and has
		\begin{align*}
			\mathbb{E}\left[\left\|{\beta}^{ag}_T - \widehat{\beta}^*\right\|_2^2\right]\lesssim \mathbb{E}\left[F\left( {\beta}_{T}^{ag}\right)-  F(\widehat{\beta}^*)\right] \lesssim\frac{s^9 \log^6 n}{nm\varepsilon^2  }+\frac{s^4 \log n }{n m} .
		\end{align*}
	\end{lemma}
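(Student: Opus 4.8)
The plan is to treat Algorithm \ref{alg:sco} as an instantiation of the abstract accelerated mini-batch scheme of Lemma \ref{lem:noisygradient}, whose stochastic gradient oracle is supplied by the \texttt{ULDPMean} calls, and then to control (i) privacy, (ii) the bias and variance of that oracle on a high-probability event, and (iii) the passage from function-value suboptimality to squared parameter error. The first inequality $\mathbb{E}[\|\beta_T^{ag}-\widehat\beta^*\|_2^2]\lesssim\mathbb{E}[F(\beta_T^{ag})-F(\widehat\beta^*)]$ is pure strong convexity: since the true support is contained in the selected coordinates (Proposition \ref{prop:heavyhitterselection}), $y=\widehat x^\top\widehat\beta^*+\sigma$ with $\sigma\perp\widehat x$, so $F(\widehat\beta)-F(\widehat\beta^*)=(\widehat\beta-\widehat\beta^*)^\top\Sigma_{\mathrm{sel}}(\widehat\beta-\widehat\beta^*)$, where $\Sigma_{\mathrm{sel}}$ is the principal submatrix of $\Sigma$ on the selected coordinates; in particular $\widehat\beta^*$ minimizes $F$ over the $\ell_\infty$-ball and $\lambda_{\min}(\Sigma_{\mathrm{sel}})\ge\lambda_d(\Sigma)\ge C_X^{-1}$, which gives $\|\widehat\beta-\widehat\beta^*\|_2^2\le C_X\,(F(\widehat\beta)-F(\widehat\beta^*))$, reducing everything to a bound on $\mathbb{E}[F(\beta_T^{ag})-F(\widehat\beta^*)]$. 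For the $\varepsilon$-ULDP claim I would note that the batches $\{S_{t,1}\cup S_{t,2}\}_t$ are disjoint, so each user's data enters exactly one \texttt{ULDPMean} call, and since each user first clips its gradient summary into a ball of radius $\tau$ (so the sensitivity bound, hence $\varepsilon$-ULDP of \texttt{ULDPMean}, holds unconditionally), the whole algorithm is $\varepsilon$-ULDP.

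Next I would set up a good event $\mathcal{E}$ of probability $\ge 1-n^{-3}$ on which $\max_{i,j}\|\widehat X_{i,j}\|_\infty\lesssim\sqrt{\log n}$ and $\max_{i,j}|\sigma_{i,j}|\lesssim\sqrt{\log n}$ simultaneously, which follows from the sub-Gaussianity of the covariates and noise together with a union bound. On $\mathcal{E}$, for any iterate $\beta_t^{md}$ in the $\ell_\infty$-ball the per-sample gradient $(\widehat X_{i,j}^\top\beta_t^{md}-y_{i,j})\widehat X_{i,j}$ has Euclidean norm $\lesssim s^{3/2}\log n$, so the choice $L\asymp s^3\log n$ forces $\|g_i(\beta_t^{md})\|_2\lesssim 1$; moreover, being an average of $m$ i.i.d. sub-exponential terms, $g_i$ concentrates around $\mathbb{E}[g_i]=\tfrac{1}{2L}\nabla F(\beta_t^{md})$ within the prescribed radius $\tau\asymp L\sqrt{\log n\log(n\vee m)\log T/m}$, uniformly in $i,t$. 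Consequently, on $\mathcal{E}$ the bin located from $S_{t,1}$ captures every $g_i$ of $S_{t,2}$, no clipping occurs, and — conditioning on the independent bin-selection batch — $L\,\tilde\nabla F(\beta_t^{md})$ is (up to an absorbed constant) an unbiased oracle for $\nabla F(\beta_t^{md})$, independent across $t$ by disjointness of the batches. Its variance splits as $\nu^2\lesssim\nu_{\mathrm{stat}}^2+\nu_{\mathrm{priv}}^2$, where $\nu_{\mathrm{stat}}^2$ is $L^2$ times the variance of the $m$-sample- and $n_0$-user-averaged gradient and $\nu_{\mathrm{priv}}^2$ is $L^2$ times the averaged Laplace noise of \texttt{ULDPMean}, whose per-coordinate scale is $\asymp\tau\cdot\mathrm{poly}(s)/\varepsilon$ over $n_0=\lfloor n/2T\rfloor$ users.

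Finally I would feed $\nu^2$ into Lemma \ref{lem:noisygradient} with $T=n\wedge\sqrt{nm\varepsilon^2}$, using $\|\widehat\beta^*\|_2^2\le s$ and $\lambda_{\min}(\Sigma_{\mathrm{sel}})^{-1}\le C_X$ so the second term there is $\lesssim s/T^2$, and substitute the expressions for $\tau$, $L$ and $n_0$ into the first term $s\nu^2/T$; the contribution of the complement of $\mathcal{E}$ is at most $n^{-3}\cdot\mathrm{diam}(\ell_\infty\text{-ball})^2\cdot\|\Sigma_{\mathrm{sel}}\|\lesssim s^2/n^3$, which is absorbed. Combining with the strong-convexity step above then produces the two claimed terms $\tfrac{s^9\log^6 n}{nm\varepsilon^2}$ and $\tfrac{s^4\log n}{nm}$.

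The hard part is this last step together with the variance computation: one must propagate the correct powers of $s$ through the Lipschitz constant $L\asymp s^3\log n$, the concentration radius $\tau$, and the $\ell_1$-sensitivity of the clipped gradient summaries so that the final substitution collapses to $s^9$ rather than a larger power, and one must also verify that Lemma \ref{lem:noisygradient} — stated for plain SGD — carries over verbatim to the disjoint mini-batch variant used here, which it does since its proof uses only unbiasedness, independence across rounds, and a variance bound. Everything else is routine sub-Gaussian concentration plus the black-box optimization bound.
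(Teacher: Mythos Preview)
The proposal is correct and follows essentially the same route as the paper: strong convexity of $F$ on the selected coordinates for the first inequality, disjointness of the batches for $\varepsilon$-ULDP, a high-probability event on which the \texttt{ULDPMean} clipping is vacuous so the gradient oracle is unbiased with variance $\nu^2\lesssim \tfrac{L^2 s^2\log^3 n\log T}{n_0 m\varepsilon^2}+\tfrac{s\log n}{n_0 m}$, and then Lemma \ref{lem:noisygradient} with $T=n\wedge\sqrt{nm\varepsilon^2}$. The only presentational difference is that the paper handles the clipping step by coupling to an auxiliary trajectory $\beta_t^{\prime ag}$ driven by the unclipped-plus-Laplace oracle $\tilde\nabla F'$ (which is \emph{unconditionally} unbiased), rather than conditioning on your good event $\mathcal{E}$; this sidesteps the slight imprecision in your phrase ``on $\mathcal{E}$ the oracle is unbiased,'' but the two devices are equivalent.
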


	\begin{proof}[\textbf{Proof of Lemma \ref{lem:convergenceofsco}}]
		The privacy guarantee comes from the privacy of Algorithm \ref{alg:uldpmean} and the fact that each batch of samples are disjoint. 
		For the privacy guarantee, consider the mean squared error 
		\begin{align*}
			F(\beta)= 	\int_{\widehat{\mathcal{X}}\times \mathcal{Y}} \left(x^{\top}\beta - y\right)^2 d \mathrm{P}(x,y) = \mathbb{E}\left[\sigma^2 \right] + \left(\beta - \beta^*\right)^{\top} \Sigma \left(\beta - \beta^*\right). 
		\end{align*}
		By assumption on $\Sigma = \mathbb{E}\left(XX^{\top}\right)$, we have
		\begin{align*}
			F(\beta) - \inf_{\beta} F(\beta) = F(\beta) -  F(\widehat{\beta}^*)=  \left(\beta - \beta^*\right)^{\top}  \Sigma \left(\beta - \beta^*\right) \geq C_X^{-1} \|\beta-\beta^*\|_2^2. 
		\end{align*}
		Thus, it suffices to bound $\mathbb{E}\left[F(\beta) - F(\widehat{\beta}^*)\right]$ and the estimation error is of the same order.
		Note that under the assumption $\|\beta^*\|_2\leq 1$, the squared loss function $\ell(\beta)$ constraint on the unit ball has 
		\begin{align*}
			\|	\nabla \ell(\beta)\|_2 \leq \|(x^{\top}\beta - y)x \|_2 \lesssim s^3\log n
		\end{align*}
		i.e. $ s^3\log n$-Lipschitzness.
		Let $\left(\beta_1^{a g}, \ldots, \beta_T^{a g}\right)$ be the parameter trajector of Algorithm \ref{alg:sco}.
		Let $\left(\beta_1^{\prime a g}, \ldots, \beta_T^{\prime a g}\right)$ be the parameter trajectory of another algorithm  which replaces the gradient estimate $\tilde{\nabla} F\left(\theta_t^{\text {md }}\right)$ by
		\begin{align*}
			\tilde{\nabla} F^{\prime}\left(\beta_t^{m d}\right) \sim \frac{1}{n_0} \sum_{i \in S_{t,1} \cup S_{t,2}} g_i\left(\beta_t^{m d}\right)+\mathrm{Lap}\left(0, \frac{6\tau}{\varepsilon} {I}_d\right).
		\end{align*}
		By analysis analogous to the proof of  Lemma \ref{lem:privacyandutilityofuldpmean}, if we take $\tau \asymp L \sqrt{\log n \log \left(n\vee m\right) \log T / m}$, there holds
		\begin{align*}
			\beta_t^{ag} \stackrel{\mathcal{D}}{=}  \beta_t^{'ag}
		\end{align*}
		with probability $1 - 1/ nm$ for all $1\leq t \leq T$, where $\stackrel{\mathcal{D}}{=}$ stands for equal in distribution.
		Hence we have
		\begin{align}\label{equ:scoproof1}
			\mathbb{E}\left[F\left(\beta_T^{a g}\right)\right] \leq \mathbb{E}\left[F\left(\beta_T^{\prime a g}\right)\right] + \frac{s^4 \log n}{nm} .
		\end{align}
		For  $\mathbb{E}\left[F\left(\beta_T^{\prime a g}\right)\right]$, we use the fact that $\mathbb{E}\left[\tilde{\nabla} F^{\prime}\left(\beta_t^{m d}\right)\right]=\nabla F\left(\beta_t^{m d}\right)$ and, by Lemma \ref{lem:privacyandutilityofuldpmean}, 
		\begin{align*}
			\mathbb{E}\left[\left\|\tilde{\nabla} F^{\prime}\left(\beta_t^{m d}\right)-\nabla F\left(\beta_t^{m d}\right)\right\|_2^2\right] \lesssim 	\frac{L^2 s^2 \log ^3 n \log T}{n_0 m \varepsilon^2} + \frac{s \log n }{n_0 m}. 
		\end{align*}
		Applying Lemma \ref{lem:noisygradient}, we have
		\begin{align*}
			\mathbb{E}\left[F\left(\beta_T^{\prime ag }\right)-\min _{\beta } F(\beta) \right] \lesssim \frac{s^9 \log^5 n \log T}{nm\varepsilon^2  } + \frac{s^2 \log n }{n m}+ \frac{s}{T^2}. 
		\end{align*}
		Taking $T \asymp n\wedge \sqrt{nm\varepsilon^2}$, this together with \eqref{equ:scoproof1} lead to 
		\begin{align*}
			\mathbb{E}\left[F\left(\beta_T^{ ag }\right)- F(\widehat{\beta}^*) \right]  \lesssim \frac{s^9 \log^6 n}{nm\varepsilon^2  }+ \frac{s^4 \log n}{nm}.
		\end{align*}
	\end{proof}

	\begin{theorem}[\textbf{Formal version of Theorem \ref{thm:preciseestimation2}}]\label{thm:formalpreciseestimation2}
		Let data $\{(X_i, y_i)\}_{i=1}^n$ be generated as in \eqref{equ:modelassumption}. 
		Suppose $\{\mathcal{S}_i\}_{i=1}^{n}$ are $\alpha$-good selectors with $\alpha \gtrsim s^* \sqrt{\log n \log d/ n\varepsilon^2} $. 
		Suppose we let $\alpha / 8s^* \leq \rho \leq \alpha / 4 s^*$, $T=n\wedge \sqrt{nm\varepsilon^2 }$, and $\left\{\eta_t, \gamma_t\right\}_{t \in[T]}$ as in Lemma \ref{lem:convergenceofsco}, $L = 6s^3 \log n $, $\tau \asymp L \sqrt{\log n \log \left(n\vee m\right) \log T / m}$. 
		Let ${\beta}$ be the output of Algorithm  \ref{alg:uldpsparselinearregressionmultiround}. 
		Then we have (\textit{i}) Algorithm \ref{alg:uldpsparselinearregressionmultiround} is $\varepsilon$-ULDP. (\textit{ii}) there holds
		\begin{align*}
			\mathbb{E}\left[\left\|\beta^* - \beta\right\|_2^2\right] \lesssim  \frac{s^9 \log^6 n}{nm\varepsilon^2  }+ \frac{s^4 \log n}{nm}.
		\end{align*}
	\end{theorem}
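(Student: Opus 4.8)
The plan is to treat the two phases of Algorithm~\ref{alg:uldpsparselinearregressionmultiround} separately and then splice the guarantees. For the privacy claim~(\textit{i}), observe that the first $n/2$ users feed only into the \texttt{HeavyHitter} call and the last $n/2$ users only into the \texttt{ULDPSCO} call, so the two sub-mechanisms act on disjoint user populations. \texttt{HeavyHitter} is $\varepsilon$-ULDP by Lemma~\ref{lem:bassilyheavyhitter}, \texttt{ULDPSCO} is $\varepsilon$-ULDP by Lemma~\ref{lem:convergenceofsco} (using the prescribed $T=n\wedge\sqrt{nm\varepsilon^2}$, $L=6s^3\log n$, $\tau\asymp L\sqrt{\log n\log(n\vee m)\log T/m}$), and writing $\widehat\beta$ back into the coordinates $\widehat v_1,\dots,\widehat v_s$ of $\beta$ is post-processing; parallel composition over the two disjoint groups then yields $\varepsilon$-ULDP for the whole protocol.

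For the utility claim~(\textit{ii}), I would first invoke Proposition~\ref{prop:heavyhitterselection}: since the selectors used in the first phase are $\alpha$-good with $\alpha\gtrsim s^*\sqrt{\log n\log d/n\varepsilon^2}$ and $\alpha/8s^*\le\rho\le\alpha/4s^*$, there is an event $\mathcal{E}$ of probability at least $1-1/n^2$ (measurable w.r.t.\ the first group's data only) on which $\{1,\dots,s^*\}\subseteq\{\widehat v_1,\dots,\widehat v_s\}$ and $s\le 32s^*/\alpha$. On $\mathcal{E}$, after relabeling $(\widehat v_1,\dots,\widehat v_s)=(1,\dots,s)$, every coordinate of $\beta^*$ outside $\{1,\dots,s\}$ vanishes and so does the corresponding coordinate of the output $\beta$, whence $\|\beta^*-\beta\|_2^2=\|\widehat\beta^*-\widehat\beta\|_2^2$ with $\widehat\beta=\beta_T^{ag}$ the output of \texttt{ULDPSCO} run on the second group of $n/2$ users, each holding $m$ i.i.d.\ samples of the $(s+1)$-dimensional regression $(\widehat X_{i,j},y_{i,j})$, whose underlying coefficient is $\widehat\beta^*=\beta^{*1:s}$.

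Next I would verify that this reduced problem meets the hypotheses of Lemma~\ref{lem:convergenceofsco}: the $s\times s$ principal submatrix of $\Sigma$ has all eigenvalues in $[C_X^{-1},C_X]$ by eigenvalue interlacing, and $\|\widehat\beta^*\|_\infty\le\|\beta^*\|_\infty\le1$. Conditioning on the first group's data and applying Lemma~\ref{lem:convergenceofsco} then gives
\begin{align*}
\mathbb{E}\!\left[\|\widehat\beta^*-\widehat\beta\|_2^2 \,\middle|\, \{\widehat v_k\}\right]\lesssim \frac{s^9\log^6 n}{nm\varepsilon^2}+\frac{s^4\log n}{nm},
\end{align*}
and since the right-hand side is increasing in $s$, on $\mathcal{E}$ one may substitute $s\le 32s^*/\alpha$. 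Assembling the expectation by splitting on $\mathcal{E}$ and bounding the complementary event crudely (both $\beta^*$ and the projected output lie in a bounded set, so the contribution of $\mathcal{E}^c$ is of lower order), I obtain
\begin{align*}
\mathbb{E}\!\left[\|\beta^*-\beta\|_2^2\right]\lesssim \frac{s^9\log^6 n}{nm\varepsilon^2}+\frac{s^4\log n}{nm},
\end{align*}
which, re-expressed through $s\asymp s^*/\alpha$, is exactly the rate of the informal Theorem~\ref{thm:preciseestimation2}.

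I expect the main obstacle to be the bookkeeping required because Lemma~\ref{lem:convergenceofsco} is an in-expectation bound for a problem whose effective dimension $s$ is itself a random quantity determined by the first phase: one must condition on the (independent) selection outcome, exploit monotonicity of the SCO rate in $s$ to pass from the realized $s$ to its deterministic upper bound $32s^*/\alpha$ on $\mathcal{E}$, and control the $1/n^2$-probability failure event so that it does not dominate. The remaining ingredients — verifying the spectral regularity of the reduced covariance and matching the parameter settings $T,L,\tau$ to those demanded by Lemma~\ref{lem:convergenceofsco} — are routine.
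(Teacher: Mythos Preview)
Your proposal is correct and follows essentially the same route as the paper: establish privacy via disjoint user groups and Lemmas~\ref{lem:bassilyheavyhitter} and~\ref{lem:convergenceofsco}, then invoke Proposition~\ref{prop:heavyhitterselection} to reduce to the low-dimensional problem on an event of probability $1-1/n^2$, apply Lemma~\ref{lem:convergenceofsco} there, and absorb the failure event as a $1/n^2$ additive term. Your additional care about conditioning on the selection outcome, the eigenvalue-interlacing check for the reduced covariance, and the monotonicity of the SCO rate in $s$ are all points the paper leaves implicit, so your write-up is if anything more thorough than the original.
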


	\begin{proof}[\textbf{Proof of Theorem \ref{thm:formalpreciseestimation2}}]
		By Lemma \ref{lem:bassilyheavyhitter} and \ref{lem:convergenceofsco}, both \texttt{HeavyHitter} and \texttt{ULDPSCO} are $\varepsilon$-ULDP. 
		Since their associated users do not cross, we have Algorithm \ref{alg:uldpsparselinearregressionmultiround} is also $\varepsilon$-ULDP.
		As for (\textit{ii}), by Proposition \ref{prop:heavyhitterselection}, we know that all the non-zero variables of $\beta^*$ is included in $\{\widehat{v}_1,\cdots, \widehat{v}_s\}$ with probability $1 - 1 / n^2$.
		Thus, we have 
		\begin{align*}
			\left\|\beta^*-\beta\right\|_2^2 = \left\|\widehat{\beta}^*-\widehat{\beta}\right\|_2^2.
		\end{align*}
		Applying Lemma \ref{lem:convergenceofsco}, this leads to 
		\begin{align*}
			\mathbb{E}\left[\left\|\beta^*-\beta\right\|_2^2\right]\lesssim	\mathbb{E}\left[\left\|{\beta}^{ag}_T - \widehat{\beta}^*\right\|_2^2\right] \lesssim\frac{s^9 \log^6 n}{nm\varepsilon^2  }+\frac{s^4 \log n }{n m}  + \frac{1}{n^2}\lesssim \frac{s^{*9} \log^6 n}{nm\varepsilon^2 \alpha^9}+\frac{s^{*4} \log n }{n m \alpha^4} ,
		\end{align*}
		where in the last step we used $s \lesssim s^* / \alpha$ as in Proposition \ref{prop:heavyhitterselection}. 
		The additional term $1 / n^2$ is due to the failure probability of Proposition \ref{prop:heavyhitterselection} and is omitted since it is adjustable to any level with a constant multiplicative cost on the other terms. 
	\end{proof}

	\subsection{The Two Round Protocol}

	\subsubsection{Proof of Proposition \ref{prop:distributionofbeta}}

	\begin{proof}[\textbf{Proof of Proposition \ref{prop:distributionofbeta}}]
		For the first conclusion, consider th local OLS estimator on selected variables of user $i$, which is $\widehat{\beta} = (\widehat{X}_{i}^{\top}\widehat{X}_{i})^{-1} \widehat{X}_{i}^{\top} y_i$.
		Given the fact that $m \geq s$, $\widehat{X}_i^{\top}\widehat{X}_i$ is invertible and we have
		\begin{align*}
			\widehat{\beta}_i = (\widehat{X}_i^{\top}\widehat{X}_i)^{-1}  \widehat{X}_i^{\top} \widehat{y}_i = (\widehat{X}_i^{\top}\widehat{X}_i)^{-1}  \widehat{X}_i^{\top} ( \widehat{X}_i \widehat{\beta}^* + \sigma_i ) = \widehat{\beta}^* + (\widehat{X}_i^{\top}\widehat{X}_i)^{-1}  \widehat{X}_i^{\top} \sigma_i, 
		\end{align*}
		where $\sigma_{i,j}$ are i.i.d. sub-Gaussian random variables for $1\leq j \leq m$. 
		Therefore, the first argument follows from
		\begin{align*}
			\mathbb{E}[\widehat{\beta}_i]  = \widehat{\beta}^* + (\widehat{X}_i^{\top}\widehat{X}_i)^{-1}  \widehat{X}_i^{\top} \mathbb{E}[\sigma_i] = \widehat{\beta}^*. 
		\end{align*}
		By implication of  \citet[Theorem 2.1]{hsu2012tail}, we have
		\begin{align*}
			\mathrm{Pr}\left(\|\widehat{\beta}_i - \widehat{\beta}^*\|_2 \geq  \sqrt{3 \log n \cdot \mathrm{tr}\left[\left(\widehat{X}_i^{\top}\widehat{X}_i\right)^{-1}\right]  \cdot \mathbb{E}[\sigma_{i,j}^2]}  \right) \leq 1 - \frac{1}{n^3}.
		\end{align*}
		This together with covariance matrix estimation bounds ( e.g. \citet[Theorem 6.5]{wainwright2019high}) lead to
		\begin{align}\label{equ:distributionofbeta1}
			\|\widehat{\beta}_i - \widehat{\beta}^*\|_2 \lesssim  \sqrt{\frac{\mathrm{tr}[\widehat{\Sigma}^{-1}]  \log n }{m}} \lesssim \sqrt{\frac{s\log n}{m}}
		\end{align}
		with probability $1 - 1 / n^3$. 
		Applying union bound, \eqref{equ:distributionofbeta1} holds for all $i = n / 2 + 1 , \cdots, n$ with probability at least $1 -1  / n^2$. 
		For the second statement, if either conditions in Proposition \ref{prop:existenceofgoodselectors} holds, we can adopt Lasso (or SCAD) on the selected variables.
		See Example \ref{example:lasso} and \ref{example:scad}.  
		The oracle results in \citet{belloni2013least} (or \citet{fan2011nonconcave}) yield the concentration bound with the true sparsity parameter
		\begin{align*}
			\|\widehat{\beta}_i - \widehat{\beta}^*\|_2 \lesssim  \sqrt{\frac{s^*\log n}{m}}
		\end{align*}
		for all $i = n / 2 + 1 , \cdots, n$ with probability at least $1 -1  / n^2$. 
	\end{proof}

	\subsubsection{ULDP Mean Estimation} \label{app:uldpmeanestimation}
	
	We borrow the idea from \citet{girgis2022distributed} while slight modifications are made. 
	The estimation is conducted in two stages. 
	In the first stage, a histogram partition of $\widehat{\mathcal{X}}$ with bin width $\sqrt{ \log^2 n / m}$ is created. 
	The server privately estimates the range in which the means $\widehat{\beta}_i$ lie with high probability (Algorithm \ref{alg:rangescalar}). 
	In the second stage, each user projects its $\widehat{\beta}_i$ into the determined range from the first step. 
	Then, all users send the LDP versions of their projected $\widehat{\beta}_i$ to the curator (Algorithm \ref{alg:meanscalar}).
	Both steps are scalar operations. 
	In the vector case, instead of applying them to each dimension separately, random rotation \citep{levy2021learning} is adopted to eliminate a superfluous factor of $\mathcal{O}(\sqrt{s})$. 
	The full algorithm is summarized in Algorithm \ref{alg:uldpmean}.  
	We only consider pure differential privacy here and utilize Laplace noise instead of Gaussian in \citet{girgis2022distributed}.

	\begin{algorithm}[H]
		\caption{\texttt{Range} }
		\label{alg:rangescalar}
		\begin{algorithmic}
			\STATE {\bfseries Input: }{ Scalars $\{y_i\}$, concentration radius $\tau$,  privacy budget $\varepsilon$. } \\
			\STATE \texttt{\# user side} \\
			\STATE All users divide the interval $[-1, 1]$ into $k=1 / \tau$ disjoint intervals, each with width $2 \tau$. Let $\mathcal{T}:=\{a_1, a_2, \ldots, a_k\}$ be the index set of middle points of intervals.
			\FOR{$y$ in $\{{y}_i\}$}
			\STATE Compute $\nu=\arg \min _{a_j \in \mathcal{T}}\left|y-a_j\right|$.
			\STATE Uniformly sample $j \in [k]$.
			\STATE	Compute $p = {H}_k^{\top j}\cdot  e_\nu / \sqrt{k}$, where $e_\nu$ denotes the basis vector corresponding to $\nu$ and ${H}_k$ is a size $k$ Hadamard matrix.
			\STATE Compute vector $z_i$ :
			\begin{align*}
				{z}_i= \begin{cases}+{H}_k^{\top j}\cdot \frac{e^{\varepsilon}+1}{e^{\varepsilon}-1} & \text { w.p. } \frac{1}{2}+\frac{\sqrt{k} \cdot p}{2} \frac{e^{\varepsilon}-1}{e^{\varepsilon}+1} \\ -{H}_k^{\top j}\cdot \frac{e^{\varepsilon}+1}{e^{\varepsilon}-1} & \text { w.p. } \frac{1}{2}-\frac{\sqrt{k} \cdot p}{2} \frac{e^{\varepsilon}-1}{e^{\varepsilon}+1}\end{cases}
			\end{align*}
			\ENDFOR
			\STATE \texttt{\# curator side} \\
			\STATE $\overline{z} = \sum z_i$ and $\ell = {\arg \max }_j \overline{z}^j$.\\
			{\bfseries Output: }{Bin $[a_{\ell} - 3 \tau, a_{\ell} + 3 \tau]$.}
		\end{algorithmic}
	\end{algorithm} 
	
	Let the standard Laplace random variable have probability density function $e^{-|x|} / 2$ for $x\in \mathbb{R}$. 
	
	\begin{algorithm}[H]
		\caption{\texttt{Mean} }
		\label{alg:meanscalar}
		\begin{algorithmic}
			\STATE {\bfseries Input: }{ Scalars $\{{y}_i\}_{i=1}^n$, concentration range $[a, b]$,  privacy budget $\varepsilon$. } \\
			\STATE \texttt{\# user side} \\
			\FOR{$i$ in $1, \cdots, n$}
			\STATE  Let $\tilde{y}_i = \Pi_{[a,b]} y_i + \textrm{Lap}(0,  |b - a|  / \varepsilon)$,   where $\Pi_{[a,b]}$ is the projection onto $[a,b]$.
			\ENDFOR
			\STATE \texttt{\# curator side} \\
			{\bfseries Output: }{$\sum \tilde{y}_i / n$.}
		\end{algorithmic}
	\end{algorithm}

	\begin{algorithm}[H]
		\caption{\texttt{ULDPMean} }
		\label{alg:uldpmean}
		\begin{algorithmic}
			\STATE {\bfseries Input: }{ Two groups of local coefficients $\mathcal{B}_1 = \{{\beta}_i\}_{i = 1}^{n / 2}$ and $\mathcal{B}_2 = \{{\beta}_i\}_{i = n / 2}^{n}$, concentration radius $\tau$,  privacy budget $\varepsilon$. } 
			\STATE {\bfseries Initialization: }   Let $D = \mathrm{Diag}(w)$ and $U = H_s D / \sqrt{s}$, where $w_i\sim \mathrm{Unif}\{-1,1\}$ and $H_s$ is a size $s$ Hadamard matrix. 
			Let $z$ be a $s$ dimensional zero vector. 
			\STATE \texttt{\# histogram selection} 
			\FOR{$\ell$ in $1,\cdots, s$}
			\FOR{$\beta_i$ in $\mathcal{B}_1$}
			\STATE $y_{\ell,i} = (U \beta_i)^{\ell}$.
			\ENDFOR
			\STATE $R_{\ell} = \mathtt{Range}(\{y_{\ell, i}\}_{i=1}^{n / 2}, \tau, \varepsilon / s)$.
			\ENDFOR
			\STATE \texttt{\# coefficient estimation} 
			\FOR{$\beta_i \in \mathcal{B}_2$}
			\STATE $\ell = i \text{ mod } s$.
			\STATE $y_{\ell,i} = (U \beta_i)^{\ell}$.
			\ENDFOR
			\FOR{$j$ in $1, \cdots, s$}
			\STATE $z^j = s \cdot \texttt{Mean}(\{y_{\ell,i} \text{ such that } \ell = j\}, R_{j}, \varepsilon)$.
			\ENDFOR
			\STATE {\bfseries Output: }{$U^{-1}z$.}
		\end{algorithmic}
	\end{algorithm} 
	
	The following lemma is a modified version of Theorem 2 of \citet{girgis2022distributed} under pure differential privacy. 
	
	\begin{lemma}\label{lem:themodifiedmeanconcentration}
		Let $\widehat{\beta}^*$ be the true underlying coefficient, and $\widehat{\beta}_i$s be the coefficients estimated by each user. 
		Suppose $\mathbb{E}\left[\widehat{\beta}_i\right] = \beta^*$ and $\|\widehat{\beta}_i - \beta^*\|_2 \leq \tau$ with probability $1 - 1 / n^2$ for all $i$. 
		Then with probability $1 - 1 / n^2$,  we have 
		\begin{align*}
			\left\| \frac{2}{n}\sum_{i=n/2+1}^n{\widehat{\beta}}_i - \mathtt{ULDPMean}(\{\widehat{\beta}_i\}_{i=1}^{n/2}, \{\widehat{\beta}_i\}_{i=n/2+1}^{n},  \tau, \varepsilon)\right\|^2_2 \lesssim \frac{s \tau^2 \log^2 n}{n\varepsilon^2} 
		\end{align*}
	\end{lemma}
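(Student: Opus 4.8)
The plan is to follow the two–phase analysis of \citet[Theorem 2]{girgis2022distributed}, replacing its Gaussian perturbations by Laplace noise (to obtain pure $\varepsilon$-LDP) and instantiating it with the concentration of the local estimators $\widehat{\beta}_i$ assumed in the statement. Write $\mu:=\widehat{\beta}^*$ and $\bar{\beta}:=\frac{2}{n}\sum_{i=n/2+1}^{n}\widehat{\beta}_i$ for the empirical mean the algorithm targets. The first observation is that $U=H_sD/\sqrt{s}$ with $D=\mathrm{Diag}(w)$, $w$ uniform on $\{-1,1\}^s$, and $H_sH_s^\top=sI$, so $U$ is orthogonal; hence $\|U^{-1}z-\bar{\beta}\|_2=\|z-U\bar{\beta}\|_2$, and it suffices to control, for each of the $s$ coordinates, the error of $z^\ell$ relative to $(U\bar{\beta})^\ell$ in the rotated frame and then sum the squares. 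I would split the argument into the histogram-selection phase (\texttt{Range}) and the estimation phase (\texttt{Mean}).

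\textbf{Rotation spreads the mass.} On the event $\{\|\widehat{\beta}_i-\mu\|_2\le\tau\ \forall i\}$, which holds with probability $\ge 1-1/n^2$ by hypothesis, each $(U(\widehat{\beta}_i-\mu))^\ell=s^{-1/2}\sum_k (H_s)_{\ell k}w_k(\widehat{\beta}_i-\mu)^k$ is a Rademacher sum of sub-Gaussian parameter at most $\tau/\sqrt{s}$. A Hoeffding bound plus a union bound over $i\le n$ and $\ell\le s$ then gives, with probability $\ge 1-1/n^2$, $\max_{i,\ell}\bigl|(U\widehat{\beta}_i)^\ell-(U\mu)^\ell\bigr|\lesssim \tau':=\tau\sqrt{\log n/s}$ (absorbing $\log s\lesssim\log n$). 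This is precisely the $\sqrt{s}$ gain of the random rotation: in the rotated frame every centered coordinate lies in an interval of length $O(\tau')$ about $(U\mu)^\ell$.

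\textbf{Range isolates the correct bin; Mean averages with reduced sensitivity.} Conditioned on the previous step, the values $\{(U\widehat{\beta}_i)^\ell\}_{i\in\mathcal{B}_1}$ feeding \texttt{Range} occupy at most two adjacent histogram bins, so the most frequent bin holds a constant fraction of the users; the Hadamard-response frequency estimator on $n/2$ users with budget $\varepsilon/s$ has $\ell_\infty$ count error $O(\tfrac{s}{\varepsilon}\sqrt{n\log k})$, $k=1/\tau$ (standard LDP heavy-hitter analysis, \citep{bassily2020practical,warner1965randomized}), which is $o(n)$ once $n\varepsilon^2\gtrsim s^2\log(1/\tau)$ — implied by the standing assumptions up to log factors — so $\arg\max_j\bar{z}^j$ returns a bin whose midpoint is within $O(\tau')$ of $(U\mu)^\ell$. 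Hence the reported interval $R_\ell$ (of width scaling with $\tau'$, up to logs) contains $(U\mu)^\ell$ and, since $\mathcal{B}_2$ satisfies the same concentration, also every $(U\widehat{\beta}_i)^\ell$ for $i\in\mathcal{B}_2$ with probability $\ge 1-1/n^2$, so the projection $\Pi_{R_\ell}$ acts as the identity on those values. Then $z^\ell$ equals the empirical mean over the $\Theta(n/s)$ users of $\mathcal{B}_2$ assigned to coordinate $\ell$ of the (unprojected) values, plus the average of $\Theta(n/s)$ i.i.d.\ $\mathrm{Lap}(O(\tau'/\varepsilon))$ variables. The discrepancy between this sub-batch mean and $(U\bar{\beta})^\ell$ is a subsampling/Hoeffding term of order $\tau'\sqrt{s/n}$, squared $O(\tau^2\log n/n)$; the averaged-Laplace term, by a Bernstein-type tail bound and a union bound over the $s$ coordinates, has squared magnitude $\lesssim \tau'^2 s\log n/(n\varepsilon^2)=\tau^2\log^2 n/(n\varepsilon^2)$ per coordinate. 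Summing over the $s$ coordinates and using orthogonality of $U$ yields $\bigl\|\bar{\beta}-\texttt{ULDPMean}(\cdots)\bigr\|_2^2\lesssim s\tau^2\log^2 n/(n\varepsilon^2)+s\tau^2\log n/n\lesssim s\tau^2\log^2 n/(n\varepsilon^2)$ since $\varepsilon\le 1$; all failure probabilities are $\mathrm{poly}(1/n)$ and combine to $1/n^2$ after adjusting constants.

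\textbf{Main obstacle.} The delicate step is the \texttt{Range} analysis: showing that the private histogram returns an interval whose width scales with the post-rotation radius $\tau'\asymp\tau/\sqrt{s}$ (not with $\tau$), which is what makes the Laplace sensitivity in \texttt{Mean} small enough to reach the target rate, and verifying that the LDP frequency estimate is accurate enough to pick a bin near $(U\mu)^\ell$ — this is where the regime condition $n\varepsilon^2\gtrsim s^2\log(1/\tau)$ is used. The remaining ingredients — orthogonality of $U$, the Rademacher concentration of rotated coordinates, and the concentration of averaged Laplace noise — are routine.
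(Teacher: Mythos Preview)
Your proposal is correct and follows essentially the same approach as the paper: both instantiate the two-phase analysis of \citet{girgis2022distributed} with Laplace noise, use the random rotation $U$ from \citet{levy2021learning} to reduce the coordinatewise radius from $\tau$ to $\tau'\asymp\tau\sqrt{\log n/s}$, invoke the range-localization guarantee so that the projection in \texttt{Mean} acts as the identity, and then control the averaged Laplace noise coordinatewise before summing and converting back via orthogonality of $U$. The paper is terser---it cites Lemma~1 of \citet{girgis2022distributed} directly for the \texttt{Range} success rather than re-deriving it, and it silently absorbs the sub-batch/full-batch discrepancy you isolate as a separate Hoeffding term---but the skeleton and the key quantitative step ($s\cdot(\tau')^2\cdot s\log n/(n\varepsilon^2)\asymp s\tau^2\log^2 n/(n\varepsilon^2)$) are identical.
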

	
	\begin{proof}[\textbf{Proof of Lemma \ref{lem:themodifiedmeanconcentration}}]
		We know the $\widehat{\beta}_i$s satisfy Definition 2 in \citet{girgis2022distributed} with parameter $(\tau, {1}/{n^2})$. 
		By \citet{levy2021learning}, we have 
		\begin{align*}
			\|U\widehat{\beta}_i - U \widehat{\beta}^*\|_{\infty} \lesssim \sqrt{\frac{\tau^2 \log sn^2 }{s}}. 
		\end{align*}
		If we choose $\tau' \asymp \sqrt{{\tau^2 \log sn^2 }/ {s}}\asymp \sqrt{\tau^2 {\log n}/{s}}$, then $y_i$ satisfy Definition 2 in \citet{girgis2022distributed} with parameter $(\tau', {1}/{n^2})$. 
		Then the Lemma 1 of \citet{girgis2022distributed} implies that $\Pi_{[a,b]} y_i = y_i$ with probability $1 - 1 / n^2$ in Algorithm \ref{alg:meanscalar}. 
		Then the least square error for \texttt{Mean} is 
		\begin{align*}
			\left|\frac{2s}{n}\sum_{i=1}^{n/2s}\tilde{y}_i - \frac{2s}{n}\sum_{i=1}^{n/2s}y_i \right|^2 = \left|\frac{|b - a|s}{n\varepsilon}\sum_{i=1}^{n/2s} \gamma_i\right| \leq \sqrt{\frac{288 s \tau^{\prime 2} \log n}{n\varepsilon^2}}
		\end{align*}
		where the inequality follows from (2.18) in \citet{wainwright2019high}. 
		Since $\|\cdot\|_2$ is upper bounded by $\sqrt{s}$ times infinity norm, there holds
		\begin{align*}
			& \left\| \frac{2}{n}\sum_{i=n/2+1}^n{\widehat{\beta}}_i - \mathtt{ULDPMean}(\{{\widehat{\beta}}_i\}_{i=1}^{n/2}, \{{\widehat{\beta}}_i\}_{i=n/2+1}^n,  \tau, \varepsilon)\right\|^2_2 \\ 
			=&  \left\|\frac{2}{n}U\sum_{i=n/2+1}^n{\widehat{\beta}}_i  - z\right\|_2^2
			\leq s \cdot \left\| \frac{2}{n}U\sum_{i=n/2+1}^n{\widehat{\beta}}_i  - z\right\|_{\infty}^2 \leq \frac{288 s^2 \tau^{\prime 2} \log n}{n\varepsilon^2} \lesssim \frac{s \tau^2 \log^2 n}{n\varepsilon^2}. 
		\end{align*}
	\end{proof}
	
	The following lemma is the key technical result to prove Theorem \ref{thm:preciseestimation}. 
	
	\begin{lemma}[\textbf{Privacy and utility of Algorithm \ref{alg:uldpmean}}]
		\label{lem:privacyandutilityofuldpmean}
		Let $\widehat{\beta}^*$ be the true underlying coefficient, and $\widehat{\beta}_i$s be the coefficients estimated by each user. 
		Then the algorithm \ref{alg:uldpmean} is $\varepsilon$-ULDP. 
		Moreover, there exists some $\tau \asymp  \sqrt{{\log^2 n}/{m}}$ such that, with probability $1 - 2 / n^2$,  we have 
		\begin{align*}
			\left\| \widehat{\beta}^* - \mathtt{ULDPMean}(\{\widehat{\beta}_i\}_{i=1}^{n/2}, \{\widehat{\beta}_i\}_{i=n/2+1}^{n},  \tau, \varepsilon)\right\|^2_2 \lesssim \frac{s^2 \log^3 n  }{nm\varepsilon^2} + \frac{s \log n  }{n m} 
		\end{align*}
	\end{lemma}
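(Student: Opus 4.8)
The plan is to prove the two halves of the statement separately: the $\varepsilon$-ULDP guarantee by a composition-plus-post-processing argument on the three ingredients of Algorithm~\ref{alg:uldpmean} (the public random rotation, the $s$ calls to \texttt{Range}, and the coordinatewise \texttt{Mean} step), and the utility bound by a triangle inequality that separates the purely statistical averaging error of the $\widehat{\beta}_i$ from the error injected for privacy, the latter being exactly what Lemma~\ref{lem:themodifiedmeanconcentration} controls.

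For privacy, note first that the rotation $U=H_sD/\sqrt{s}$ is generated from public coins, so $\widehat{\beta}_i\mapsto y_{\ell,i}=(U\widehat{\beta}_i)^{\ell}$ and $z\mapsto U^{-1}z$ are data-independent post-processing and do not consume budget. The users in $\mathcal{B}_1$ are touched only by the $s$ calls $\texttt{Range}(\{y_{\ell,i}\},\tau,\varepsilon/s)$, $\ell\in[s]$; in each call every user releases a single bit via a randomized-response rule whose reporting probabilities collapse to $e^{\varepsilon/s}/(e^{\varepsilon/s}+1)$ and $1/(e^{\varepsilon/s}+1)$ (because $\sqrt{k}\,p\in\{-1,+1\}$), hence each call is $(\varepsilon/s)$-ULDP, and by sequential composition over the same $\mathcal{B}_1$ users the total budget of any $\mathcal{B}_1$ user is $s\cdot(\varepsilon/s)=\varepsilon$. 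Each user in $\mathcal{B}_2$ is assigned exactly one coordinate $\ell=i\bmod s$ and enters only the matching $\texttt{Mean}(\cdot,R_\ell,\varepsilon)$ step, which adds $\mathrm{Lap}(0,|b-a|/\varepsilon)$ to $\Pi_{[a,b]}y_{\ell,i}$; since $\Pi_{[a,b]}y_{\ell,i}\in[a,b]$ for every possible user input, the global sensitivity is $|b-a|$ and the step is $\varepsilon$-ULDP conditionally on $R_\ell$, which from that user's viewpoint is part of the previously-released transcript that Definition~\ref{def:uldp} permits conditioning on. As $\mathcal{B}_1$ and $\mathcal{B}_2$ are disjoint, no user spends more than $\varepsilon$, so Algorithm~\ref{alg:uldpmean} is $\varepsilon$-ULDP.

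For utility, take $\tau$ on the order of the $\ell_2$-concentration radius provided by Proposition~\ref{prop:distributionofbeta}, so that $\tau^2\asymp s\log n/m$ (or $\tau^2\asymp s^*\log n/m$ under the conditions of Proposition~\ref{prop:existenceofgoodselectors}). By that proposition, $\mathbb{E}[\widehat{\beta}_i]=\widehat{\beta}^*$, and on an event of probability $1-1/n^2$ every $\widehat{\beta}_i$ used satisfies $\|\widehat{\beta}_i-\widehat{\beta}^*\|_2\le\tau$; this is precisely the hypothesis of Lemma~\ref{lem:themodifiedmeanconcentration}. Then
\begin{align*}
\|\widehat{\beta}^* - \mathtt{ULDPMean}(\ldots)\|_2 \le \Big\|\widehat{\beta}^* - \tfrac{2}{n}\!\!\sum_{i=n/2+1}^{n}\!\!\widehat{\beta}_i\Big\|_2 + \Big\|\tfrac{2}{n}\!\!\sum_{i=n/2+1}^{n}\!\!\widehat{\beta}_i - \mathtt{ULDPMean}(\ldots)\Big\|_2 .
\end{align*}
The second term is $\lesssim\sqrt{s\tau^2\log^2 n/(n\varepsilon^2)}$ with probability $1-1/n^2$ by Lemma~\ref{lem:themodifiedmeanconcentration}, whose square is $\lesssim s^2\log^3 n/(nm\varepsilon^2)$. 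The first term is the deviation of an average of independent, mean-zero, sub-Gaussian vectors: writing $\widehat{\beta}_i-\widehat{\beta}^*=(\widehat{X}_i^{\top}\widehat{X}_i)^{-1}\widehat{X}_i^{\top}\sigma_i$ as in the proof of Proposition~\ref{prop:distributionofbeta}, the per-user covariance has trace of order $\mathrm{tr}[\widehat{\Sigma}^{-1}]/m\asymp s/m$ (covariance-matrix concentration, $\citet{wainwright2019high}$), so applying the Hsu--Kakade--Zhang tail bound $\citet{hsu2012tail}$ conditionally on the designs gives $\|\tfrac{2}{n}\sum_{i}(\widehat{\beta}_i-\widehat{\beta}^*)\|_2^2\lesssim s\log n/(nm)$ with probability $1-1/n^2$. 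Squaring the triangle inequality with $(a+b)^2\le 2a^2+2b^2$ and taking a union bound over the (at most three) failure events of total probability $\le 2/n^2$ yields
\begin{align*}
\|\widehat{\beta}^* - \mathtt{ULDPMean}(\ldots)\|_2^2 \lesssim \frac{s^2\log^3 n}{nm\varepsilon^2} + \frac{s\log n}{nm},
\end{align*}
which is the claim.

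I expect the privacy bookkeeping to be the main point of care: one must be sure that the $s$ invocations of \texttt{Range} act sequentially on the \emph{same} $\mathcal{B}_1$ users, so that ordinary composition applies and the per-call budget $\varepsilon/s$ is genuinely needed; that every $\mathcal{B}_2$ user enters exactly one \texttt{Mean} call, which is the reason the coordinates are dealt out round-robin by $i\bmod s$; and that the data-dependent bin $R_\ell$ is admissible to condition on because it belongs to the transcript released by earlier users. On the utility side the only subtlety is the consistency of the constant $\tau$: it must simultaneously dominate the $\ell_2$-deviation of every $\widehat{\beta}_i$ (so Lemma~\ref{lem:themodifiedmeanconcentration} is applicable) and be small enough that $s\tau^2\log^2 n/(n\varepsilon^2)$ collapses to the stated privacy term, which is exactly what the radius of Proposition~\ref{prop:distributionofbeta} delivers.
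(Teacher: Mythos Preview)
Your proposal is correct and follows essentially the same route as the paper: privacy by splitting into the disjoint $\mathcal{B}_1/\mathcal{B}_2$ groups and invoking the randomized-response and Laplace guarantees respectively, and utility by the same triangle-inequality decomposition into the private error (handled by Lemma~\ref{lem:themodifiedmeanconcentration} with the concentration radius from Proposition~\ref{prop:distributionofbeta}) and the non-private averaging error (sub-Gaussian concentration of the OLS residuals). Your privacy bookkeeping is in fact more explicit than the paper's, which simply cites \citet{girgis2022distributed} for \texttt{Range} and the Laplace mechanism for \texttt{Mean}; the only cosmetic mismatch is that the $\tau$ in the lemma \emph{statement} is the per-coordinate bin width $\asymp\sqrt{\log^2 n/m}$ (i.e.\ the $\ell_2$ radius $\sqrt{s\log n/m}$ divided by the $\sqrt{s/\log n}$ rotation factor), whereas you identify $\tau$ with the $\ell_2$ radius itself---this is harmless since Lemma~\ref{lem:themodifiedmeanconcentration} is stated in terms of the $\ell_2$ radius and the final bound is unchanged.
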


	\begin{proof}[\textbf{Proof of Lemma \ref{lem:privacyandutilityofuldpmean}}]
		We first show the privacy property of Algorithm \ref{alg:uldpmean}. 
		Since the users of \texttt{Range} and \texttt{Mean} do not across, it suffices to show that both of the algorithms are $\varepsilon$-ULDP. 
		The privacy of \texttt{Range} follows from Lemma 1 of \citet{girgis2022distributed}. 
		The privacy of \texttt{Mean} is straightforward by property of Laplace mechanism, given that the sensitivity of $\Pi_{[a,b]}y$ is $|b - a|$. 
		Now we prove the accuracy part. 
		The squared error can be decomposed into two parts associating to private estimation error and non-private estimation error, respectively.
		\begin{align*}
			&\left\| \widehat{\beta}^* - \mathtt{ULDPMean}(\{\widehat{\beta}_i\}_{i=1}^{n/2}, \{\widehat{\beta}_i\}_{i=n/2+1}^{n},  \tau, \varepsilon)\right\|^2_2\\ \leq &  2 \cdot \left(\left\| \frac{2}{n}\sum_{i=n/2+1}^n{\widehat{\beta}}_i - \mathtt{ULDPMean}(\{{\widehat{\beta}}_i\}_{i=1}^{n/2}, \{{\widehat{\beta}}_i\}_{i=n/2+1}^n,  \tau, \varepsilon)\right\|^2_2 + \left\| \frac{2}{n}\sum_{i=n/2+1}^n{\widehat{\beta}}_i - \widehat{\beta}^* \right\|_2^2\right).
		\end{align*}
		We deal with private estimation error part first.
		From Proposition \ref{prop:distributionofbeta}, we know the $\widehat{\beta}_i$s satisfy Lemma \ref{lem:themodifiedmeanconcentration} with $\tau = \sqrt{s \log n / m}$. 
		Then we have
		\begin{align}
			\left\| \frac{2}{n}\sum_{i=n/2+1}^n{\widehat{\beta}}_i - \mathtt{ULDPMean}(\{{\widehat{\beta}}_i\}_{i=1}^{n/2}, \{{\widehat{\beta}}_i\}_{i=n/2+1}^n,  \tau, \varepsilon)\right\|^2_2 \lesssim \frac{s^2 \log^2 n \log n}{nm\varepsilon^2}. \label{equ:meanestimationestimation}
		\end{align}
		If either conditions in Proposition \ref{prop:existenceofgoodselectors} holds, the parameter becomes $\tau = \sqrt{s^* \log n / m}$ by Proposition \ref{prop:distributionofbeta}, and the same analysis goes with $s^*$ instead of $s$. 
		\begin{align}
			\left\| \frac{2}{n}\sum_{i=n/2+1}^n{\widehat{\beta}}_i - \mathtt{ULDPMean}(\{{\widehat{\beta}}_i\}_{i=1}^{n/2}, \{{\widehat{\beta}}_i\}_{i=n/2+1}^n,  \tau, \varepsilon)\right\|^2_2 \lesssim \frac{s s^* \log^2 n \log n}{nm\varepsilon^2}. \label{equ:meanestimationestimationsstar}
		\end{align}
		Next, we bound the non-private estimation error. 
		When $\widehat{\beta}_i$ is the OLS estimator, by its sub-Gaussianality, we have
		\begin{align}\label{equ:meanestimationapprox}
			\left\| \frac{2}{n}\sum_{i=n/2+1}^n{\widehat{\beta}}_i - \widehat{\beta}^* \right\|_2^2 \lesssim \frac{s \log n}{n m}. 
		\end{align}
		If either conditions in Proposition \ref{prop:existenceofgoodselectors} holds, this becomes
		\begin{align}\label{equ:meanestimationapproxsstar}
			\left\| \frac{2}{n}\sum_{i=n/2+1}^n{\widehat{\beta}}_i - \widehat{\beta}^* \right\|_2^2\lesssim \frac{s^* \log n}{n m}. 
		\end{align}
		Together, \eqref{equ:meanestimationestimation} and \eqref{equ:meanestimationapprox} lead to 
		\begin{align*}
			\left\| \widehat{\beta}^* - \mathtt{ULDPMean}(\{\widehat{\beta}_i\}_{i=1}^{n/2}, \{\widehat{\beta}_i\}_{i=n/2+1}^{n},  \tau, \varepsilon)\right\|^2_2 \lesssim \frac{s^2 \log^3 n}{nm\varepsilon^2} + \frac{s \log n}{n m}.
		\end{align*}
		The overall failure probability is at least $2 /n^2$ since we utilized two high probability arguments. 
		Similarly,  \eqref{equ:meanestimationestimationsstar} and \eqref{equ:meanestimationapproxsstar} lead to 
		\begin{align*}
			\left\| \widehat{\beta}^* - \mathtt{ULDPMean}(\{\widehat{\beta}_i\}_{i=1}^{n/2}, \{\widehat{\beta}_i\}_{i=n/2+1}^{n},  \tau, \varepsilon)\right\|^2_2 \lesssim \frac{ss^* \log^3 n}{nm\varepsilon^2} + \frac{s^* \log n}{n m}.
		\end{align*}
	\end{proof}

	\subsubsection{Proof of Theorem \ref{thm:preciseestimation}}

	\begin{proof}[\textbf{Proof of Theorem \ref{thm:preciseestimation}}]
		By Lemma \ref{lem:bassilyheavyhitter} and \ref{lem:privacyandutilityofuldpmean}, both \texttt{HeavyHitter} and \texttt{ULDPMean} are $\varepsilon$-ULDP. 
		Since their associated users do not cross, we have Algorithm \ref{alg:uldpsparselinearregression} is also $\varepsilon$-ULDP.
		As for (\textit{ii}), by Proposition \ref{prop:heavyhitterselection}, we know that all the non-zero variables of $\beta^*$ is included in $\{\widehat{v}_1,\cdots, \widehat{v}_s\}$ with probability $1 - 1 / n^2$.
		Thus, we have 
		\begin{align*}
			\left\|\beta^*-\beta\right\|_2^2 = \left\|\widehat{\beta}^*-\widehat{\beta}\right\|_2^2.
		\end{align*}
		Applying Lemma \ref{lem:privacyandutilityofuldpmean}, this leads to 
		\begin{align*}
			\left\|\beta^*-\beta\right\|_2^2 \lesssim \frac{s^2 \log^3 n}{nm\varepsilon^2} + \frac{s \log n}{n m} \lesssim \frac{s^{*2} \log^3 n}{nm\varepsilon^2 \alpha^2} + \frac{s^* \log n}{n m \alpha } ,
		\end{align*}
		where in the last step we used Proposition \ref{prop:heavyhitterselection}. 
		In the last, the overall failure probability of Proposition \ref{prop:heavyhitterselection}, Lemma \ref{lem:bassilyheavyhitter}, and Lemma \ref{lem:privacyandutilityofuldpmean} is at most $ 4 / n^2$. 
	\end{proof}

	\section{Extension to Sparse Estimation}\label{app:extensiontosparseestimation}

	The full statement of Theorem \ref{thm:preciseestimationgeneral} is as follows.
	We utilize Algorithm \ref{alg:uldpsparselinearregression} while modifying the estimators $\widehat{\beta}_i$ and selectors $\mathcal{S}_i$ to accommodate the general problem.

	\begin{theorem}[\textbf{Formal version of Theorem \ref{thm:preciseestimationgeneral}}]\label{thm:preciseestimationgeneralfull}
		Let data $\{X_i\}_{i=1}^n$ be generated by $\mathrm{P}_{\beta^*}$ for $\beta^* \in \Omega_{s,a}^d$.
		Suppose we have non-private estimators:
		(\textit{i})
		estimator $\tilde{\beta}_i$ with $\|\tilde{\beta}_i - {\beta}^*\|_2 \leq \nu_1$ for all $ 1\leq i \leq n / 2$ and 
		(\textit{ii}) 
		estimator $\widehat{\beta}_i$ on selected variables with $\mathbb{E}\left[\widehat{\beta}_i\right] = \widehat{\beta}^*$ and $\|\widehat{\beta}_i - \widehat{\beta}^*\|_2 \leq \nu_2$ for all $n /2 + 1\leq i \leq n$. 
		Then there exist $\alpha$-good selectors $\{\mathcal{S}_i\}_{i=1}^{n/2}$ with $\alpha\gtrsim s^* \sqrt{\log n \log d/ n\varepsilon^2} $, that is $\mathrm{Pr}\left(v = \mathcal{S}_i(X_i)\right) \geq \alpha / s^*$ for $1\leq v\leq s^*$ and $1 \leq i \leq n/2$. 
		Suppose we let $\alpha / 8s^* \leq \rho \leq \alpha / 4 s^*$, $\tau \asymp \sqrt{\nu^2 \alpha \log n / s^*}$.
		Then, for any $a \gtrsim\nu_1$,
		Algorithm \ref{alg:uldpsparselinearregression} is $\varepsilon$-ULDP and has an output $\beta$ with
		\begin{align}\label{equ:generalboundappendix1}
			\left\|\beta^* - \beta\right\|_2^2 \lesssim   \frac{\nu_2^2}{n}+ \frac{\nu_2^2 s^{*} \log^2 n }{n \varepsilon^2\alpha }
		\end{align}
		with probability at least $1 - 3 / n^2$. 
		Moreover, for $\ell_1$ norm, there holds
		\begin{align}\label{equ:generalboundappendix2}
			\left\|\beta^* - \beta\right\|_1 \lesssim   \sqrt{\frac{\nu_2^2 s^*}{n \alpha}}+ \sqrt{\frac{\nu_2^2 s^{*2 } \log^2 n }{n \varepsilon^2\alpha^2 }}
		\end{align}
		with probability at least $1 - 3 / n^2$. 
		
	\end{theorem}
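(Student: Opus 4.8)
The plan is to instantiate Algorithm \ref{alg:uldpsparselinearregression} with problem-specific ingredients built from the two given non-private oracles, and then recycle the analysis already developed for linear regression. For the selection phase, for each $i\le n/2$ I would let $\mathcal{S}_i$ threshold $\tilde\beta_i$ at a level $t$ with $\nu_1<t<a-\nu_1$ (possible since $a\gtrsim\nu_1$) and return a coordinate drawn uniformly from $\widehat{T}_i=\{j:|\tilde\beta_i^j|>t\}$. On the event $\{\|\tilde\beta_i-\beta^*\|_2\le\nu_1\}$, which holds with at least constant probability (and with high probability when the oracle is of OLS type), every zero coordinate has $|\tilde\beta_i^j|\le\nu_1<t$ and is discarded, while every nonzero coordinate carrying signal $\ge a$ has $|\tilde\beta_i^j|\ge a-\nu_1>t$ and is kept; hence $\widehat{T}_i$ sits inside $\mathrm{supp}(\beta^*)$ and contains the true variables, so $\Pr(\mathcal{S}_i(X_i)=v)\ge\alpha/s^*$ for all $v\le s^*$ with $\alpha$ a constant. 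Keeping $\alpha$ as a free parameter accommodates degenerate regimes (e.g.\ small $d$, where the selector is essentially uninformative and $\alpha$ must be allowed to shrink, as noted after the theorem). Provided $\alpha\gtrsim s^*\sqrt{\log n\log d/n\varepsilon^2}$, Proposition \ref{prop:heavyhitterselection} applies with $\alpha/8s^*\le\rho\le\alpha/4s^*$: with probability $1-1/n^2$ the selected set $\{\widehat v_1,\dots,\widehat v_s\}$ contains $\{1,\dots,s^*\}$ and has cardinality $s\le 32s^*/\alpha$, and by Lemma \ref{lem:bassilyheavyhitter} this phase is $\varepsilon$-ULDP.

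For the estimation phase, after relabeling the selected coordinates as $(1,\dots,s)$ I would feed the given $\widehat\beta_i$, $n/2<i\le n$, into \texttt{ULDPMean} with concentration radius $\tau\asymp\sqrt{\nu_2^2\alpha\log n/s^*}\asymp\sqrt{\nu_2^2\log n/s}$. The only facts about $\widehat\beta_i$ used by the analysis behind Lemma \ref{lem:themodifiedmeanconcentration} and Lemma \ref{lem:privacyandutilityofuldpmean} are $\mathbb{E}[\widehat\beta_i]=\widehat\beta^*$ and $\|\widehat\beta_i-\widehat\beta^*\|_2\le\nu_2$ (w.h.p.), which are exactly hypothesis (\textit{ii}); after the random rotation the per-coordinate fluctuation is $\mathcal{O}(\sqrt{\nu_2^2\log n/s})\asymp\tau$, so the same computation yields, with probability $1-2/n^2$,
\begin{align*}
\| \widehat\beta^* - \texttt{ULDPMean}(\cdot) \|_2^2 \ \lesssim\ \frac{s\,\tau^2\log^2 n}{n\varepsilon^2}+\frac{\nu_2^2}{n} \ \asymp\ \frac{\nu_2^2 s^*\log^2 n}{n\varepsilon^2\alpha}+\frac{\nu_2^2}{n},
\end{align*}
where the second term bounds $\|\tfrac2n\sum_i\widehat\beta_i-\widehat\beta^*\|_2^2$ by $\mathcal{O}(\nu_2^2/n)$ via vector concentration of the $\Theta(n)$ bounded, mean-zero summands. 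On the perfect-selection event we have $\|\beta^*-\beta\|_2^2=\|\widehat\beta^*-\widehat\beta\|_2^2$, which gives \eqref{equ:generalboundappendix1}. For the $\ell_1$ bound, $\beta-\beta^*$ is supported on the $s\le 32s^*/\alpha$ selected coordinates, so $\|\beta-\beta^*\|_1\le\sqrt{s}\,\|\beta-\beta^*\|_2\lesssim\sqrt{s^*/\alpha}\,\|\beta-\beta^*\|_2$; squaring, substituting \eqref{equ:generalboundappendix1}, and taking square roots gives \eqref{equ:generalboundappendix2}. Privacy of the whole procedure follows because \texttt{HeavyHitter} and \texttt{ULDPMean} act on disjoint groups of users and each is $\varepsilon$-ULDP; a union bound over the failure events of Proposition \ref{prop:heavyhitterselection} and the mean-estimation step gives the stated $1-3/n^2$.

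I expect the main obstacle to be the first step: extracting a provably $\alpha$-good selector from nothing more than an $\ell_2$-accurate point estimate $\tilde\beta_i$, and tracking how the beta-min level $a$, the radius $\nu_1$, and the failure probability of the oracle combine into $\alpha$ — in particular keeping the construction valid, and the requirement $\alpha\gtrsim s^*\sqrt{\log n\log d/n\varepsilon^2}$ of Proposition \ref{prop:heavyhitterselection} satisfiable, across both the regime where the selector is genuinely informative and the regime where it is not. Once perfect selection is in hand, the estimation phase is a black-box invocation of the ULDP mean-estimation analysis with the ambient dimension reduced from $d$ to $s\asymp s^*/\alpha$, and the remainder is bookkeeping of constants and failure probabilities.
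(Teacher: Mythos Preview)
Your proposal is correct and follows essentially the same route as the paper's proof: construct $\alpha$-good selectors by thresholding $\tilde\beta_i$ (exploiting $a\gtrsim\nu_1$), invoke Proposition~\ref{prop:heavyhitterselection} for perfect selection with $s\lesssim s^*/\alpha$, apply Lemma~\ref{lem:themodifiedmeanconcentration} with concentration radius $\nu_2$ to bound the privatization error by $s\nu_2^2\log^2 n/(n\varepsilon^2)$, bound the non-private averaging error by $\nu_2^2/n$, and convert to $\ell_1$ via the $\sqrt{s}$ factor. Your thresholding construction for the selector is in fact more explicit than the paper's, which simply asserts that $a\gtrsim\nu_1$ permits consistent selection and hence $\alpha\gtrsim 1$; one small notational slip is that in your displayed bound the symbol $\tau$ should denote the $\ell_2$ concentration radius $\nu_2$ (as in Lemma~\ref{lem:themodifiedmeanconcentration}), not the per-coordinate bin width $\sqrt{\nu_2^2\log n/s}$ you introduced just above, but the final expression is correct either way.
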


	\begin{proof}[\textbf{Proof of Theorem \ref{thm:preciseestimationgeneralfull}}]
		The privacy guarantee follows from Theorem \ref{thm:preciseestimation}. 
		Since $a \gtrsim \nu_1$, we can consistently select all true variables with proxy estimators. 
		This implies we can have $\alpha$-good selectors with $\alpha \gtrsim 1 \gtrsim s^* \sqrt{\log n \log d/ n\varepsilon^2} $. 
		By Proposition \ref{prop:heavyhitterselection}, we know that all the non-zero variables of $\beta^*$ is selected with probability $1 - 1 / n^2$.
		Thus, we have $\left\|\beta^*-\beta\right\|_2^2 = \left\|\widehat{\beta}^*-\widehat{\beta}\right\|_2^2$. 
		Applying Lemma \ref{lem:themodifiedmeanconcentration}, we have
		\begin{align*}
			\left\|\widehat{\beta}^*-\widehat{\beta}\right\|_2^2 \lesssim  	\left\|\widehat{\beta}^*- \frac{2}{n} \sum_{i=n / 2+1}^n \hat{\beta}_i\right\|_2^2 +  \frac{s\nu_2^2 \log^2 n}{n\varepsilon^2}. 
		\end{align*}
		Since $\widehat{\beta}_i$ are concentrated, it is sub-Gaussian.
		Thus, there holds
		\begin{align*}
			\left\|\widehat{\beta}^*-\widehat{\beta}\right\|_2^2 \lesssim  \frac{\nu_2^2}{n} +  \frac{s\nu_2^2 \log^2 n}{n\varepsilon^2} \lesssim  \frac{\nu_2^2}{n} +  \frac{s^*\nu_2^2 \log^2 n}{n\varepsilon^2 \alpha}
		\end{align*}
		where in the last step we used Proposition \ref{prop:heavyhitterselection}. 
		In the last, the overall failure probability of Proposition \ref{prop:heavyhitterselection}, Lemma \ref{lem:bassilyheavyhitter}, and Lemma \ref{lem:themodifiedmeanconcentration} is at most $ 3 / n^2$. 
		This yields \eqref{equ:generalboundappendix1}.
		For \eqref{equ:generalboundappendix2}, note that there is only $s \lesssim s^* / \alpha$ none zero elements.
		Using the difference between $\ell_1$ and $\ell_2$ norms, which is $\sqrt{s}$, yields \eqref{equ:generalboundappendix2}. 
	\end{proof}

	\section{Additional Experiment Results}\label{app:additionalexperiment}
	
	\subsection{Implementation Details}
	
	For each model, we report the best result over its parameter grids, with the best result determined based on the average result of at least 30 replications.
	We do not perform any parameter selection (e.g. cross validation or validation set) since they are prohibitive under locally private setting \citep{ma2024optimal, ma2024locally} or will cost too much privacy budget \citep{papernot2021hyperparameter}. 
	The parameter grids size are selected based on running time so that each method costs equal amount of computation. 
	Efficient methods receive a exhaustive parameter grid and can be properly tuned.
	Computation heavy methods receive a small grid with insensitive parameters set to default.

	\begin{itemize}
		\item For candidate variable selector of our methods, we adopt the Lasso estimator and identify its non-zero coefficients as the selected variables. 
		Moreover, we conduct a feature screening (see Appendix \ref{app:pluginselector} for detail) for acceleration. 
		The number of screened variables is set to 64.
		The number of selected variables $s$ is selected in $\{2, 4, 8, 16\}$.
		\begin{itemize}
			\item \textbf{2-SLR}: The two-round sparse linear regression protocol is implemented based on Algorithm \ref{alg:uldpsparselinearregression}. We select the range $[-B, B]$ in $B\in \{1,2,3\}$ and the concentration radius is decided by the number of bins, which is in $\{2,4,8,16,32\}$. 
			\item \textbf{M-SLR}: The multi-round sparse linear regression protocol is implemented based on Algorithm \ref{alg:uldpsparselinearregressionmultiround}. 
			We set $B =3$ and select the number of bins in $\{2,4,8,16,32\}$. 
			Moreover, we set the learning rate of the gradient to be $\eta_t = 0.1\cdot (\frac{1 + t}{2})^{0.2}$. 
		\end{itemize}
		\item \textbf{LDPPROX}: The non-interactive locally differentially private sparse linear regressor based on proxy estimator is implemented according to Algorithm 1 in \citet{zhu2023improved}. 
		Due to the heavy computation burden, we set $r = \sqrt{d \cdot \log n}$, $\tau_1 = 4$, $\tau_2 = 8$. 
		In simulation where we know $\min_{\beta^{*j} \neq 0} |\beta^{*j}| = 0.2$, we set $\lambda = 0.05$. 
		In real data, we set $\lambda$ to the 10-th lower quantile of the absolute fitted coefficients. 
		\item \textbf{LDPIHT}: The locally differentially private iterative hard thresholding is implemented according to Algorithm 2 in \citet{zhu2023improved}. 
		We select $T\in\{2, 5, 10, 20, 50\}$, $\eta\in\{0.01, 0.1, 1\}$, $\tau_1, \tau_2 \in  \{2, 4, 8\}$, $k' \in \{5, 10, 20, 50\}$. 
		\item \textbf{Lasso}: The conventional Lasso regressor is fitted using the \textit{LassoCV} class in scikit-learn package \citep{scikit-learn}.  We set $n\_alphas = 300$, $max\_iter = 3000$, and $tol = 10^{-4}$.
	\end{itemize}

	\subsection{Additional Simulation Results}
	
	We present the additional result of the correlated marginal distribution data experiment omitted in the main text due to page limitation.
	The correlation of the first 50 dimensions are set to be exponentially decaying, i.e. 
	\begin{align*}
		\mathrm{Cov}\left(X_{i,j}^k , X_{i,j}^{k'}\right) = 2^{- |k - k'|}.
	\end{align*}
	We draw each $\sigma_{i,j}$ correlatedly from a standard Gaussian distribution.
	For $\beta^*$, we randomly select $s^* = 8$ coordinates in the first 50 dimensions to be $0.2$ and let others be zero.
	We typically set $n = 400$, $m = 100$, $d = 256$, and $\varepsilon = 4$, while varying one of them to observe how the evaluated metric varies. 
	We use squared error as evaluation of the estimated coefficient and F1 score as evaluation of the selected variables. 
	
	\begin{figure*}[htbp]
		\centering
		\subfigure[$d$ - F1 score.]{
			\begin{minipage}{0.23\linewidth}
				\centering
				\includegraphics[width=\linewidth]{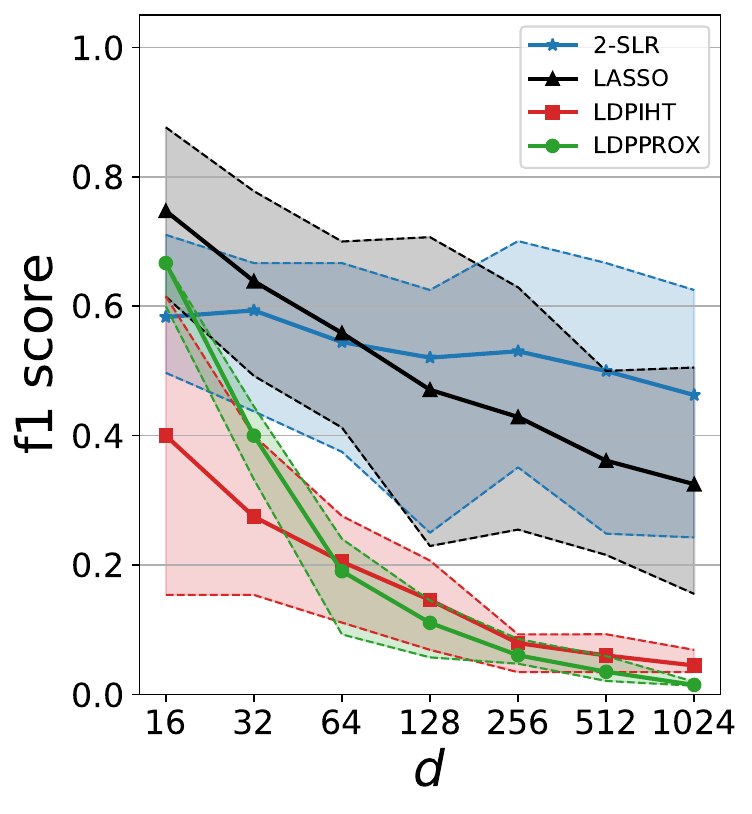}
			\end{minipage}
			\label{fig:df1cor}
		}
		\subfigure[ $d$ - $\ell_2$ error with $m = 100$.]{
			\begin{minipage}{0.23\linewidth}
				\centering
				\includegraphics[width=\linewidth]{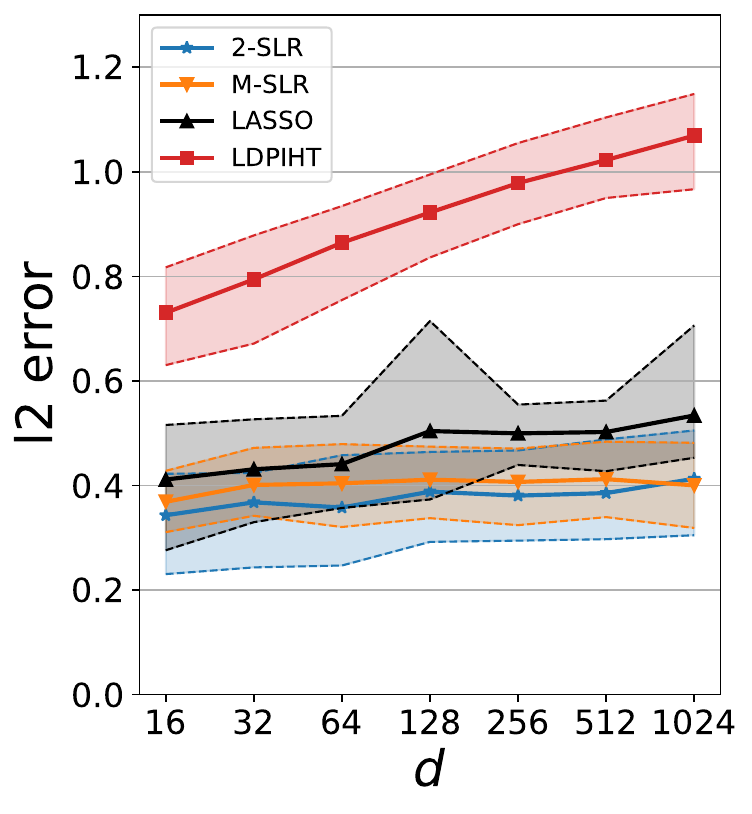}
			\end{minipage}
			\label{fig:dl2m100cor}
		}
		\subfigure[$d$ - $\ell_2$ error with $m = 200$.]{
			\begin{minipage}{0.23\linewidth}
				\centering
				\includegraphics[width=\linewidth]{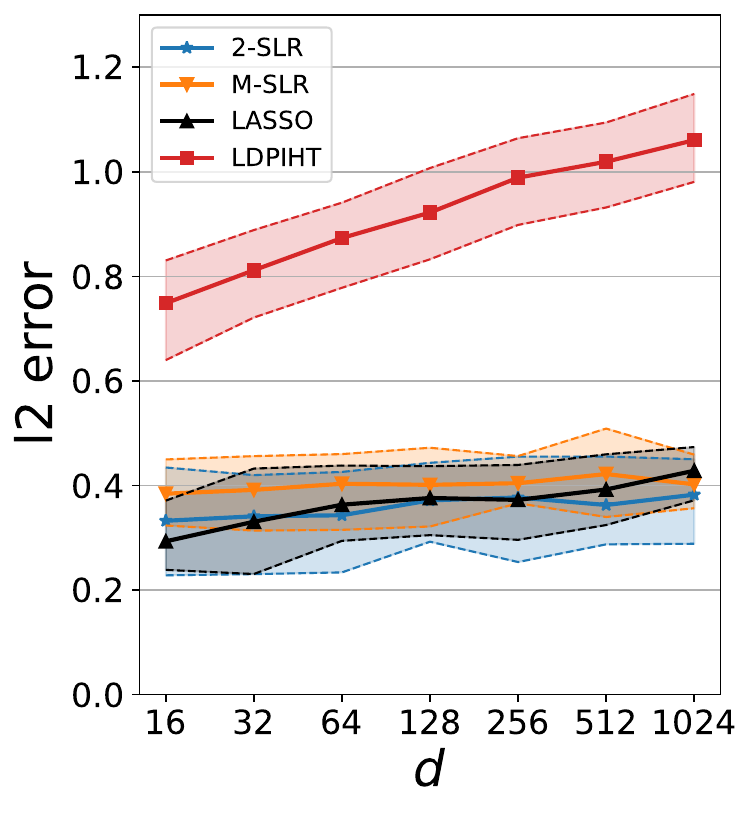}
			\end{minipage}
			\label{fig:dl2m200cor}
		}
		\subfigure[$\varepsilon$ - $\ell_2$ error.]{
			\begin{minipage}{0.23\linewidth}
				\centering
				\includegraphics[width=\linewidth]{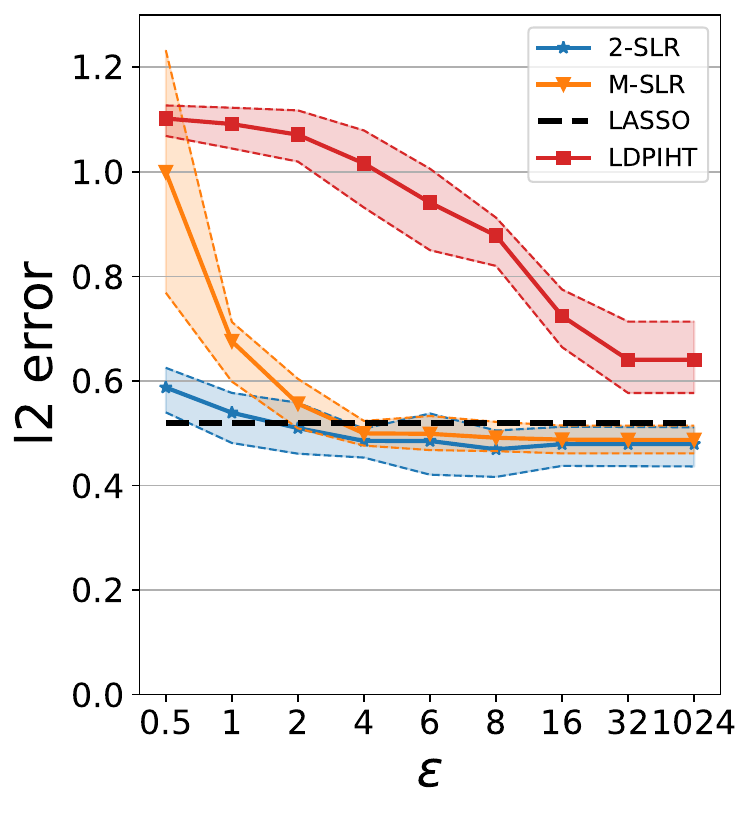}
			\end{minipage}
			\label{fig:epsilonl2cor}
		}
		\vskip -0.1in
		\caption{Experiments w.r.t. $d$ and $\varepsilon$ for correlated marginal. 
			We plot the quantiles over 30 repetitions with $95\%$ coverage. 
			We exclude LDPPROX in some figures since it is highly unstable and do not fit into our plot scale. 
		}
		\label{fig:epsilondcor}
	\end{figure*}
	
	We conduct experiments with respect to $d$. 
	We first analyze the variable selection performance.
	Due to the high sparsity, we use F1 score as the evaluation criterion.
	For $d\in\{16,32, \cdots, 1024\}$, we compute the averaged F1 scores of the proposed candidate variable selection (represented by 2-SLR) and other methods.
	As depicted in Fig. \ref{fig:df1cor}, the overall performance of all methods deteriorates compared to that under the independent setting, whereas 2-SLR remains stable and maintains its advantage. 
	When $d=16$, the selection performances of Lasso and LDPPROX are slightly superior than the variables induced by other methods. 
	However, as $d$ increases, the variable selection performance of Lasso, LDPIHT and LDPPROX decreases sharply, while the F1 scores of 2-SLR only fluctuate slightly and become higher than those of other competitors when $d \geq 64$.
	
	Then, we analyze the estimation performance.
	In \ref{fig:dl2m100cor} and \ref{fig:dl2m200cor}, we plot the curve of $\ell_2$ error with respect to $d$. 
	Whether $m=100$ or $m=200$, the proposed methods are less sensitive to $d$ compared to LDPIHT.
	The is also compatible with rate in \eqref{equ:tworoundbound2} which scales with $\log d$.
	Compared to the independent case, the correlated case requires more local samples to achieve a consistent selection.
	Thus, thus difference in results for large $m$ and small $m$ is less apparent.

	We examine the privacy-utility trade-offs by investigating performances under different $\varepsilon$s. 
	In \ref{fig:epsilonl2cor}, the error decreases as $\varepsilon$ increases for all private methods as expected.
	Moreover, the error of 2-SLR is comparable to Lasso, while error of M-SLR quickly drops below Lasso at medium privacy region $\varepsilon \geq 4$.
	This again ensures the superiority of our methods compared to fitting Lasso using only local information.

	\begin{figure}[htbp]
		\centering
		\subfigure[$n$ - $\ell_2$ error.]{
			\begin{minipage}{0.31\linewidth}
				\centering
				\includegraphics[width=\linewidth]{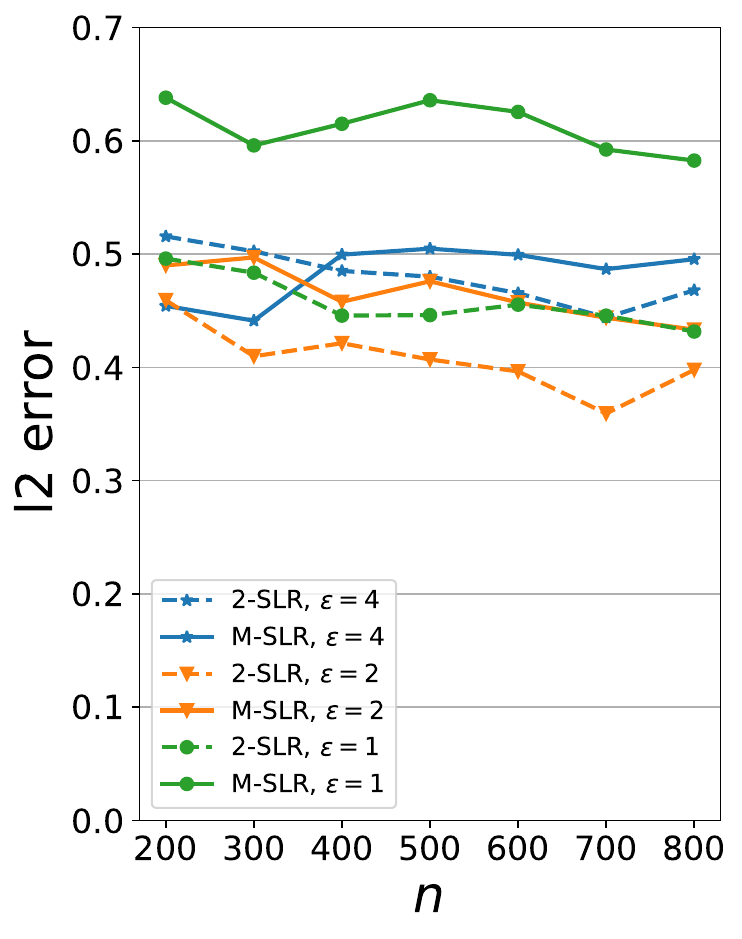}
			\end{minipage}
			\label{fig:nl2cor}
		}
		\hskip -0.1in
		\subfigure[$m$ - $\ell_2$ error.]{
			\begin{minipage}{0.31\linewidth}
				\centering
				\includegraphics[width=\linewidth]{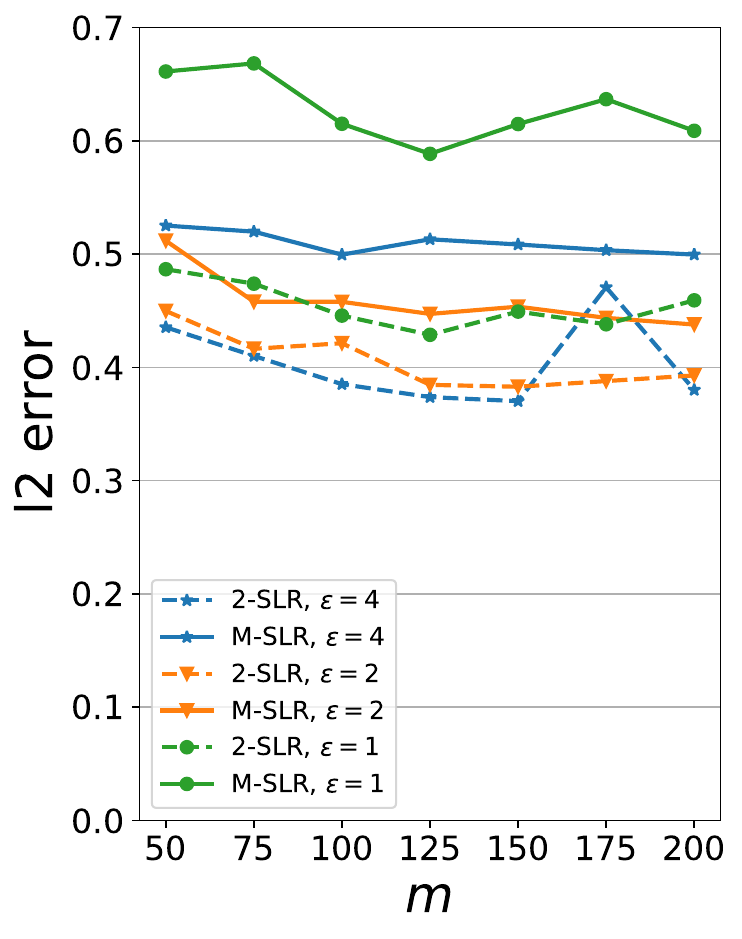}
			\end{minipage}
			\label{fig:ml2cor}
		}
		\hskip -0.1in
		\subfigure[$n / m$ - $\ell_2$ error.]{
			\begin{minipage}{0.31\linewidth}
				\centering
				\includegraphics[width=\linewidth]{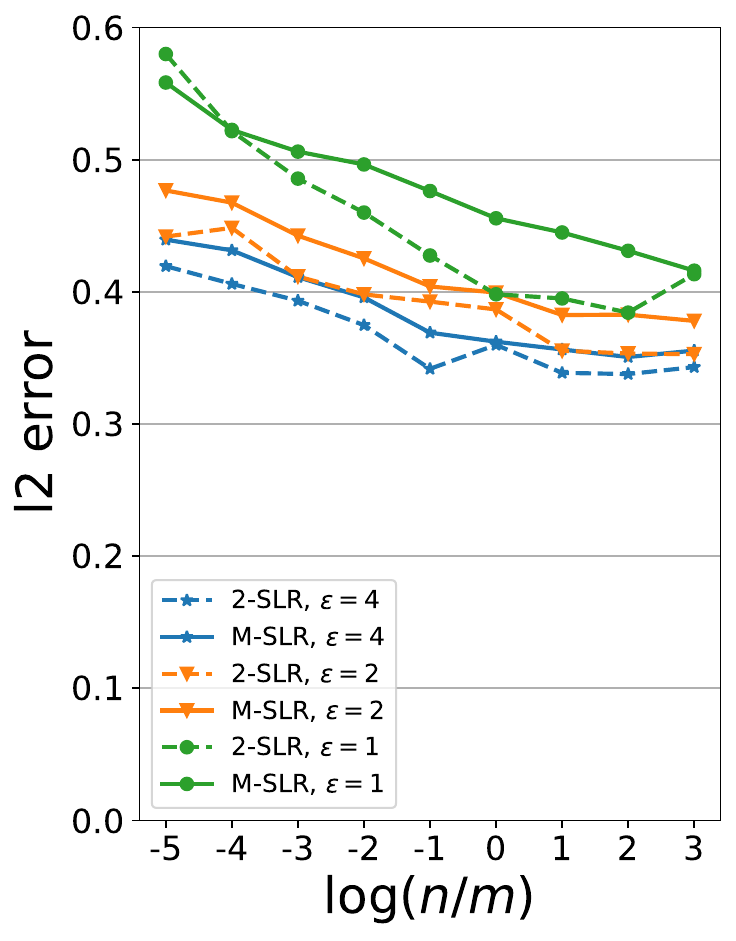}
			\end{minipage}
			\label{fig:nml2cor}
		}
		\vskip -0.15in
		\caption{Experiments w.r.t. sample sizes for correlated marginal.
		}
		\label{fig:nmcor}
	\end{figure}

	Finally, we analyze the impact of sample sizes. 
	In Figure \ref{fig:nl2cor} and \ref{fig:ml2cor}, the $\ell_2$ error decreases as both $n$ and $m$ increases for all $\varepsilon$, which confirms our theoretical claims. 
	The error is generally higher than that in the independent case.
	The overall $\ell_2$ curve is less sensitive to $n$ and $m$.
	Moreover, we let $nm = 400\times 100$ and vary the ratio $n / m$. 
	In \ref{fig:nml2}, we observe that, for each $\varepsilon$, the error of 2-SLR retains for $n/ m \approx 1$, while increase slightly when either $n$ or $m$ is too small, which is compatible with Theorem \ref{thm:preciseestimation}. 
	The performance of M-SLR is still sensitive to $n$ becoming small.

	\subsection{Real Dataset Description} \label{app:realdatasets}

	A summary of key information for these datasets after pre-processing can be found in Table \ref{tab:informationrealdatasets}. 
	For user-specific sample partitioning, certain datasets come with predefined partitions, while others undergo random partitioning. 
	Categorical features in the datasets are transformed into dummy variables, while each continuous feature is individually scaled to zero mean and unit variance. 
	We also present additional information of the data sets including the data source and the pre-processing details.

	\begin{table}[h]
		\centering
		\caption{Information of real datasets.}
		
		\label{tab:informationrealdatasets}
		\resizebox{0.6\linewidth}{!}{
			\renewcommand{\arraystretch}{1}
			\setlength{\tabcolsep}{5pt}
			\begin{tabular}{|l|r|r|r|r|r|}
				\toprule
				Dataset            & Sample Partition & \multicolumn{1}{c|}{d} & \multicolumn{1}{c|}{n} & \multicolumn{1}{c|}{m} & \multicolumn{1}{c|}{Area} \\ \midrule
				\texttt{Airline} & Predefined& 260   & 205 & 200-400     & Social    \\
				\texttt{Loan}   & Random & 735 & 500   & 100  & Business      \\
				\texttt{Mip}      & Predefined & 144   & 218 & 5     &  Computer Science   \\
				\texttt{Taxi}      & Predefined     & 213             & 1200       &  189-200   &  Social\\ 
				\texttt{Wine}    & Random    & 41 & 60 & 100     & Business     \\
				\texttt{Yolanda}    & Random & 100 & 800   & 200      & Social   \\
				\bottomrule
			\end{tabular}
		}
	\end{table}
	
	\texttt{Airline}: The Airlines-Departure-Delay dataset originally comes from United States Department of Transportation and currently available on OpenML \cite{AirlinesDepDelay}, consists of 1,048,575 observations, including one target variable and 9 attributes pertaining to flight information. We partition samples into users based on the "Destination" variable, selecting $205$ users with sample counts ranging from $200$ to $400$. Attributes such as "Origin" and "UniqueCarrier" are transformed into dummy variables, contributing to a total of $260$ features in the "Airlines" dataset. Overall, the Airlines dataset contains $75,600$ samples.
	
	\texttt{Loan}: The {Loan-Default-Prediction dataset is obtained from the training set of the Kaggle Loan Default Prediction challenge \citep{loan}, which aims to reduce the consumption of economic capital and optimize on the risk to the financial investor. The original dataset comprises 55319 instances of $735$ attributes
		We randomly select $50,000$ samples and partition the data into $500$ groups, with each group containing $100$ samples.
		
		\texttt{Mip}: The MIP-2016-regression dataset, available on OpenML, comprises $1,090$ instances featuring $144$ attributes and $1$ output attribute \citep{MIP}. Within this dataset, there are a total of 218 users, with each user possessing 5 samples.
		
		\texttt{Taxi}:
		The Taxi dataset is obtained from the Differential Privacy Temporal Map Challenge \citep{taxi}, which aims to develop algorithms that preserve data utility while guaranteeing individual privacy protection. The dataset contains quantitative and categorical information about taxi trips in Chicago, including time, distance, location, payment, and service provider. We partition the samples based on the unique identification number of taxis ($taxi_id$), resulting in 1200 taxis with sample counts ranging from $189$ to $200$. Other features include the time of each trip ($seconds$), the distance of each trip ($miles$), the time period during which each trip occurs($shift$), index of the zone where the trip starts ($pca$), index of the zone where the trip ends ($dca$), service provider ($company$), the method used to pay for the trip ($payment\_type$) and amount of tips ($tips$) and fares ($fare$). We use the other variable to predict the fares of the fares ($fare$) of the trips. Attributes such as $shift$, $pca$,$dca$, $company$ and $payment\_type$ are transformed into dummy variables, resulting in a total of $213$ features in the Taxi dataset.
		
		\texttt{Wine}: This dataset originates from the Wine Quality dataset \cite{cortez2009modeling} on UCI Machine Learning Repository, which combines data from both the "red wine" and "white wine" datasets. The original dataset comprises $11$ features associated with wine to predict the corresponding wine quality. In an effort to enhance dimensionality, Gaussian random noise in $30$ dimensions has been incorporated. $6000$ instances are collected in the dataset. The samples are randomly partitioned among $60$ users, with each user having $100$ samples.
		
		\texttt{Yolanda}: The Yolanda dataset \citep{guyon2019analysis} contains 400000 instances of $100$ attributes and $1$ output attribute. We randomly select $160,000$ samples and distribute them into $800$ groups, with each group containing $200$ samples.

		\subsection{Additional Real Datasets Results}

		\begin{table}[htbp]
			\caption{Running time(seconds) on real datasets.}
			\label{tab:runningtime}
			\centering
			\resizebox{0.5\linewidth}{!}{
				\renewcommand{\arraystretch}{1}
				\setlength{\tabcolsep}{5pt}
				\begin{tabular}{l|l|ll|ll}
					\toprule
					Datasets  & Lasso  & 2-SLR & M-SLR & LDPPROX & LDPIHT \\ \midrule
					\texttt{Airline}  & 0.7  & 15.8 & 15.3 & 766.5 &  6.1\\
					\texttt{Loan}  & 43.1  & 74.7 & 106.9& 4124.0 & 30.6
					\\
					\texttt{MIP}   & 0.1  & 0.1 & 2.3 & 5.5 & 4.3
					\\
					\texttt{Taxi} & 0.1  & 0.7 & 11.5 & 1569.1& 7.0
					\\
					\texttt{Wine}  & 0.2 & 0.7 & 2.9  & 3.5 & 2.6
					\\
					\texttt{Yolanda}&0.5 & 0.4 &12.5 &365.8 &4.4
					\\ 
					\bottomrule
				\end{tabular}
			}
		\end{table}

\end{document}